\numberwithin{equation}{section}
\definecolor{DarkGreen}{rgb}{0,0.40,0}
\definecolor{FireBrick}{rgb}{0.698,0.133,0.133}
\newenvironment{claim}{  \begin{mdframed}[linecolor=black!0,backgroundcolor=black!10]\noindent%
		\ignorespaces}{\end{mdframed}}
\def\half{{\frac{1}{2}}}
\newcommand{\bea}{\begin{eqnarray}}
\newcommand{\eea}{\end{eqnarray}}
\def\({\left(}
\def\){\right)}
\def\[{\left[}
\def\]{\right]}
\definecolor{lightyellow}{rgb}{1.0, 0.95, 0.7}
\definecolor{blue}{rgb}{0, 0.1, 0.6}
\definecolor{Blue}{rgb}{0,0,1}
\definecolor{darkgreen}{rgb}{0,0.40,0}
\definecolor{firebrick}{rgb}{0.698,0.133,0.133}
\definecolor{colorA}{rgb}{1,0,0}
\definecolor{colorB}{rgb}{0,0.3,1}
\definecolor{colorC}{rgb}{0.9,0.8,0.2}
\definecolor{colorD}{rgb}{0,0.65,0}
\definecolor{lesslightgray}{rgb}{0.5,0.5,0.5}
\definecolor{light-gray}{gray}{0.95}
\let\tilde\widetilde
\let\hat\widehat
\newcommand{\calH}{\mathcal{H}}
\newcommand{\calO}{\mathcal{O}}
\newcommand{\calP}{\mathcal{P}}
\newcommand{\calT}{\mathcal{T}}
\newcommand{\bE}{\mathbf{E}}
\newcommand{\bK}{\mathbf{K}}
\newcommand{\bQ}{\mathbf{Q}}
\newcommand{\bR}{\mathbf{R}}
\newcommand{\bS}{\mathbf{S}}
\newcommand{\bV}{\mathbf{V}}
\newcommand{\bW}{\mathbf{W}}
\newcommand{\bX}{\mathbf{X}}
\newcommand{\bY}{\mathbf{Y}}
\newcommand{\bZ}{\mathbf{Z}}
\newcommand{\ba}{\mathbf{a}}
\newcommand{\bb}{\mathbf{b}}
\newcommand{\bp}{\mathbf{p}}
\newcommand{\bs}{\mathbf{s}}
\newcommand{\bx}{\mathbf{x}}
\newcommand{\by}{\mathbf{y}}
\newcommand{\bz}{\mathbf{z}}
\newcommand{\bxi}{{\bm{\xi}}}
\newcommand{\Max}{\mathop{\rm Max}}
\newcommand{\Min}{\mathop{\rm Min}}
\newcommand{\argmin}{\mathop{\mathrm{ArgMin}}}  
\newcommand{\argmax}{\mathop{\mathrm{ArgMax}}}
\newcommand{\Softmax}{\mathop{\rm{Softmax}}}
\newcommand{\Sparsemax}{\mathop{\rm{Sparsemax}}}
\newcommand{\lse}{\mathop{\rm{lse}}}
\newcommand{\alphaentmax}{\mathop{\alpha\text{-}\mathrm{EntMax}}}
\newcommand{\sT}{ \mathsf{T} }
\newcommand{\sumM}{\sum_{\mu=1}^M}
\def\R{\mathbb{R}}
\let\cite\citep 
\def\th@remark{%
  \thm@headfont{\bfseries}%
  \normalfont %
  \thm@preskip\topsep \divide\thm@preskip\tw@
  \thm@postskip\thm@preskip
}
\theoremstyle{definition}
\newtheorem{theorem}{Theorem}[section]
\newtheorem{lemma}{Lemma}[section]
\newtheorem{corollary}{Corollary}[theorem]
\theoremstyle{definition}
\newtheorem{definition}{Definition}[section]
\theoremstyle{remark}
\newtheorem{remark}{Remark}[section]
\crefname{theorem}{Theorem}{Theorems}
\crefname{proposition}{Proposition}{Propositions}
\crefname{lemma}{Lemma}{Lemmas}
\crefname{corollary}{Corollary}{Corollaries}
\crefname{definition}{Definition}{Definitions}
\crefname{assumption}{Assumption}{Assumptions}
\crefname{remark}{Remark}{Remarks}
\crefname{problem}{Problem}{Problems}
\crefname{property}{Property}{property}
\numberwithin{equation}{section}
\numberwithin{theorem}{section}
\numberwithin{proposition}{section}
\numberwithin{definition}{section}
\numberwithin{lemma}{section}
\numberwithin{assumption}{section}
\numberwithin{remark}{section}
\newcommand\blfootnote[1]{%
  \begingroup
  \renewcommand\thefootnote{}\footnote{#1}%
  \addtocounter{footnote}{-1}%
  \endgroup
}
\newcommand*{\annot}[1]{\tag*{\footnotesize{\textcolor{black!50}{\big(#1\big)}}}}
\let\save@mathaccent\mathaccent
\newcommand*\if@single[3]{%
    \setbox0\hbox{${\mathaccent"0362{#1}}^H$}%
    \setbox2\hbox{${\mathaccent"0362{\kern0pt#1}}^H$}%
    \ifdim\ht0=\ht2 #3\else #2\fi
}
\newcommand*\rel@kern[1]{\kern#1\dimexpr\macc@kerna}
\newcommand*\widebar[1]{\@ifnextchar^{{\wide@bar{#1}{0}}}{\wide@bar{#1}{1}}}
\newcommand*\wide@bar[2]{\if@single{#1}{\wide@bar@{#1}{#2}{1}}{\wide@bar@{#1}{#2}{2}}}
\newcommand*\wide@bar@[3]{%
    \begingroup
    \def\mathaccent##1##2{%
        \let\mathaccent\save@mathaccent
        \if#32 \let\macc@nucleus\first@char \fi
        \setbox\z@\hbox{$\macc@style{\macc@nucleus}_{}$}%
        \setbox\tw@\hbox{$\macc@style{\macc@nucleus}{}_{}$}%
        \dimen@\wd\tw@
        \advance\dimen@-\wd\z@
        \divide\dimen@ 3
        \@tempdima\wd\tw@
        \advance\@tempdima-\scriptspace
        \divide\@tempdima 10
        \advance\dimen@-\@tempdima
        \ifdim\dimen@>\z@ \dimen@0pt\fi
        \rel@kern{0.6}\kern-\dimen@
        \if#31
        \overline{\rel@kern{-0.6}\kern\dimen@\macc@nucleus\rel@kern{0.4}\kern\dimen@}%
        \advance\dimen@0.4\dimexpr\macc@kerna
        \let\final@kern#2%
        \ifdim\dimen@<\z@ \let\final@kern1\fi
        \if\final@kern1 \kern-\dimen@\fi
        \else
        \overline{\rel@kern{-0.6}\kern\dimen@#1}%
        \fi
    }%
    \macc@depth\@ne
    \let\math@bgroup\@empty \let\math@egroup\macc@set@skewchar
    \mathsurround\z@ \frozen@everymath{\mathgroup\macc@group\relax}%
    \macc@set@skewchar\relax
    \let\mathaccentV\macc@nested@a
    \if#31
    \macc@nested@a\relax111{#1}%
    \else
    \def\gobble@till@marker##1\endmarker{}%
    \futurelet\first@char\gobble@till@marker#1\endmarker
    \ifcat\noexpand\first@char A\else
    \def\first@char{}%
    \fi
    \macc@nested@a\relax111{\first@char}%
    \fi
    \endgroup
    }
\newcommand*{\redefinesymbolwitharg}[1]{%
  \expandafter\let\csname ltx#1\expandafter\endcsname\csname #1\endcsname
  \@namedef{#1}{\@ifnextchar{^}{\@nameuse{#1@}}{\@nameuse{#1@}^{}}}%
  \expandafter\def\csname #1@\endcsname^##1##2{%
     \csname ltx#1\endcsname\ifx!##1!\else^{##1}\fi\mathopen{}\mathclose\bgroup\left(##2\aftergroup\egroup\right)
     }%
}
\newcommand*\rot{\rotatebox{90}}
\titlespacing*{\section}{0pt}{0pt}{0pt}
\titlespacing*{\subsection}{0pt}{0pt}{0pt}
\titlespacing*{\subsubsection}{0pt}{0pt}{0pt}
\setlist[itemize]{leftmargin=2em, before=\vspace{-0.5em}, after=\vspace{-0.5em}, itemsep=0.1em}
\definecolor{lightgray}{gray}{0.9}
\newcolumntype{g}{>{\columncolor{lightgray}}c}
\title{\textsf{STanHop}: Sparse Tandem Hopfield Model for\\ Memory-Enhanced Time Series Prediction}
\author{
{\bf
Dennis Wu\thanks{These authors contributed equally to this work.}\;\,$^{\dagger}$
\quad
Jerry Yao-Chieh Hu$^{*\dagger}$
\quad
Weijian Li$^{*\dagger}$
\quad
Bo-Yu Chen$^{\ddag}$
\quad
Han Liu$^{\dagger\natural}$}
\\
\vspace{0.5em}
\begin{flushleft}
     $^\dagger$ Department of Computer Science, Northwestern University, Evanston, IL 60208 USA  \\
     $^\ddag$ Department of Physics, National Taiwan University, Taipei 10617, Taiwan \\
     $^\natural$  Department of Statistics and Data Science, Northwestern University, Evanston, IL 60208 USA
\end{flushleft}
{\footnotesize
   \texttt{\{\href{mailto:jhu@u.northwestern.edu}{jhu}, \href{mailto:hibb@u.northwestern.edu}{hibb}, \href{mailto:weijianli@u.northwestern.edu}{weijianli}\}@u.northwestern.edu,
   \href{mailto:b12202023@ntu.edu.tw}{b12202023@ntu.edu.tw}
   }
   }
   \\
{\footnotesize
   \texttt{
   \href{mailto:hanliu@northwestern.edu}{hanliu@northwestern.edu}}
   }
   \vspace{-2em}
}
\begin{document}

\maketitle

\begin{abstract}
We present \textbf{STanHop-Net} (\textbf{S}parse \textbf{Tan}dem \textbf{Hop}field \textbf{Net}work) for multivariate time series prediction with memory-enhanced capabilities.
At the heart of our approach is \textbf{STanHop}, a novel Hopfield-based neural network block, which sparsely learns and stores both temporal and cross-series representations in a data-dependent fashion.
In essence, STanHop sequentially learn temporal representation and cross-series representation using two tandem sparse Hopfield layers.
In addition, StanHop incorporates two additional external memory modules: a Plug-and-Play module and a Tune-and-Play module for train-less and task-aware  memory-enhancements, respectively.   
They allow StanHop-Net to swiftly respond to certain sudden events.  
Methodologically, we construct the StanHop-Net by stacking STanHop blocks in a hierarchical fashion, enabling multi-resolution feature extraction with resolution-specific sparsity. 
Theoretically, we introduce a sparse extension of the modern Hopfield model (Generalized Sparse Modern Hopfield Model) and show that it endows a tighter memory retrieval error compared to the dense counterpart without sacrificing memory capacity.
Empirically, we validate the efficacy of our framework on both synthetic and real-world settings.  
\blfootnote{Reproducible Code will be publicly available upon conference acceptance.}
\end{abstract}

\clearpage
{
\setlength{\parskip}{-0em}
\setcounter{tocdepth}{2}
\tableofcontents
}
\clearpage
\section{Introduction}
\label{sec:intro}

In this work, we aim to enhance multivariate time series prediction by incorporating relevant additional information specific to the inference task at hand.
This problem holds practical importance due to its wide range of real-world applications.
On one hand, multivariate time series prediction itself poses a unique challenge given its multi-dimensional sequential structure and noise-sensitivity \cite{masini2023machine,reneau2023feature,nie2022time,fawaz2019deep}. 
A proficient model should robustly not only discern the correlations between series within each time step, but also grasp the intricate dynamics of each series over time. 
On the other hand, in many real-world prediction tasks, one significant challenge with existing time series models is their slow responsiveness to sudden or rare events.
For instance, events like the 2008 financial crisis and the pandemic-induced market turmoil in 2021 \cite{laborda2021volatility,bond2021failing,sevim2014developing,bussiere2006towards}, or extreme climate changes in weather forecasting \cite{le2023climate,sheshadri2021midlatitude} often lead to compromised model performance.
To combat these challenges, 
we present \textbf{STanHop-Net} (\textbf{S}parse \textbf{Tan}dem \textbf{Hop}field \textbf{Net}work), a novel Hopfield-based deep learning model, for multivariate time series prediction, equipped with optional memory-enhanced capabilities.

Our motivation comes from the  connection between associative memory models of human brain (specifically, the modern Hopfield models) and the attention mechanism \cite{hu2023SparseHopfield,ramsauer2020hopfield}.
Based on this link, we propose to enhance time series models with external information (e.g., real-time or relevant auxiliary data) via the memory retrieval mechanism of Hopfield models. 
In its core, we utilize and extend the deep-learning-compatible Hopfield layers \cite{hu2023SparseHopfield,ramsauer2020hopfield}.
Differing from typical transformer-based architectures, these layers not only replace the attention mechanisms \cite{ramsauer2020hopfield,widrich2020modern} but also serve as differentiable memory modules, enabling integration of external stimuli for enhanced predictions.

In this regard, 
we first introduce a set of generalized sparse Hopfield layers, as an extension of the sparse modern Hopfield model \cite{hu2023SparseHopfield}.
Based on these layers, we propose a structure termed the \textbf{STanHop} (\textbf{S}parse \textbf{Tan}dem \textbf{Hop}field layers) block.
In STanHop, there are two sequentially joined sub-blocks of generalized sparse Hopfield layers, hence tandem.
This tandem design sparsely learn and store temporal and cross-series representations in a sequential manner. 

Furthermore, 
we introduce \textbf{STanHop-Net} (\textbf{S}parse \textbf{Tan}dem \textbf{Hop}field \textbf{Net}work) for time series,  consisting of multiple layers of STanHop blocks to cater for multi-resolution representation learning.
To be more specific, rather than relying only on the input sequence for predictions, each stacked StanHop block is capable of  incorporating additional information through the Hopfield models' memory retrieval mechanism from a pre-specified external memory set.
This capability facilitates the injection of external memory at every resolution level when necessary. 
Consequently, STanHop-Net not only excels at making accurate predictions but also allows users to integrate additional information they consider valuable for their specific downstream inference tasks with minimal effort.

We provide visual overviews of  STanHop-Net in \cref{fig:pipeline} and STanHop block in \cref{fig:StanHop}.

\paragraph{Contributions.} 
We summarize our contributions as follows:
\begin{itemize}
    \item 
    Theoretically, we introduce an sparse extension of the modern Hopfield model, termed the generalized sparse Hopfield model.
    We show that it not only offer a tighter memory retrieval error bound compared to the dense modern Hopfield model \cite{ramsauer2020hopfield}, but also retains the robust theoretical properties of the dense model, such as fast fixed point convergence and exponential memory capacity. 
    Moreover, it serves as a generalized model that encompasses both the sparse \cite{hu2023SparseHopfield} and dense \cite{ramsauer2020hopfield} models as its special cases.

    \item
    Computationally, 
    we show the one-step approximation of the retrieval dynamics of the generalized sparse Hopfield model is connected to sparse attention mechanisms, akin to \cite{hu2023SparseHopfield,ramsauer2020hopfield}.
    This connection allows us to introduce the $\mathtt{GSH}$ layers featuring learnable sparsity, for time series representation learning.
    As a result, these layers achieve faster memory-retrieval convergence and greater noise-robustness compared to the dense model.
    
    \item 
    Methodologically, 
    with $\mathtt{GSH}$ layer, we present \textbf{STanHop} (\textbf{S}parse \textbf{Tan}dem \textbf{Hop}field layers) block, a hierarchical tandem Hopfield model design to capture the intrinsic multi-resolution structure of both temporal and cross-series dimensions of time series with resolution-specific sparsity at each level.
    In addition, 
    we introduce the idea of pseudo-label retrieval, and debut two external memory plugin schemes --- Plug-and-Play and Tune-and-Play memory plugin modules --- for memory-enhanced predictions.

    \item
    Experimentally, we validate STanHop-Net in multivariate time series predictions, considering both with and without the incorporation of external memory.
    When external memory isn't utilized, STanHop-Net consistently matches or surpasses many popular baselines, including Crossformer \cite{zhang2022crossformer} and DLinear \cite{zeng2023transformers}, across diverse real-world datasets.
    When external memory is utilized, STanHop-Net demonstrates further performance boosts in many settings, benefiting from both proposed external memory schemes.
\end{itemize}

\paragraph{Notations.}
We write $\Braket{\ba,\bb}\coloneqq \ba^\sT \bb$ as the inner product for vectors $\ba,\bb$.
The index set $\{1,\cdots,I\}$ is denoted by $[I]$, where $I\in\mathbb{N}_+$. 
The spectral norm is denoted by $\norm{\cdot}$, which is equivalent to the $l_2$-norm when applied to a vector. 
Throughout this paper, we denote the memory patterns (keys) by $\bxi\in\R^d$ and the state/configuration/query pattern by $\bx\in\R^d$ with $n\coloneqq\norm{\bx}$, and $\bm{\Xi}\coloneqq\(\bxi_1,\cdots,\bxi_M\)\in \R^{d\times M}$ as shorthand for stored memoery (key) patterns $\{\bxi_\mu\}_{\mu\in[M]}$.
Moreover, we set $m\coloneqq \Max_{\mu\in[M]}\norm{\bxi_\mu}$ be the largest norm of memory patterns.

\paragraph{Note Added [December 27, 2023].}
After the completion of this work, the authors learn of an upcoming work by \citet{martins2023sparse}, addressing similar
topics from the perspective of the Fenchel-Young loss \cite{blondel2020learning}.

\section{Background: Modern Hopfield Models}
\label{sec:background}

Let $\bx \in \mathbb{R}^d$ be the query pattern and $\bm{\Xi} = (\bxi_1, \cdots, \bxi_M) \in \mathbb{R}^{d \times M}$ the $M$ memory patterns. 

\paragraph{Hopfield Models.}
Hopfield models are associative models that store a set of memory patterns $\bm{\Xi}$ in such a way that a stored pattern $\bxi_\mu$ can be retrieved based on a partially known or contaminated version, a query $\bx$.
The models achieve this by embedding the memories $\bm{\Xi}$ in the \textit{energy landscape} $E(\bx)$ of a physical system (e.g., the Ising model in \cite{hopfield1982neural} or its higher-order generalizations \cite{lee1986machine,peretto1986long,newman1988memory}), where each memory $\bxi_\mu$ corresponds to a local minimum. 
When a query $\bx$ is introduced, the model initiates energy-minimizing \textit{retrieval dynamics} $\calT$ at the query's location.
This process then navigate the energy landscape to locate the nearest local minimum $\bxi_\mu$, effectively retrieving the memory most similar to the query $\bx$.

Constructing the energy function, $E(\bx)$, is straightforward. 
As outlined in \cite{krotov2016dense}, memories get encoded into $E(\bx)$ using the \textit{overlap-construction}: $E(\bx) = F(\bm{\Xi}^{\sT}\bx)$, where $F: \mathbb{R}^M \to \mathbb{R}$ is a smooth function. 
This ensures that the memories $\{\bxi_\mu\}_{\mu \in [M]}$ sit at the stationary points of $E(\bx)$, given $\grad_{\bx}F(\bm{\Xi}^{\sT}\bx)|_{\bxi_{\mu}} = 0$ for all $\mu \in [M]$. 
The choice of $F$ results in different Hopfield model types, as demonstrated in \cite{krotov2016dense, demircigil2017model, ramsauer2020hopfield, krotov2020large}.
However, determining a suitable retrieval dynamics, $\calT$, for a given energy $E(\bx)$ is more challenging. 
For effective memory retrieval, $\calT$ must:
\begin{itemize}
    \item [(T1)] \label{item:T1} Monotonically reduce $E(\bx)$ when applied iteratively.
    \item [(T2)] \label{item:T2} Ensure its fixed points coincide with the stationary points of $E(\bx)$ for precise retrieval.
\end{itemize}
\paragraph{Modern Hopfield Models.}
\citet{ramsauer2020hopfield} propose the modern Hopfield model with a specific set of $E$ and $\calT$ satisfying above requirements, and integrate it into deep learning architectures via its strong connection with attention mechanism, offering enhanced performance, and theoretically guaranteed exponential memory capacity.
Specifically, they introduce
\begin{equation}
\label{eqn:MHM}
    E(\bx) = -\lse(\beta,\bm{\Xi}^\sT \bx) + \frac{1}{2} \langle \bx,\bx \rangle + \text{Const.},\;\;
    \text{and}\;\;
    \calT_{\text{Dense}}(\bx) = \bm{\Xi}  \Softmax(\beta \bm{\Xi}^\sT \bx)=\bx^{\text{new}},
\end{equation}
where $\bm{\Xi}^\sT \bx=(\Braket{\bxi_1,\bx},\ldots,\Braket{\bxi_M,\bx})\in\R^M$, $\lse\(\beta,\bz\)\coloneqq \log\(\sumM \exp{\beta z_\mu}\)/\beta$ is the log-sum-exponential for any given vector $\bz\in\R^M$ and $\beta>0$.
Their analysis concludes that:
\begin{itemize}
    \item  
    $\calT_{\text{Dense}}$ converges well \cite[Theorem~1,2]{ramsauer2020hopfield} and can retrieve patterns accurately in just one step \cite[Theorem~4]{ramsauer2020hopfield}, i.e.
    \hyperref[item:T1]{(T1)} and \hyperref[item:T2]{(T2)} are satisfied.
    \item  The modern Hopfield model \eqref{eqn:MHM} possesses an exponential memory capacity in pattern size $d$ \cite[Theorem~3]{ramsauer2020hopfield}.
    \item
    Notably, the one-step approximation of  $\calT_{\text{Dense}}$ mirrors the attention mechanism in transformers, leading to a novel deep architecture design: the Hopfield  layers.
\end{itemize}
In a related vein, \citet{hu2023SparseHopfield} introduce a principled approach to constructing modern Hopfield models using the convex conjugate of the entropy regularizer. 
Unlike the original modern Hopfield model \cite{ramsauer2020hopfield}, the key insight of \cite{hu2023SparseHopfield} is that the convex conjugate of various entropic regularizers can yield distributions with varying degrees of sparsity. 
Leveraging this understanding, we introduce the generalized sparse Hopfield model in the next section.

\section{Generalized Sparse Hopfield Model}
\label{sec:model}

In this section, 
we extend the entropic regularizer construction of the sparse modern Hopfield model \cite{hu2023SparseHopfield} by replacing the Gini entropic regularizer with the Tsallis $\alpha$-entropy \cite{tsallis1988possible},
\bea
\label{eqn:T_entropy}
\Psi_\alpha(\bp)\coloneqq
\begin{cases}
    \frac{1}{\alpha(\alpha-1)}\sum^M_{\mu=1}\(p_\mu-p_\mu^\alpha\), \quad & \alpha\neq 1,\\
    -\sumM p_\mu\ln p_\mu,\quad &\alpha=1,
\end{cases},\quad\text{for }\alpha\ge 1,
\eea
thereby introducing the generalized sparse Hopfield model. 
Subsequently, we verify the connection between the memory retrieval dynamics of the generalized sparse Hopfield model and attention mechanism. 
This leads to the Generalized Sparse Hopfield ($\mathtt{GSH}$) layers for deep learning.

\subsection{Energy Function, Retrieval Dynamics and Fundamental Limits}
Let $\bz,\bp\in\R^M$, and $\Delta^{M}\coloneqq\{\bp\in\R^M_+ \mid \sum_\mu^M p_\mu=1\}$ be the $(M-1)$-dimensional unit simplex.
\paragraph{Energy Function.} We introduce the generalized sparse Hopfield energy function\footnote{This energy function \eqref{eqn:H_entmax} is equivalent up to an additive constant.}:
\bea
\label{eqn:H_entmax}
\calH(\bx)=-\Psi^\star_\alpha\(\beta \bm{\bm{\Xi}}^\sT \bx\) +\half \Braket{\bx,\bx}+\text{Const.},\quad \text{with } \Psi^\star_\alpha(\bz) \coloneqq \int\dd\bz \alphaentmax(\bz),
\eea
 where $\alphaentmax(\cdot):\R^M\to \Delta^M$ is a finite-domain distribution map defined as follows.
\begin{definition} The variational form of $\alphaentmax$ is defined by the optimization problem
\bea
\label{eqn:entmax}
\alphaentmax(\bz) \coloneqq \argmax_{\bp \in \Delta^M} [\Braket{\bp,\bz}-\Psi_\alpha(\bp)],
\eea
where $\Psi_\alpha(\cdot)$ is the Tsallis entropic regularizer given by \eqref{eqn:T_entropy}.
See \cref{remark:closeform} for a closed form.
\end{definition}
\noindent
$\Psi_\alpha^\star(\bp)$ is the convex conjugate of the Tsallis entropic regularizer $\Psi_\alpha(\bp)$ (\cref{def:convex_conjugate}) and hence 
\begin{lemma}
\label{lemma:Danskin}
    $\grad \Psi^\star_\alpha(\bz)=\argmax_{\bp\in \Delta^M} [\Braket{\bp,\bz}-\Psi_\alpha(\bp)]=\alphaentmax(\bz) $.
\end{lemma}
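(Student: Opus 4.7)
The claim bundles two equalities. The inner one, $\argmax_{\bp\in\Delta^M}[\Braket{\bp,\bz}-\Psi_\alpha(\bp)]=\alphaentmax(\bz)$, is immediate from the variational definition \eqref{eqn:entmax} of $\alphaentmax$; the substance of the lemma is therefore the outer equality $\grad\Psi^\star_\alpha(\bz)=\argmax_{\bp\in\Delta^M}[\Braket{\bp,\bz}-\Psi_\alpha(\bp)]$, which is an envelope/Danskin-type assertion about the convex conjugate of $\Psi_\alpha$ over the simplex --- as the lemma's name hints.

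My plan is to establish the outer equality in three short steps. First, I will verify that the effective convex regularizer $\Psi_\alpha$ is proper, lower semicontinuous, and \emph{strictly} convex on $\Delta^M$ for every $\alpha\ge 1$. For $\alpha>1$ this reduces to a diagonal Hessian computation on the relative interior, since each component $p_\mu^\alpha$ contributes $\alpha(\alpha-1)p_\mu^{\alpha-2}>0$ for $p_\mu\in(0,1)$; for $\alpha=1$ strict convexity of (negative) Shannon entropy is classical. Second, strict convexity on the compact convex set $\Delta^M$ guarantees that the program in \eqref{eqn:entmax} has a unique maximizer $\bp^\star(\bz)$ for every $\bz\in\R^M$, and by \eqref{eqn:entmax} this $\bp^\star(\bz)$ is exactly $\alphaentmax(\bz)$. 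Third, I will invoke Danskin's theorem --- equivalently, the standard differentiability property of the Legendre--Fenchel conjugate of a strictly convex function --- to conclude $\grad\Psi_\alpha^\star(\bz)=\bp^\star(\bz)=\alphaentmax(\bz)$, which is the outer equality.

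A brief bookkeeping step reconciles the above with the line-integral definition $\Psi^\star_\alpha(\bz)\coloneqq\int\dd\bz\,\alphaentmax(\bz)$ used in \eqref{eqn:H_entmax}: the sup-based convex conjugate and this antiderivative agree up to an additive constant, since both are scalar functions on $\R^M$ whose gradient equals $\alphaentmax(\bz)$; any residual constant is harmlessly absorbed into the free $+\,\text{Const.}$ already present in \eqref{eqn:H_entmax}.

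The main obstacle, as I see it, is not a single hard calculation but the regularity bookkeeping: pinning down the sign convention so that $\Psi_\alpha$ is (strictly) convex in the sense required by the conjugacy argument, and then checking the Danskin hypotheses --- compactness of $\Delta^M$, continuity of $\bp\mapsto\Braket{\bp,\bz}-\Psi_\alpha(\bp)$, and uniqueness of its maximizer --- uniformly across $\alpha\ge 1$, including the limiting Shannon case $\alpha=1$. Once these ingredients are in place, the conclusion follows by a one-line application of Danskin's theorem, with no machinery beyond standard convex analysis.
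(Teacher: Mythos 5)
Your proposal is correct and follows essentially the same route as the paper: both reduce the claim to verifying the hypotheses of Danskin's theorem (continuity of the objective, compactness of $\Delta^M$, and uniqueness of the maximizer) and then read off $\grad\Psi^\star_\alpha(\bz)=\alphaentmax(\bz)$. Your version is if anything slightly more careful, since you justify uniqueness via strict convexity of the (negative-entropy) regularizer and explicitly reconcile the antiderivative definition of $\Psi^\star_\alpha$ with the sup-based conjugate, points the paper's proof asserts more casually.
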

\vspace{-0.5em}
\begin{proof}
See \cref{proof:Danskin} for a detailed proof.
\end{proof}\vspace{-1em}
\paragraph{Retrieval Dynamics.} With \cref{lemma:Danskin}, it is clear to see that the energy function \eqref{eqn:H_entmax} aligns with the overlap-function construction of Hopfield models, as in \cite{hu2023SparseHopfield,ramsauer2020hopfield}.
Next, we introduce the corresponding retrieval dynamics satisfying the monotinicity property \hyperref[item:T1]{(T1)}.
\begin{lemma}
[Generalized Sparse Hopfield Retrieval Dynamics]
\label{lemma:retrieval_dyn}
    Let $t$ be the iteration number.
The retrieval dynamics of the generalized sparse Hopfield model is a 1-step update of the form 
    \begin{equation}
    \label{eqn:retrieval_dyn}
       \calT(\bx_t)\coloneqq \grad_\bx \Psi^\star_\alpha\(\beta \bm{\Xi}^\sT \bx_t\) = \alphaentmax\( \beta \bm{\Xi}^\sT \bx_t \)=\bx_{t+1} ,
    \end{equation}
    that minimizes the energy function \eqref{eqn:H_entmax} monotonically over $t$.
\end{lemma}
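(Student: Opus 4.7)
The plan is to invoke the Concave-Convex Procedure (CCCP) of Yuille and Rangarajan to certify that the proposed update monotonically decreases the energy $\calH(\bx)$ in \eqref{eqn:H_entmax}. First, I would decompose $\calH$ as $\calH = \calH_{\text{cvx}} + \calH_{\text{ccv}}$, where $\calH_{\text{cvx}}(\bx) = \tfrac{1}{2}\Braket{\bx,\bx}$ is strictly convex and $\calH_{\text{ccv}}(\bx) = -\Psi^\star_\alpha(\beta \bm{\Xi}^\sT \bx) + \text{Const.}$ is concave. Concavity follows because $\Psi^\star_\alpha$, as the convex conjugate of the Tsallis regularizer $\Psi_\alpha$ restricted to the simplex, is convex; precomposition with the linear map $\bx \mapsto \beta \bm{\Xi}^\sT \bx$ preserves convexity, and negation then gives concavity.

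Next, I would apply the CCCP rule $\grad \calH_{\text{cvx}}(\bx_{t+1}) = -\grad \calH_{\text{ccv}}(\bx_t)$. The left-hand side reduces to $\bx_{t+1}$, while the right-hand side, by the chain rule together with \cref{lemma:Danskin}, equals $\grad_\bx \Psi^\star_\alpha(\beta \bm{\Xi}^\sT \bx)\big|_{\bx_t}$, which matches the $\alphaentmax$ expression of \eqref{eqn:retrieval_dyn}. This identifies $\calT$ with the canonical CCCP iterate for $\calH$, and in particular shows that fixed points of $\calT$ coincide with stationary points of $\calH$, establishing \hyperref[item:T2]{(T2)} as a byproduct.

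To verify the monotone-descent property \hyperref[item:T1]{(T1)} directly rather than as a black-box consequence of CCCP, I would combine the first-order convexity inequality for $\Psi^\star_\alpha$,
\[
\Psi^\star_\alpha(\bz') \;\ge\; \Psi^\star_\alpha(\bz) + \Braket{\alphaentmax(\bz),\, \bz' - \bz},
\]
evaluated at $\bz = \beta \bm{\Xi}^\sT \bx_t$ and $\bz' = \beta \bm{\Xi}^\sT \bx_{t+1}$, with the quadratic identity $\tfrac{1}{2}\|\bx_{t+1}\|^2 - \tfrac{1}{2}\|\bx_t\|^2 = \Braket{\bx_{t+1}, \bx_{t+1} - \bx_t} - \tfrac{1}{2}\|\bx_{t+1} - \bx_t\|^2$. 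Substituting the CCCP identification of $\bx_{t+1}$ cancels the linear cross-terms and leaves $\calH(\bx_{t+1}) - \calH(\bx_t) \le -\tfrac{1}{2}\|\bx_{t+1} - \bx_t\|^2 \le 0$, with equality iff $\bx_{t+1} = \bx_t$.

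The main delicate point, in my view, is that for $\alpha > 1$ the $\alphaentmax$ map is sparse---zeroing out some coordinates---so I must check that $\Psi^\star_\alpha$ remains continuously differentiable across the boundary of the support set, in order for \cref{lemma:Danskin} (the Danskin-type identity $\grad \Psi^\star_\alpha = \alphaentmax$) and hence the first-order inequality above to apply globally rather than only on the interior. I would handle this by appealing either to the closed-form expression for $\alphaentmax$ (cf.\ \cref{remark:closeform}) or to standard regularity results for conjugates of strongly convex regularizers on the simplex, both of which yield $C^1$ smoothness of $\Psi^\star_\alpha$ on all of $\R^M$.
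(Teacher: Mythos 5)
Your proposal is correct and follows essentially the same route as the paper: the same convex--concave splitting $\calH=\half\Braket{\bx,\bx}-\Psi^\star_\alpha\(\beta\bm{\Xi}^\sT\bx\)$, the same identification of $\calT$ with the CCCP iterate via \cref{lemma:Danskin}, and monotone descent from this structure. The only real divergence is in how descent is certified: the paper uses the generic CCCP surrogate argument (concavity of the second term yields an upper bound $\calH_U(\bx,\bx_t)$ that the iterate minimizes, so $\calH(\bx_{t+1})\le\calH_U(\bx_{t+1},\bx_t)\le\calH_U(\bx_t,\bx_t)=\calH(\bx_t)$), whereas you exploit the exact quadratic form of the convex part to cancel the cross terms directly, which buys the sharper sufficient-decrease bound $\calH(\bx_{t+1})-\calH(\bx_t)\le-\half\norm{\bx_{t+1}-\bx_t}^2$ with equality only at fixed points, and also gives (T2) as a byproduct. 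Your worry about $C^1$ smoothness of $\Psi^\star_\alpha$ across the support boundary is legitimate but can be bypassed even more cheaply than you suggest: the first-order inequality you invoke only needs $\alphaentmax(\bz)\in\partial\Psi^\star_\alpha(\bz)$, which is immediate from the conjugate/argmax characterization in \cref{lemma:Danskin} without any differentiability claim. One shared blemish with the paper: the chain rule applied to $\Psi^\star_\alpha\(\beta\bm{\Xi}^\sT\bx\)$ produces $\beta\,\bm{\Xi}\alphaentmax\(\beta\bm{\Xi}^\sT\bx_t\)$ rather than the bare $\alphaentmax$ expression written in \eqref{eqn:retrieval_dyn}, so the update should carry the $\bm{\Xi}$ (and a $\beta$ or a $\nicefrac{1}{\beta}$ rescaling of the energy) for the cancellation in your last step to be exact; this is a notational slip in the paper's statement as well, not a gap in your argument.
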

\begin{proof}[Proof]
See \cref{proof:retrieval_dyn} for a detailed proof.
\end{proof}
\noindent
To see how this model store and retrieve memory patterns, we  first introduce the following definition.
\begin{definition} [Stored and Retrieved]
\label{def:stored_and_retrieved}
Assuming that every pattern $\bxi_\mu$ surrounded by a sphere $S_\mu$ with finite radius $R\coloneqq \half \Min_{\mu,\nu\in[M]}\norm{\bxi_\mu-\bxi_\nu}$, we say $\bxi_\mu$ is \textit{stored} if there exists a generalized fixed point of $\calT$, $\bx^\star_\mu \in S_\mu$, to which all limit points $\bx \in S_\mu$ converge to, and $S_\mu \cap S_\nu=\emptyset$ for $\mu \neq \nu$. 
We say $\bxi_\mu$ is $\epsilon$-\textit{retrieved} by $\calT$ with $\bx$ for an error $\epsilon$, if $\norm{\calT(\bx)-\bxi_\mu}\le \epsilon$.
\end{definition}
\noindent
To ensure the convergence property  \hyperref[item:T2]{(T2)} of retrieval dynamics \eqref{eqn:retrieval_dyn}, we have the next lemma.
\begin{lemma}[Convergence of Retrieval Dynamics $\calT$]
\label{lemma:convergence_sparse}
Given the energy function \eqref{eqn:H_entmax} and retrieval dynamics $\calT(\bx)$ \eqref{eqn:retrieval_dyn}, respectively.
For any sequence $\{\mathbf{x}_t\}_{t=0}^{\infty}$ generated by the iteration $\mathbf{x}_{t'+1} = \mathcal{T}(\mathbf{x}_{t'})$, all limit points of this sequence are stationary points of $\calH$.
\end{lemma}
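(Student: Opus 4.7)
The plan is to use a standard Zangwill/CCCP-style global convergence argument: combine the monotone energy descent from \cref{lemma:retrieval_dyn} with continuity of $\calT$ and boundedness of the trajectory to deduce that any accumulation point must be a fixed point of $\calT$, and hence (since fixed points coincide with stationary points of $\calH$) a stationary point.

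First I would collect three preliminary facts. (i) Continuity of $\calT$: by \cref{lemma:Danskin}, $\alphaentmax=\grad\Psi^\star_\alpha$ is the gradient of a convex $\mathcal{C}^1$ function and is therefore continuous; composition with the linear maps $\bx\mapsto\beta\bm{\Xi}^\sT\bx$ and left multiplication by $\bm{\Xi}$ preserve continuity. (ii) Boundedness: since $\alphaentmax$ outputs a probability vector on $\Delta^M$, for every $t\ge 1$ the iterate $\bx_t$ is a convex combination of $\bxi_1,\dots,\bxi_M$, so $\{\bx_t\}_{t\ge 1}\subset \mathrm{conv}\{\bxi_\mu\}_{\mu\in[M]}$, which is compact. (iii) Continuity of $\calH$: immediate from the defining expression \eqref{eqn:H_entmax}.

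Next I would run the descent-plus-limit argument. By \cref{lemma:retrieval_dyn}, $\calH(\bx_{t+1})\le \calH(\bx_t)$, so on the compact trajectory $\calH$ is bounded below and $\calH(\bx_t)\downarrow\calH^\infty$ for some $\calH^\infty\in\R$; in particular $\calH(\bx_t)-\calH(\bx_{t+1})\to 0$. Picking any limit point $\bx^\star$ with a subsequence $\bx_{t_k}\to\bx^\star$, continuity of $\calT$ gives $\bx_{t_k+1}=\calT(\bx_{t_k})\to\calT(\bx^\star)$, while continuity of $\calH$ together with the convergence of the full sequence $\{\calH(\bx_t)\}$ yields $\calH(\bx^\star)=\calH^\infty=\calH(\calT(\bx^\star))$.

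The main obstacle is upgrading this equality $\calH(\bx^\star)=\calH(\calT(\bx^\star))$ into the strict statement $\bx^\star=\calT(\bx^\star)$; this is where the concave-convex structure of $\calH$ is used. Writing $\calH(\bx)=\tfrac12\braket{\bx,\bx}-\Psi^\star_\alpha(\beta\bm{\Xi}^\sT\bx)$ as (strongly convex) minus (convex), the update \eqref{eqn:retrieval_dyn} coincides with a Concave-Convex Procedure step, and standard CCCP bookkeeping combined with the unit strong convexity of $\tfrac12\braket{\bx,\bx}$ supplies the strict descent inequality
\begin{equation*}
\calH(\bx)-\calH(\calT(\bx))\;\ge\;\tfrac12\,\norm{\bx-\calT(\bx)}^2.
\end{equation*}
Evaluating at $\bx=\bx^\star$ then forces $\bx^\star=\calT(\bx^\star)$. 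Differentiating \eqref{eqn:H_entmax} and invoking \cref{lemma:Danskin} shows that $\grad\calH(\bx)$ vanishes precisely at fixed points of $\calT$, so $\bx^\star$ is a stationary point of $\calH$, completing the argument.
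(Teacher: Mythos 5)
Your proposal is correct, and it reaches the conclusion by a more self-contained route than the paper. The paper's own proof is essentially an argument by citation: it takes the monotone descent from \cref{lemma:retrieval_dyn}, checks (implicitly, via \cite{hu2023SparseHopfield}) the hypotheses of Zangwill's global convergence theory to conclude that every limit point is a generalized fixed point of $\calT$, and then invokes a lemma of \citet{sriperumbudur2009convergence} to pass from generalized fixed points to stationary points of $\calH$. You instead make the convergence machinery explicit: continuity of $\calT$ (via \cref{lemma:Danskin}, since a differentiable convex $\Psi^\star_\alpha$ is automatically $\mathcal{C}^1$), compactness of the trajectory (iterates are convex combinations of the $\bxi_\mu$), and, crucially, the sufficient-decrease inequality $\calH(\bx)-\calH(\calT(\bx))\ge\tfrac12\norm{\bx-\calT(\bx)}^2$, which follows from the $1$-strong convexity of the linearized CCCP surrogate and does hold (minimizer optimality of the surrogate plus concavity of $-\Psi^\star_\alpha(\beta\bm{\Xi}^\sT\cdot)$ gives exactly this bound). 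That quantitative inequality is what lets you upgrade $\calH(\bx^\star)=\calH(\calT(\bx^\star))$ to $\bx^\star=\calT(\bx^\star)$ without appealing to Zangwill's point-to-set framework, and the identification of fixed points with stationary points then follows directly from $\grad\calH(\bx)=\bx-\calT(\bx)$ under the paper's definition \eqref{eqn:retrieval_dyn}, replacing the auxiliary lemma the paper imports. The trade-off: the paper's route is shorter and matches the lineage of \cite{hu2023SparseHopfield}, while yours is elementary, verifiable line by line, and yields a slightly stronger conclusion (a concrete decrease rate $\sum_t\norm{\bx_t-\bx_{t+1}}^2<\infty$) for free. One caveat inherited from the paper rather than introduced by you: the precise form of $\calT$ in \eqref{eqn:retrieval_dyn} drops the chain-rule factor $\beta\bm{\Xi}$, so statements like ``fixed points of $\calT$ are exactly the zeros of $\grad\calH$'' and ``iterates lie in $\mathrm{conv}\{\bxi_\mu\}$'' should be read with the intended normalization (as in the appendix proof of \cref{lemma:retrieval_dyn}); your argument is unaffected since boundedness and the gradient identity survive either convention.
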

\begin{proof}
See \cref{proof:convergence_sparse} for a detailed proof.
\end{proof}
Intuitively,  \cref{lemma:convergence_sparse} suggests that
for any query $\bx$, $\calT$ (given by \eqref{eqn:retrieval_dyn}) monotonically and iteratively approaches stationary points of $\calH$ (given by \eqref{eqn:H_entmax}), where the memory patterns $\{\bxi_\mu\}_{\mu\in[M]}$ are stored.
This completes the construction of a well-defined modern Hopfield model.

\paragraph{Fundamental Limits.}
To highlight the computational benefits of the generalized sparse Hopfield model, we analyze the fundamental limits of the memory retrieval error and memory capacity.
\begin{theorem}[Retrieval Error]
\label{thm:eps_sparse_dense}
Let $\calT_{\text{Dense}}$ be the retrieval dynamics of the dense modern Hopfield model \cite{ramsauer2020hopfield}.
Let $\bz\in\R^M$, $z_{(\nu)}$ be the $\nu$'th element in a sorted descending $z$-sequence $\bz_{\text{sorted}}\coloneqq z_{(1)}\ge \ldots\ge z_{(M)}$, and $\kappa(\bz)\coloneqq \Max\{k\in[M]\mid 1+kz_{(k)}>\sum_{\nu\le k} z_{(\nu)}\}$.
For all $\bx\in S_\mu$, it holds $\norm{\calT(\bx)-\bxi_\mu} 
\leq
\norm{\calT_{\text{Dense}}(\bx)-\bxi_\mu}$, and
\begin{align}
\label{eqn:eps_sparse_dense}
    &\text{for }2\ge\alpha\ge1,\quad\norm{\calT(\bx)-\bxi_\mu}\leq2m(M-1) \exp{-\beta \(\Braket{\bxi_\mu,\bx}-\Max_{\nu\in[M]}\Braket{\bxi_\mu,\bxi_\nu}\)},\\
\label{eqn:eps_sparse_over_2}
    &\text{for }\alpha\geq 2,\quad \norm{\calT(\bx)-\bxi_\mu}\leq m+d^{\nicefrac{1}{2}}m\beta \[\kappa \(\Max_{\nu\in[M]}\Braket{\bxi_\nu,\bx}-\[ \bm{\Xi}^\sT \bx\]_{(\kappa)}\)+\frac{1}{\beta}\].
\end{align}
\end{theorem}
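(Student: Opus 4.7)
The core identity, obtained from $\sum_\nu [\alphaentmax(\beta \bm{\Xi}^\sT \bx)]_\nu = 1$, is
\begin{align*}
\calT(\bx)-\bxi_\mu
= \bm{\Xi}\,\alphaentmax(\beta \bm{\Xi}^\sT \bx) - \bxi_\mu
= \sum_{\nu\neq\mu}[\alphaentmax(\beta \bm{\Xi}^\sT \bx)]_\nu (\bxi_\nu-\bxi_\mu),
\end{align*}
so by the triangle inequality
$\|\calT(\bx)-\bxi_\mu\|\le 2m\bigl(1-[\alphaentmax(\beta \bm{\Xi}^\sT \bx)]_\mu\bigr)$.
The whole theorem then reduces to controlling the ``mass leakage'' $1-[\alphaentmax]_\mu$ in two regimes, and to comparing it against the dense Softmax analogue to get the monotonicity inequality.

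\textbf{Comparison with $\calT_{\text{Dense}}$.} First I would establish the pointwise comparison $1-[\alphaentmax(\bz)]_{\mu^\star}\le 1-[\Softmax(\bz)]_{\mu^\star}$ where $\mu^\star=\argmax_\nu z_\nu$. This should follow either from a direct manipulation of the variational forms (using that $\Psi_\alpha$ is more strongly convex on the simplex than the Shannon entropy for $\alpha>1$, pushing more mass onto the top coordinate), or by invoking the known support-truncation property of $\alphaentmax$: coordinates with $z_\nu$ below a threshold receive mass $0$, which is always $\le$ their Softmax mass. Applied with $\bz=\beta\bm{\Xi}^\sT\bx$, this immediately yields $\|\calT(\bx)-\bxi_\mu\|\le \|\calT_{\text{Dense}}(\bx)-\bxi_\mu\|$, since the dense bound is $\le 2m(1-[\Softmax]_\mu)$ up to the same manipulation.

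\textbf{Regime $1\le\alpha\le 2$ (exponential bound).} In this regime $\alphaentmax$ still has full (or near-full) support, so I would bound $1-[\alphaentmax]_\mu$ by the Softmax quantity just derived. Then the standard \citet{ramsauer2020hopfield} argument gives
$1-[\Softmax(\beta\bm{\Xi}^\sT \bx)]_\mu\le (M-1)\exp\!\bigl(-\beta(\langle\bxi_\mu,\bx\rangle-\max_\nu \langle\bxi_\mu,\bxi_\nu\rangle)\bigr)$,
by splitting the denominator into the $\mu$-term plus the rest and upper-bounding each off-diagonal inner product using the ``separation'' condition implicit in $\bx\in S_\mu$. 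Combining with the factor $2m$ produces \eqref{eqn:eps_sparse_dense}.

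\textbf{Regime $\alpha\ge 2$ (support-size bound).} Here $\alphaentmax$ becomes truly sparse, with support of size $\kappa\coloneqq\kappa(\beta\bm{\Xi}^\sT\bx)$. The plan is to invoke the closed form recalled in \cref{remark:closeform}, $[\alphaentmax(\bz)]_\nu = [(\alpha-1)z_\nu - \tau(\bz)]_+^{1/(\alpha-1)}$ (with threshold $\tau$ determined by $\sum_\nu [\alphaentmax]_\nu = 1$), and Lipschitz/sensitivity estimates of this map. Writing $\calT(\bx)-\bxi_\mu = \bm{\Xi}\,\alphaentmax(\beta\bm{\Xi}^\sT\bx)-\bxi_\mu$ and bounding $\|\bm{\Xi}\,\alphaentmax(\cdot)\|\le m$ gives the $m$ term; the remaining $d^{1/2}m\beta[\,\cdot\,]$ term comes from a first-order control of $\alphaentmax$ on its active set of size $\kappa$ (whose entries differ from the maximum by at most $\max_\nu\langle\bxi_\nu,\bx\rangle-[\bm{\Xi}^\sT\bx]_{(\kappa)}$), together with a $\sqrt{d}$ conversion between $\ell_\infty$ and $\ell_2$ when pulling back through $\bm{\Xi}$ with $\|\bxi_\nu\|\le m$.

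\textbf{Main obstacle.} The delicate step is the $\alpha\ge 2$ bound: one must argue carefully about the size and composition of the active support $\kappa$ defined through the sorted values of $\beta\bm{\Xi}^\sT\bx$, and convert the threshold gap $\max_\nu\langle\bxi_\nu,\bx\rangle-[\bm{\Xi}^\sT\bx]_{(\kappa)}$ into a uniform bound on each active coordinate of $\alphaentmax$. The $\alpha\in[1,2]$ case is essentially a clean sparsification of the classical Softmax argument and should be routine once the monotonicity step is in place.
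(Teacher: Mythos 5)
Your decomposition $\calT(\bx)-\bxi_\mu=\sum_{\nu\neq\mu}[\alphaentmax(\beta\bm{\Xi}^\sT\bx)]_\nu\,(\bxi_\nu-\bxi_\mu)$ and the leakage bound $2m\,(1-[\alphaentmax]_\mu)$ do recover the paper's route for $1\le\alpha\le2$ (reduce to a Softmax-type estimate and run the Ramsauer-style argument), so that regime is fine modulo details. The first genuine gap is the monotonicity claim $\norm{\calT(\bx)-\bxi_\mu}\le\norm{\calT_{\text{Dense}}(\bx)-\bxi_\mu}$: this is an inequality between the \emph{actual} retrieval errors, whereas your argument only compares the two \emph{upper bounds} $2m\,(1-[\alphaentmax]_\mu)$ and $2m\,(1-[\Softmax]_\mu)$; from $A\le A'$, $B\le B'$ and $A'\le B'$ nothing follows about $A$ versus $B$. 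The paper establishes this claim by directly manipulating the two norms of $\bm{\Xi}\,\bp-\bxi_\mu$ for the two distributions, not through the leakage bounds. In addition, your proposed justification of the pointwise dominance $[\alphaentmax(\bz)]_{\mu^\star}\ge[\Softmax(\bz)]_{\mu^\star}$ via support truncation does not go through: truncation only forces sub-threshold coordinates to zero, while \emph{active} non-top coordinates can receive strictly more mass under $\alphaentmax$ than under $\Softmax$ (with nearly tied scores, the truncated mass is redistributed onto them), so a coordinate-wise domination argument fails and the top-coordinate comparison would need its own proof; note also that the theorem's index $\mu$ is fixed by $\bx\in S_\mu$ and need not coincide with the argmax $\mu^\star$ of the scores.

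The second gap is the $\alpha\ge2$ bound. The paper never analyzes the general closed form of $\alphaentmax$: it first reduces to the $\alpha=2$ case, $\norm{\calT(\bx)-\bxi_\mu}\le\norm{\calT_{\Sparsemax}(\bx)-\bxi_\mu}$ (leaning again on the monotone-in-$\alpha$ comparison), and then exploits that for $\alpha=2$ the exponent $1/(\alpha-1)=1$ gives the linear per-coordinate estimate $[\Sparsemax(\beta\bm{\Xi}^\sT\bx)]_\mu\le[\beta\bm{\Xi}^\sT\bx]_\mu-[\beta\bm{\Xi}^\sT\bx]_{(\kappa)}+1/\kappa$; summing over the $\kappa$ active coordinates produces exactly the factor $\kappa\,(\Max_{\nu\in[M]}\Braket{\bxi_\nu,\bx}-[\bm{\Xi}^\sT\bx]_{(\kappa)})+1/\beta$ in \eqref{eqn:eps_sparse_over_2}. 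Your plan to apply ``Lipschitz/sensitivity estimates'' to $[(\alpha-1)\bz-\tau(\bz)]_+^{1/(\alpha-1)}$ for general $\alpha>2$ has no such linear bound available (the exponent is below one and the threshold $\tau$ varies with $\alpha$), and the proposal does not explain how that route would land on the stated expression. The missing idea in the regime you yourself flag as the main obstacle is precisely this reduction to $\alpha=2$, after which the computation is a few lines.
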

\begin{corollary}[Noise-Robustness]
    In cases of noisy patterns with noise $\bm{\eta}$, i.e. $\Tilde{\bx}=\bx+\bm{\eta}$ (noise in query) or $\Tilde{\bxi}_\mu=\bxi_\mu+\bm{\eta}$ (noise in memory), the impact of noise $\bm{\eta}$ on the sparse retrieval error $\norm{\calT(\bx)-\bxi_\mu}$ is linear for  $\alpha\geq2$, while its effect on the dense retrieval error $\norm{\calT_{\text{Dense}}(\bx)-\bxi_\mu}$ (or $\norm{\calT(\bx)-\bxi_\mu}$ with $2\geq\alpha\geq1$) is exponential.
\end{corollary}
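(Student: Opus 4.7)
The corollary is essentially a direct reading of \cref{thm:eps_sparse_dense} once one substitutes the noisy quantities into the two bounds \eqref{eqn:eps_sparse_dense} and \eqref{eqn:eps_sparse_over_2}. The central elementary tool is Cauchy--Schwarz: for any memory pattern $\bxi_\nu$, the perturbation $\tilde{\bx}=\bx+\bm{\eta}$ shifts every inner product by a controlled amount,
\begin{equation*}
    \left|\Braket{\bxi_\nu,\tilde{\bx}}-\Braket{\bxi_\nu,\bx}\right|=\left|\Braket{\bxi_\nu,\bm{\eta}}\right|\leq m\norm{\bm{\eta}},
\end{equation*}
and symmetrically for $\tilde{\bxi}_\mu=\bxi_\mu+\bm{\eta}$. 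The strategy is to track how this shift propagates through each of the two retrieval-error bounds.

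For the sparse regime $\alpha\geq2$, I would substitute $\tilde{\bx}$ (resp.\ $\tilde{\bxi}_\mu$) into \eqref{eqn:eps_sparse_over_2}. The two data-dependent inner products inside the bracket are each shifted by at most $m\norm{\bm{\eta}}$, the leading $m$ itself changes by at most $\norm{\bm{\eta}}$ in the noise-in-memory case, and the constant prefactor $d^{\nicefrac{1}{2}}\beta$ is untouched. Consequently the whole right-hand side is perturbed \emph{additively} by an $O(\norm{\bm{\eta}})$ term, which gives the claimed linear noise dependence. For the regime $2\geq\alpha\geq1$ (and, by the same computation, for the dense model of \cite{ramsauer2020hopfield}), I would substitute instead into \eqref{eqn:eps_sparse_dense}: because the noise shift now lives inside the exponent $-\beta(\Braket{\bxi_\mu,\bx}-\Max_\nu\Braket{\bxi_\mu,\bxi_\nu})$, the bound is multiplied by a factor as large as $\exp\{\beta m\norm{\bm{\eta}}\}$, producing the exponential sensitivity stated in the corollary. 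Noise-in-memory and noise-in-query cases fall out from the same Cauchy--Schwarz step together with the triangle inequality applied to $m=\Max_\mu\norm{\bxi_\mu}$.

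The one subtle point, and the step I expect to require the most care, is the data-dependent sparsity index $\kappa(\beta\bm{\Xi}^\sT\bx)$: its defining inequality can in principle flip under a sufficiently large perturbation, which changes the active support of $\alphaentmax$ and hence the shape of the bound \eqref{eqn:eps_sparse_over_2}. I would handle this by using the crude but noise-independent upper bound $\kappa\leq M$ as a safety net, and then arguing that any perturbation-induced change in $\kappa$ interacts only multiplicatively with the $O(m\norm{\bm{\eta}})$ shift in the inner products, so the overall dependence on $\norm{\bm{\eta}}$ remains linear. Once this is in place, the qualitative contrast between the additive $O(\norm{\bm{\eta}})$ perturbation for $\alpha\geq2$ and the multiplicative $\exp\{O(\beta\norm{\bm{\eta}})\}$ perturbation for $\alpha<2$ is immediate from the structural form of the two bounds.
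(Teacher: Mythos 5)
Your proposal is correct and follows essentially the same route as the paper, which treats the corollary as a direct structural reading of \cref{thm:eps_sparse_dense}: a Cauchy--Schwarz shift of the inner products by $O(\norm{\bm{\eta}})$ enters the bound \eqref{eqn:eps_sparse_over_2} additively (linear impact) but sits inside the exponent of \eqref{eqn:eps_sparse_dense} (exponential impact). Your extra care about the data-dependent index $\kappa(\bz)$ goes beyond what the paper spells out and is sound, since the crude replacement $\kappa\le M$ together with $[\bm{\Xi}^\sT\bx]_{(\kappa)}\ge\Min_{\nu}\Braket{\bxi_\nu,\bx}$ yields a $\kappa$-free upper bound that still perturbs only linearly in $\norm{\bm{\eta}}$.
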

\begin{proof}
    See \cref{proof:eps_sparse_dense} for a detailed proof.
\end{proof}
Intuitively, \cref{thm:eps_sparse_dense} implies the sparse model converge faster to memory patterns than the dense model \cite{ramsauer2020hopfield}, and the larger sparsity leads the lower retrieval error.

\begin{lemma}[Memory Capacity Lower Bound]
\label{thm:memory_capacity}
Suppose the probability of successfully storing and retrieving memory pattern  is given by $1-p$.
The number of memory patterns sampled from a sphere of radius $m$ that the sparse Hopfield model can store and retrieve has a lower bound:
$M \ge \sqrt{p}C^\frac{d-1}{4}$,
where $C$ is the solution for $C=\nicefrac{b}{W_0({\exp\{a+\ln{b}\}})}$ with $W_0(\cdot)$ being the principal branch of Lambert $W$ function \cite{olver2010nist}, $a\coloneqq \nicefrac{4}{d-1}\big\{\ln\[\nicefrac{2m(\sqrt{p}-1)}{(R+\delta)}\]+1\big\}$ and $b\coloneqq \nicefrac{4m^2\beta}{5(d-1)}$.
For sufficiently large $\beta$, the sparse Hopfield model has a larger lower bound on the exponential-in-d memory capacity compared to that of dense counterpart \cite{ramsauer2020hopfield}:
$
M\ge M_{\text{Dense}}.
$
\end{lemma}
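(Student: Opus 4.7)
The plan is to mirror the template used by \citet{ramsauer2020hopfield} for the exponential capacity of the dense modern Hopfield model, substituting the sharper sparse retrieval bound from \cref{thm:eps_sparse_dense} in place of the dense one. First, I would translate the successful storage-and-retrieval condition of \cref{def:stored_and_retrieved} into an analytic requirement: for every $\mu\in[M]$ and every query $\bx\in S_\mu$, the retrieval error must fit inside the sphere $S_\mu$ up to a prescribed margin, i.e. $\|\calT(\bx)-\bxi_\mu\|\le R+\delta$ for some $\delta\ge 0$ with $2(R+\delta)\le \Min_{\mu\ne\nu}\|\bxi_\mu-\bxi_\nu\|$. Plugging the $2\ge\alpha\ge1$ bound of \cref{thm:eps_sparse_dense} into this requirement yields the scalar inequality
\begin{equation*}
2m(M-1)\exp\!\big\{-\beta\big(\Braket{\bxi_\mu,\bx}-\Max_{\nu\ne\mu}\Braket{\bxi_\mu,\bxi_\nu}\big)\big\}\le R+\delta,
\end{equation*}
which, after taking logarithms, becomes a linear constraint between $\ln M$ and the worst-case pairwise overlap.

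Next, I would perform the probabilistic step. Sample the $\bxi_\mu$'s i.i.d.\ uniformly from the sphere of radius $m$. For two independent uniform unit vectors in $\R^d$, the squared inner product concentrates on $1/d$ with sub-Gaussian tails; combining this with a union bound over the $\binom{M}{2}$ pairs bounds $\Max_{\mu\ne\nu}\Braket{\bxi_\mu,\bxi_\nu}$ with failure probability at most $p$. Substituting this high-probability upper bound back into the retrieval inequality turns it into a transcendental inequality of the form $\ln M \le \tfrac{d-1}{4}\ln C - \tfrac12\ln p$, where $C$ absorbs the geometric and thermodynamic constants $m,\beta,R,\delta,d$. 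Solving this inequality for the largest admissible $M$ gives exactly the claimed $M\ge \sqrt{p}\,C^{(d-1)/4}$; the fixed-point equation $C=b/W_0(\exp\{a+\ln b\})$ with the stated $a,b$ arises because isolating $C$ requires inverting an expression of the form $C\,e^{\alpha C}=\gamma$, which is the defining equation of the Lambert $W$ function.

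Finally, for the comparison with the dense bound $M_{\text{Dense}}$, I would invoke \cref{thm:eps_sparse_dense} once more: since $\|\calT(\bx)-\bxi_\mu\|\le\|\calT_{\text{Dense}}(\bx)-\bxi_\mu\|$ uniformly on $S_\mu$, the constant $C$ produced by the sparse analysis dominates its dense counterpart term-by-term in the sufficient condition, so $M\ge M_{\text{Dense}}$ follows monotonically once $\beta$ is large enough that the logarithmic corrections are negligible.

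I expect the main obstacle to lie in the probabilistic step: carefully quantifying the sub-Gaussian tail of $\Braket{\bxi_\mu,\bxi_\nu}$ on the $d$-sphere, tracking all the constants through the union bound, and arranging them so that the resulting transcendental inequality admits a closed-form solution in terms of $W_0$. A secondary subtlety is the choice of the margin $\delta$ and its interaction with the separation radius $R$, which must be set so that the sparse retrieval-error bound translates exactly into the clean $\sqrt{p}C^{(d-1)/4}$ expression rather than an asymptotically equivalent but messier form; the remaining steps are essentially algebraic manipulation and invocation of \cref{thm:eps_sparse_dense,lemma:convergence_sparse}.
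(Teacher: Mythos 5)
Your high-level architecture (retrieval-error bound $\Rightarrow$ separation requirement $\Rightarrow$ probabilistic control of pairwise overlaps via a union bound $\Rightarrow$ transcendental equation solved with $W_0$ $\Rightarrow$ monotonicity for $M\ge M_{\text{Dense}}$) is the same as the paper's, which establishes the well-separation condition $\Delta_\mu\ge\tfrac1\beta\ln\big(2(M-1)m/(R+\delta)\big)+2mR$ and then re-runs the capacity proof of \citet{hu2023SparseHopfield}, since the generalized sparse model inherits the same condition. However, the step you yourself flag as the main obstacle is a genuine gap, and the tool you propose does not close it. The paper does not use sub-Gaussian tails for $\Braket{\bxi_\mu,\bxi_\nu}$: it writes $\Delta_{\min}=m^2(1-\cos\theta_{\min})$, bounds $\cos\theta_{\min}\le 1-\theta_{\min}^2/5$, and invokes the exact minimal-angle (spherical-cap, surface-measure) estimate of \citet{brauchart2018random}, giving a failure probability of order $\gamma_{d-1}M^2\vartheta^{d-1}$, i.e.\ polynomial in the angular threshold with exponent $d-1$. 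It is exactly this $M^2\vartheta^{d-1}$ scaling, combined with the ansatz $M=\sqrt{p}\,C^{(d-1)/4}$, that collapses the condition into $aC+C\ln C-b=0$ and hence $C=b/W_0(\exp\{a+\ln b\})$ with the stated constants (the $5$ in $b$ comes from the $\theta^2/5$ inequality, the powers of $d-1$ from the cap-area exponent). A sub-Gaussian surrogate of the form $\exp\{-c\,d\,t^2\}$ is loose precisely in the regime that matters here and produces a structurally different relation among $M$, $p$, $d$, $\beta$; as written, your union bound gives no route to the specific $a$, $b$, and the $\sqrt{p}\,C^{(d-1)/4}$ parameterization, which is the entire content of the lemma.

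Two secondary issues. First, the inequality you plug in still contains the query-dependent quantity $\Braket{\bxi_\mu,\bx}$; the paper must first convert it to the pattern-only separation $\Delta_\mu$ via a Cauchy--Schwarz step uniform over $\bx\in S_\mu$, which is the origin of the $+2mR$ term in the well-separation condition---without this conversion the randomness is over the patterns only and your probabilistic step cannot even be formulated. Second, $\delta$ in the statement is not a margin you are free to prescribe: it is the dense-minus-sparse retrieval-error gap $\norm{\calT_{\text{Dense}}(\bx)-\bxi_\mu}-\norm{\calT(\bx)-\bxi_\mu}\ge 0$ supplied by \cref{thm:eps_sparse_dense}, and it is through this $\delta$ entering $a$ (together with the monotonicity and asymptotics of $W_0$ for large $\beta$) that the paper obtains $M\ge M_{\text{Dense}}$, rather than through a term-by-term domination of ``the constant $C$'' as you sketch.
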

\begin{proof}
See \cref{proof:memory_capacity} for a detailed proof.
\end{proof}
\Cref{thm:memory_capacity} offers a lower bound on the count of patterns effectively stored and retrievable by $\calT$ with a minimum precision of $R$, as defined in \cref{def:stored_and_retrieved}. 
Essentially, the capacity of the generalized sparse Hopfield model to store and retrieve patterns grows exponentially with pattern size $d$. This mirrors findings in \cite{hu2023SparseHopfield,ramsauer2020hopfield}.
Notably, when $\alpha=2$, the results of \cref{thm:eps_sparse_dense} and \cref{thm:memory_capacity} reduce to those of \cite{hu2023SparseHopfield}.

\subsection{Generalized Sparse Hopfield  (GSH) Layers for Deep Learning} 
\vspace{-0.3em}
Now we introduce the Generalized Sparse Hopfield (GSH) layers for deep learning, by drawing the connection between the generalized sparse Hopfield model and attention mechanism.

\paragraph{Generalized Sparse Hopfield ($\mathtt{GSH}$) Layer.}
Following \cite{hu2023SparseHopfield}, we extend \eqref{eqn:retrieval_dyn} to multiple queries $\bX\coloneqq\{\bx_i\}_{i\in[T]}$.
From previous section, we say that the Hopfield model, as defined by \eqref{eqn:H_entmax} and \eqref{eqn:retrieval_dyn}, functions within the associative spaces $\bX$ and $\bm{\Xi}$.
Given any \textit{raw} query ${\bR}$ and memory $\bY$ that are input into the Hopfield model\footnote{The \textit{raw} query ${\bR}$ and memory $\bY$ may originate from data, external sets, or hidden representations throughout a given deep learning pipeline.
They are not necessarily
usable as $\bX$ and $\bm{\Xi}$. 
Therefore, to use \eqref{eqn:retrieval_dyn}, they must be mapped into $d$-dimensional associative spaces.}, we compute $\bX$ and $\bm{\Xi}$ as $\bX^\sT={\bR}\bW_Q\coloneqq \bQ$ and $\bm{\Xi}^\sT=\bY \bW_K \coloneqq \bK$, using matrices $\bW_Q$ and $\bW_K$.
Therefore, we rewrite $\calT$ in 
\eqref{eqn:retrieval_dyn} as
$\(\bQ^{\text{new}}\)^\sT = \bK^\sT \alphaentmax\(\beta \bK \bQ^\sT\)$.
Taking transpose 
and projecting $\bK$ to $\bV$ with $\bW_V$, we have
\begin{align}
    \bZ\coloneqq \bQ^{\text{new}} \bW_V=\alphaentmax\(\beta  \bQ\bK^\sT\)\bK \bW_V
    =\alphaentmax\(\beta  \bQ\bK^\sT\)\bV,
\end{align}
which leads to the attention mechanism with $\alphaentmax$ activation function.
Plugging back the raw patterns $\bR$ and $\bY$, 
we arrive the foundation of the Generalized Sparse Hopfield ($\mathtt{GSH}$) layer,
\bea
\label{eqn:GSH}
\mathtt{GSH}\(\bR,\bY\)=\bZ= \alphaentmax\(\beta  {\bR}\bW_Q \bW_K^\sT\bY^\sT\)\bY\bW_K \bW_V.
\eea
By \eqref{eqn:eps_sparse_dense}, $\calT$ retrieves memory patterns with high accuracy after a single activation.
This allows \eqref{eqn:GSH} to integrate with deep learning architectures just like \cite{hu2023SparseHopfield,ramsauer2020hopfield}.
\begin{remark}
    $\alpha$ is a learnable parameter \cite{correia2019adaptively}, enabling $\mathtt{GSH}$ to learn input sparsity.
\end{remark}

\paragraph{$\mathtt{GSHPooling}$ and $\mathtt{GSHLayer}$ Layers.}
Following \cite{hu2023SparseHopfield}, we introduce two more variants: the
$\mathtt{GHSPooling}$ and  $\mathtt{GSHLayer}$ layers.
They are similar to the $\mathtt{GSH}$, and only differ in how to obtain the associative sets $\bQ, \bY$.
For $\mathtt{GHSPooling}\(\bY\)$, $\bK = \bY \bW_K, \bV = \bK \bW_V$, and $\bQ$ is a learnable variable independent from any input.
For $\mathtt{GSHLayer}\(\bR, \bY\)$, we have $\bK = \bV = \bY$, and $\bQ = \bR$.
Note that $\mathtt{GSHLayer}$ can have $\bQ$ as learnable parameter or as an input.
Where if $\bQ$ was served as an input, the whole $\mathtt{GSHLayer}$ has no learnable parameters and can be used as a lookup table. 
We provide an example of memory retrieval for image completion using $\mathtt{GSHLayer}$ in \cref{sec:example_retrieval}.

\section{STanHop-Net: Sparse Tandem Hopfield Networt}
\label{sec:method}

\begin{figure*}[t]

    \centering
    \includegraphics[width=\textwidth]{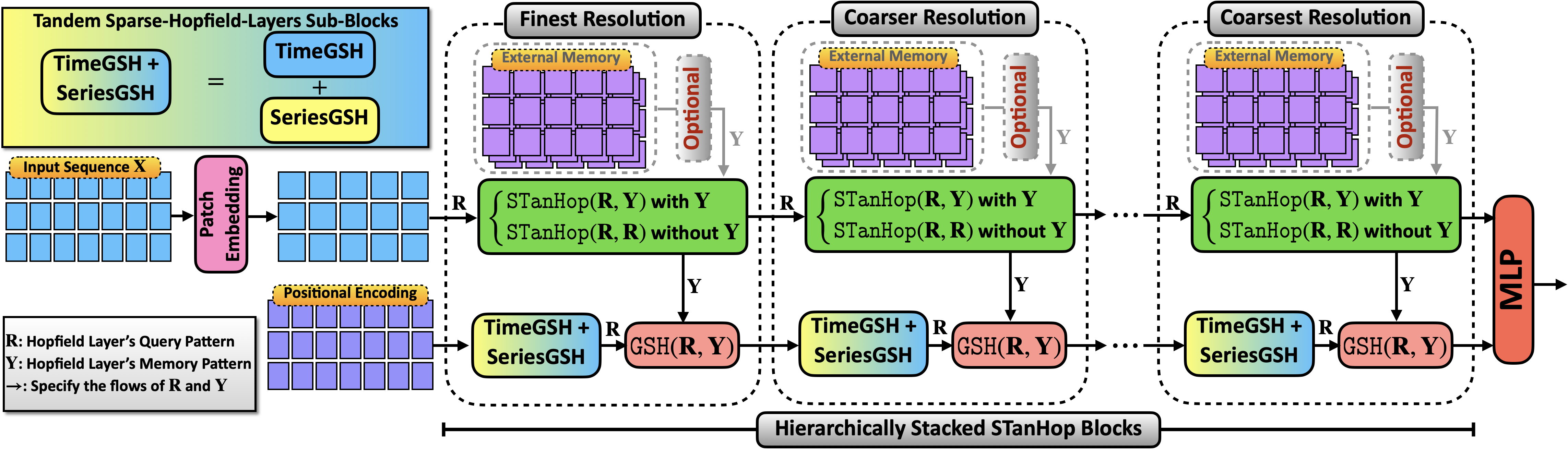}

    \caption{\small
    \textbf{STanHop-Net Overview.}
    \textbf{Patch Embedding:}
    Given an input multivariate time series $\bX\in\R^{ C\times T \times d}$ consisting $C$ univariate series, $T$ time steps and $d$ features, the patch embedding aggregates temporal information for each univariate series, subsequently reducing temporal dimensionality from $T$ to $P =T/P$ for all $d$ features. 
    \textbf{STanHop Block:}
    The STanHop block leverages the Generalized Sparse Hopfield (GSH) model (\cref{sec:model}). 
    It captures time series representations from its input through two tandem sparse-Hopfield-layers sub-blocks (i.e. TimeGSH and SeriesGSH, see \cref{fig:StanHop}), catering to both temporal and cross-series dimensions. 
    \textbf{STanHop-Net:}
    Using a stacked encoder-decoder structure, STanHop-Net facilitates hierarchical multi-resolution learning. 
    This design allows STanHop-Net to extract  distill representations from both temporal and cross-series dimensions across multiple scales (multi-resolution in a hardwired fashion via coarse-graining layers, see \cref{sec:coarse}). 
    Moreover, each stacked block has optional external memory plugin functionalities for enhanced predictions (\cref{sec:memoryplugin}).
    These representations from all resolutions are then merged, providing a holistic  representation learning for downstream predictions specially tailored for time series data.}
    \label{fig:pipeline}
\end{figure*}

In this section, we introduce a Hopfield-based deep architecture (STanHop-Net) tailored for memory-enhanced learning of noisy multivariate time series. 
These additional memory-enhanced functionalities enable STanHop-Net to effectively handle the problem of slow response to sudden or rare events (e.g, 2021 pandemic meltdown in financial market) by making predictions using both in-context inputs (e.g., historical data) and external stimuli (e.g., real-time or relevant past data). 
In the following, we consider multivariate time series $\bX\in\R^{ C\times T \times d}$ comprised of $C$ univariate series. 
Each univariate series has $T$ time steps and $d$ features.
\subsection{Patched Embedding}

Motivated by \cite{zhang2023crossformer}, we use a patching technique on model input that groups adjacent time steps into subseries patches.
This method extends the input time horizon without altering token length, enabling us to capture local semantics and critical information more effectively, which is often missed at the point-level. 
We define the multivariate input sequence as $\bX\in\R^{C\times T\times d}$, where $C, T, d$ denotes the number of variates, number of time steps and the number of dimensions of each variate.
Given a time series sequence $\bX =\{\bx_1, ..., \bx_T\}$ and a patch size $P$, the patching operation divides $X$ into $\bS = \{ \bs_1, ..., \bs_{T/P} \}$.
For each patched sequence $\bs_i \in \mathbb{R}^{C\times P\times d}$ for $i\in[T/P]$, we define the patched embedding as 
$\text{EMB} \( \bs_i \) = \mathbf{E}^{\text{emb}} \bs_i + \mathbf{E}^{\text{pos}} \( i \) \in \R^{D_{\text{emb}}}$,
where  $D_{\text{emb}}$ is the embedding dimension, $\mathbf{E}^{\text{emb}} \in \mathbb{R}^{D_{\text{emb}} \times P}$, and $\mathbf{E}^{\text{pos}} \in \mathbb{R}^{T/P \times D_{\text{emb}}}$ is the positional encoding.
When $T$ is not divisible to $P$, assuming $T = P \times C_{n} + c$ with $C_n,c\in \mathbb{N}_+$, we pad the sequence by repeating the first $c$ elements in the sequence.
Consequently, this patching embedding significantly improves computational efficiency and memory usage.

\subsection{STanHop: Sparse Tandem Hopfield Block}
\label{sec:STanHop}

We introduce STanHop (\textbf{S}parse \textbf{Tan}dem \textbf{Hop}field) block which comprises one $\mathtt{GSHLayer}$-based external memory plugin module, and  two tandem sub-blocks of $\mathtt{GSH}$ layers to process both time and series dimensions, i.e. TimeGSH and SeriesGSH sub-blocks in \cref{fig:StanHop}.
In essence, STanHop not only sequentially extracts temporal and cross-series information of multivariate time series with (learnable) data-dependent sparsity, but also utilizes both acquired (in-context) representations and external stimulus through the memory plugin modules for the downstream prediction tasks.

Given a hidden vector, $\bR \in \mathbb{R}^{ C \times T \times D_{\text{hidden}}}$, and its corresponding external memory set $\bY \in \mathbb{R}^{M \times C \times T \times D_{\text{hidden}}}$, where $C$ denotes the channel number and $T$ denotes the number of time segments (patched time steps),
To clarify, the $\mathtt{GSH}$ layer only operates on the last two dimensions, i.e., $t\in T$ and $d\in D_{\text{hidden}}$. 
Thus, the operation $\mathtt{GSH} \( \bZ, \bZ \)$ extracts information of the temporal dynamics of $\bZ$ from the segmented time series.
Here we define the dimensional transpose operation $\mathsf{T}$.
For a given tensor $\bX \in \mathbb{R}^{a \times b \times c}$, we have
$ \mathsf{T}_{abc}^{acb}(\bX)\coloneqq \bX^\prime \in \mathbb{R}^{a \times c \times b}$, i.e. this operation rearranges the dimensions of the original tensor $\bX$ from $(a, b, c)$ to a new order $(a, c, b)$.
Given a set of query pattern $ \bQ \in \R^{ \text{len}_Q \times D_{\text{hidden}}}$, we define a single block of STanHop as
\begin{align*}
    \bZ &= \mathtt{Memory}\( \bR, \bY \), \tag{Memory Plugin Module, see \cref{sec:memoryplugin}}\\
    \bZ^t &=  \mathsf{T}^{cth}_{tch} \( {\rm{LayerNorm} \(\bZ + \text{\rm{FF}} \( \mathtt{GSH}(\bZ, \bZ) \) \)} \) \in \R^{T \times C \times D_\text{hidden}} \tag{Temporal $\mathtt{GSH}$}, \\
    \bZ^p &= \mathtt{GSHPooling} \(  \bR^\star, \bZ^t \) \in \R^{T \times \text{len}_Q \times D_{\text{hidden}}},  \tag{$\bR^\star$ is learnable and randomly initialized} \\
    \bZ^c &= \mathtt{GSH} \(  \bZ^t, \bZ^p \)  \in \R^{T \times C \times D_\text{hidden}} \tag{Cross-series  $\mathtt{GSH}$}, \\
    \bZ^* &=   \rm{LayerNorm} \( \bZ^t + \rm{FF} \( \bZ^c \)\) \in \R^{T \times C \times D_\text{hidden}},\\
    \bZ_{\text{out}} &= \rm{LayerNorm} \( \bZ^* + \rm{FF}(\bZ^*) \) \in \R^{T \times C \times D_\text{hidden}},
\end{align*}
where $\mathtt{Memory}(\cdot,\cdot)$ is the external memory plugin module introduced in the next section.
Note that, if we choose to turn off the external memory functionalities (or external memory is not available) during training, we set $\bY = \bR$ such that $\mathtt{Memory}(\bR,\bR)=\bR$ (see \cref{sec:memoryplugin} for details).
Here $\mathtt{GSHPooling}(\bR^\star,\bZ^t)$ takes $\bZ^t$ and a randomly initialized query $\bR^\star$ as input.
Importantly, $\bR^\star$ not only acts as learnable prototype patterns learned by pooling over $\bZ^t$, but also as a knob to control the computational complexity by picking the hidden dimension of $\bR^\star$.
We summarize the STanHop block as 
    $\bZ_{\text{out}} = \mathtt{STanHop} \( \bR, \bY \) \in \R^{T \times C \times D_{\text{hidden}}}$.

\begin{figure*}[t]
    \centering
    \includegraphics[width=\textwidth]{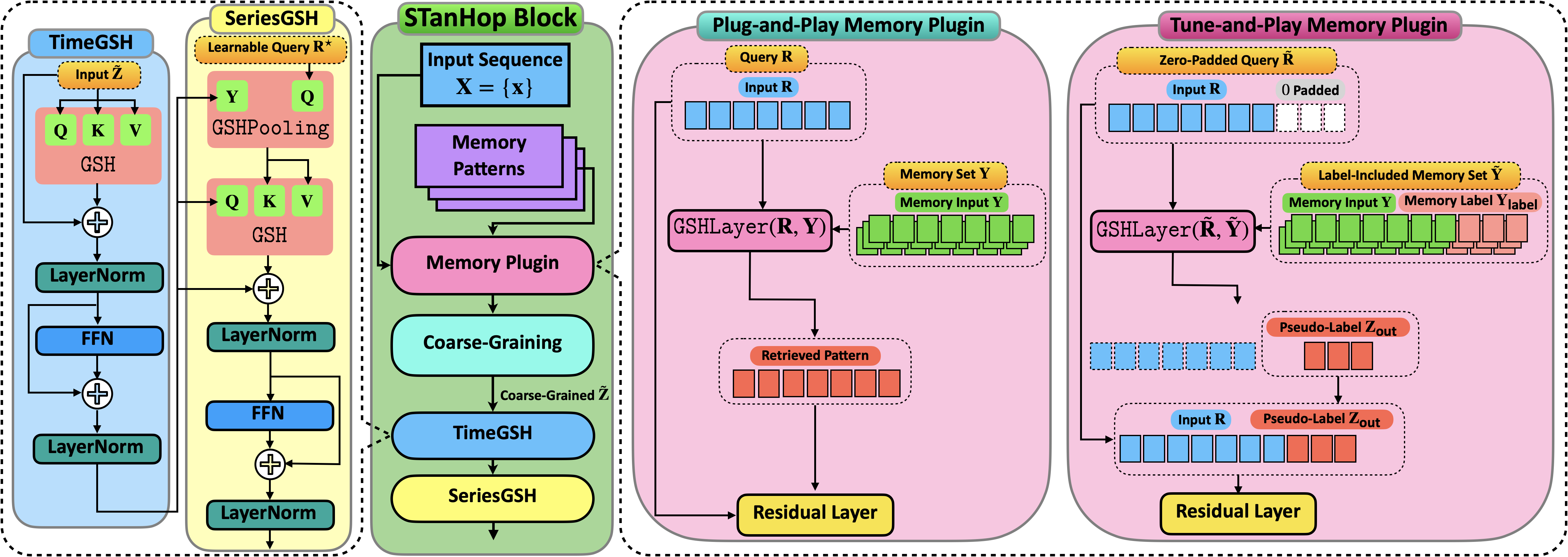}

    \caption{\small
    \textbf{STanHop Block.}
    \textbf{(Left)} Tandem Hopfield-Layer Blocks: TimeGSH and SeriesGSH. 
    Notably, in the $\mathtt{GSHPooling}$ block of SeriesGSH, the learnable query $\bR^\star$ is initialized randomly and employed to store learned prototype patterns from temporal representations extracted during  training. 
    \textbf{(Right)} Plug-and-Play and Tune-and-Play Memory Plugins.}
    \label{fig:StanHop}
\end{figure*}
\subsection{External Memory Plugin Module and Pseudo-Label Retrieval}
\label{sec:memoryplugin} 

Here we introduce the external memory modules (i.e., $\mathtt{Memory}(\cdot,\cdot)$  in \cref{sec:STanHop} or Memory Plugin blocks in \cref{fig:StanHop}) for external memory functionalities.
These modules are tailored for time series modeling by incorporating task-specific 
 supplemental information (such as relevant historical data for sudden or rare events predictions) for subsequent inference.
To this end, we introduce two memory plugin modules:  \textbf{Plug-and-Play Memory Plugin} and \textbf{Tune-and-Play Memory Plugin}.
For query $\bR$ and memory $\bY$, we denote them by $\mathtt{PlugMemory} ( \bR, \bY )$ and $\mathtt{TuneMemory} ( \bR,\bY )$.

    \paragraph{Plug-and-Play Memory Plugin.}
    This module enables performance enhancement utilizing external memory without any fine-tuning.
    Given a trained STanHop-Net (without external memory), we use a parameter fixed $\mathtt{GSHLayer}$ for memory retrieval.
    Explicitly, given an input sequence $\bR \in \R^{|R| \times D_{\text{hidden}}}$ and a corresponding external memory set $\bY \in \R^{M \times |R| \times d}$, where $|\bR|$ and  $D_{\text{hidden}}$ are the sequence length and hidden dimension of $\bR$ respectively.  
    We define the memory retrieval operation as $\bZ = \mathtt{PlugMemory}\( \bR, \bY \)= \text{LayerNorm}\( \bR + \mathtt{GSHLayer}\( \bR, \bY \) \)$ with all parameters fixed. 

    \paragraph{Tune-and-Play Memory Plugin.} 
    Here we propose the idea of ``pseudo-label retrieval" using $\mathtt{GSHLayer}$ for time series prediction.
    Specifically, we use modern Hopfield models' memory retrieval mechanism to generate pseudo-labels for a given $\bR$ from a \textit{label-included} memory set $\Tilde{\bY}$, thereby enhancing predictions.
    Intuitively, this method supplements predictions by learning from demonstrations and we use the retrieved pseudo-labels (i.e., learned \textit{pseudo}-predictions) as additional features.
    An illustration of this mechanism is shown in \cref{fig:StanHop}.
    Firstly, we prepare the \textit{label-included} external memory as $\Tilde{\bY} = \bY \oplus \bY_{\text{label}}$, where $\Tilde{\bY}$ is the concatenation of memory sequences and their corresponding labels.
    Next, we denote the padded $\bR$ as $\Tilde{\bR}$, where $\Tilde{\bR} \in \R^{|\Tilde{\bY}| \times d}$.
    And we utilize the $\mathtt{GSHLayer}$ to retrieve the pseudo-label from the memory sequences as $\bZ_{\text{out}}$.
    Then we concatenate $\bR$ and the pseudo-label $\bZ_{\text{out}}$ and send it to a feed forward layer to encode the pseudo-label information: $\bZ_{\text{out}} = \mathtt{GSHLayer}( \Tilde{\bR}, \Tilde{\bY} )$, $\bZ_{\text{pseudo}} = \bR \oplus \bZ_{\text{out}}$ and then $\Tilde{\bZ} = \text{LayerNorm} \( \text{FF}\( \bZ_{\text{pseudo}} \) + \bZ_{\text{pseudo}} \)$.
    In other words, we first obtain a weight matrix from the association between $\Tilde{\bR}$ and $\Tilde{\bY}$, and then multiply this weight matrix with $\bY_{\text{label}}$ to obtain $\bZ_{\text{out}}$.
    We summarize the Tune-and-Play memory plugin as $\Tilde{\bZ} = \mathtt{TuneMemory}( {\bR}, {\bY} )$.

\subsection{Coarse-Graining}
\label{sec:coarse}

To cope with the intrinsic multi-resolution inductive bias of time series,  we introduce a coarse-graining layer in each  STanHop block.
Given an hidden vector output, $\bZ \in \mathbb{R}^{C \times T \times D_{\text{hidden}}}$, grain level $\Delta$, and a weight matrix $\mathbf{W} \in \R^{ D_{\text{hidden}} \times  2  D_{\text{hidden}}}$, and $\oplus$ denotes the concatenation operation.
We denote $\bZ_{c,t,d}$ with $c\in[C],t\in[T],d\in[D_{\text{hidden}}]$ as the element representing the $c$-th series, $t$-th time segment, and $d$-th dimension.
The coarse-graining layer consists a vector concatenation and a matrix multiplication:
   $\hat{ \bZ }_{c,t, :} = \bZ_{c,t, :} \oplus \bZ_{c, t+\Delta , :} \in \R^{2D_{\text{hidden}}}$ and then $
    \Tilde{\bZ}_{c,t, :} = \mathbf{W} \hat{\bZ}_{c,t, :} \in \R^{D_{\text{hidden}}}$,
such that $\hat{\bZ}\in \R^{C\times T\times 2D_{\text{hidden}}}$ and $\Tilde{\bZ}\in \R^{C\times T\times D_{\text{hidden}}}$, similar to \cite{liu2021swin, zhang2023crossformer}.
Operationally, it first obtains the representation of smaller resolution, and then distills  information via a linear transformation.
We express this course-graining layer as 
$\rm{CoarseGrain}\( \bZ, \Delta \) = \Tilde{\bZ}$.

\subsection{Multi-Layer STanHop for Multi-Resolution Learning}

Finally, we construct the STanHop-Net by stacking STanHop blocks in a hierarchical fashion, enabling multi-resolution feature extraction with resolution-specific sparsity.
Given a prediction window size $P \in \R$, number of layer $L \in \R$, and a learnable positional embedding for the decoder $\bE_{\text{dec}}$, we construct our multi-layer STanHop as an autoencoder structure.
The encoder structure consists of a course-graining operation first, following by an $\mathtt{STanHop}$ layer.
The decoder follows the similar structure as the standard transformer decoder \cite{vaswani2017attention}, but we replace the cross-attention mechanism to a $\mathtt{GSH}$ layer, and self-attention layer as $\mathtt{STanHop}$ layer.
We summarize the STanHop-Net network structure in \cref{fig:pipeline}, and in \cref{alg:stacked_STanHop} in appendix.

\section{Experimental Studies}
\label{sec:exp}

We demonstrate the validity of STanHop-Net and external memory modules by testing them on various experimental settings with both synthetic and real-world datasets.

\subsection{Multivariate Time Series Prediction without external memory}
\label{sec:MTS}

\cref{table:result} includes the experiment results of the multivariate time series predictions using STanHop-Net without external memory.
We implement three variants of STanHop-Net: \textbf{StanHop-Net}, \textbf{StanHop-Net (D)} and \textbf{StanHop-Net (S)}, with $\mathtt{GSH}$, $\mathtt{Hopfield}$ \cite{ramsauer2020hopfield} and $\mathtt{SparseHopfield}$ \cite{hu2023SparseHopfield} layers respectively. 
Our results show that in 47 out of 58 cases, STanHop-Nets rank in the top two, delivering top-tier performance compared to all baselines.

\textbf{Data.}
Following  \cite{zhang2023crossformer, zhou2022fedformer, wu2021autoformer}, we use 6 realistic datasets:
ETTh1 (Electricity Transformer Temperature-hourly), ETTm1
(Electricity Transformer Temperature-minutely), WTH (Weather),
ECL
(Electricity Consuming Load),
ILI
(Influenza-Like Illness),
Traffic.
The first four datasets are split into train/val/test ratios of 14/5/5, and the last two are split into 7/1/2.
\textbf{Metrics.} We use Mean Square Error (MSE) and Mean Absolute Error (MAE) as accuracy metrics.
\textbf{Setup.}
Here we use the same setting as in \cite{zhang2022crossformer}: multivariate time series predictions tasks on 6 real-world datasets.
For each dataset, we evaluate our models with several different prediction horizons.
For all experiments, we report the mean MSE, MAE over 10 runs.
\textbf{Baselines.}
We benchmark our method against 5 leading methods listed in \cref{table:result}. 
Baseline results are quoted from competing papers when possible and reproduced otherwise.
\textbf{Hyperparameters.}
For each experiment, we optimize the hyperparameters using the ``sweep" function from Weights and Biases \cite{biewald2020experiment}. We conduct 100 random search iterations for each setting, selecting the best set based on the validation performance.

For datasets, hyperparameter tuning, implementations and training details, please see \cref{sec:exp_detail}.

\begin{table}[h!]
\centering
\caption{\small
\textbf{Accuracy Comparison for Multivariate Time Series Predictions without External Memory.}
We implement 3 STanHop variants, \textbf{STanHop-Net (D)} with \textbf{D}ense $\mathtt{Hopfield}$ layer \cite{ramsauer2020hopfield}, \textbf{STanHop-Net (S)} with \textbf{S}parse $\mathtt{SparseHopfield}$ layer \cite{hu2023SparseHopfield} and \textbf{STanHop-Net} with our $\mathtt{GSH}$ layer respectively. 
We report the average Mean Square Error (MSE) and Mean Absolute Error (MAE) metrics of 10 runs, with variance omitted as they are all $\le 0.44$\%. 
We benchmark our method against leading transformer-based methods (FEDformer \cite{zhou2022fedformer}, Informer \cite{zhou2021informer} and Autoformer \cite{wu2021autoformer}, Crossformer \cite{zhang2022crossformer}) and a linear model with seasonal-trend decomposition (DLinear \cite{zeng2023transformers}). 
We evaluate each dataset with different prediction horizons (showed in the second column).
We have the best results \textbf{bolded} and the second best results \underline{underlined}.
In 47 out of 58 settings, STanHop-Nets rank either first or second.
Our results indicate that our proposed STanHop-Net delivers consistent top-tier performance compared to all the baselines, even without external memory.
}
\resizebox{\textwidth}{!}{    
\begin{tabular}{ccccccccccccccccccccc}
\toprule
 \multicolumn{2}{c}{Models} &  \multicolumn{2}{c}{FEDFormer} &
 \multicolumn{2}{c}{DLinear} &
 \multicolumn{2}{c}{Informer} & \multicolumn{2}{c}{Autoformer} & \multicolumn{2}{c}{Crossformer} &
 \multicolumn{2}{c}{\footnotesize\textbf{STanHop-Net (D)}} &
 \multicolumn{2}{c}{\footnotesize\textbf{STanHop-Net (S)}} &
 \multicolumn{2}{c}{\textbf{STanHop-Net}} 
 \\
\midrule
 \multicolumn{2}{c}{Metric} & MSE & MAE & MSE & MAE & MSE & MAE & MSE & MAE & MSE & MAE & MSE & MAE  & MSE & MAE & MSE & MAE  \\
\midrule
 \multirow{6}{1em}{\rot{ETTh1}} & 24 & 0.318 & 0.384 & 0.312 & \underline{0.355} & 0.577 & 0.549 & 0.439 & 0.440 & 0.305 & 0.367 & 0.301 & 0.363 & \underline{0.298} & {\underline{0.360}} & \textbf{0.294}  &  {\textbf{0.351}} \\ 
& 48 & \underline{0.342} & 0.396 & {0.352} & \textbf{0.383} & 0.685 & 0.625 & 0.429 & 0.442 & 0.352 & 0.394 & {0.356} & 0.406 & 0.355 & {0.399} & \textbf{0.340}  & \underline{0.387}  \\ 
 & 168 & {0.412} & 0.449 & {0.416} & \textbf{0.430} & 0.931 & 0.752 & 0.493 & 0.479 & \underline{0.410} & 0.441  & \textbf{0.398} & 0.440 & 0.419 & 0.458  & {\textbf{0.398}} & \underline{0.437} \\ 
& 336 & 0.456 & 0.474 & \underline{0.450} & \textbf{0.452} & 1.128 & 0.873 & 0.509 & 0.492 & \textbf{0.440} & \underline{0.461} & 0.458 & 0.472 & 0.484 & 0.484 & \underline{0.450} & 0.472 \\ 
& 720 & 0.521 & 0.515 & \textbf{0.484} & \textbf{0.501} & 1.215 & 0.896 & 0.539 & 0.537 & 0.519 & 0.524 & 0.516 & 0.522 & 0.541 & 0.533 & \underline{ 0.512} & {\underline{0.511}} \\ 
\midrule
 \multirow{6}{1em}{\rot{ETTm1}} & 24 & 0.290 & 0.364 & 0.217 & 0.289 & 0.323 & 0.369 & 0.410 & 0.428 & 0.211 & 0.293 & 0.205 & 0.278 & \textbf{0.191} & \textbf{0.270} & \underline{0.195} & \underline{0.273} \\ 
& 48 & 0.342 & 0.396 & \underline{0.278} & \textbf{0.330} & 0.494 & 0.503 & 0.483 & 0.464 & 0.300 & 0.352 & 0.303 & 0.340 & 0.293 & 0.341 & \textbf{0.270} & \underline{0.333} \\ 
 & 96 & 0.366 & 0.412 & \underline{0.310} & \underline{0.354} & 0.678 & 0.614 & 0.502 & 0.476 & 0.320 & 0.373 & 0.325 & 0.377 & 0.322 & 0.362 & \textbf{0.286} & {\textbf{0.352}} \\ 
& 288 & 0.398 & 0.433 & \textbf{0.369} & \textbf{0.386} & 1.056 & 0.786 & 0.604 & 0.522 & 0.404 & 0.427 & 0.410 & 0.429 & 0.395 & 0.413 & \underline{0.373} & \underline{0.405} \\ 
& 672 & 0.455 & 0.464 & \underline{0.416} & \textbf{0.417} & 1.192 & 0.926 & 0.607 & 0.530 & 0.569 & 0.528 & 0.574 & 0.516 & 0.556 & 0.510 & \textbf{0.400} & \underline{0.460} \\ 
\midrule
 \multirow{6}{1em}{\rot{ECL}} & 48 & 0.229 & 0.338 & \underline{0.155} & \underline{0.258} & 0.344 & 0.393 & 0.241 & 0.351 & 0.156 & 0.255 & 0.159 & 0.264 & {0.170} & {0.273} & {\textbf{0.152}} &{\textbf{0.252}} \\ 
& 168 & {0.263} & {0.361} & \textbf{0.195} & \textbf{0.287} & 0.368 & 0.424 & 0.299 & 0.387 & 0.231 & 0.309 & 0.296 & 0.368 & {0.288} & {0.373} & {\underline{0.227}} & {\underline{0.304}} \\ 
 & 336 & \underline{0.305} & 0.386 & \textbf{0.238} & \textbf{0.316} & 0.381 & 0.431 & 0.375 & 0.428 & 0.323 & \underline{0.369} & 0.326 & 0.374 & {0.317} & {0.375} & {{0.317}} & {\underline{0.369}} \\ 
& 720 & \underline{0.372} & 0.434 & \textbf{0.272} & \textbf{0.346} & 0.406 & 0.443 & {0.377} & 0.434 & 0.404 & \underline{0.423} & 0.412 & 0.428 & 0.440 & 0.450 & 0.435  & 0.447 \\ 
& 960 & 0.393 & 0.449 & \textbf{0.299} & \textbf{0.367} & 0.460 & 0.548 & \underline{0.366}& \underline{0.426} & 0.433 & 0.438 & 0.446 & 0.447 & 0.467 & 0.463 & 0.443 & 0.446 \\ 
\midrule
 \multirow{6}{1em}{\rot{WTH}} & 24 & 0.357 & 0.412 & 0.357 & 0.391 & 0.335 & 0.381 & 0.363 & 0.396 & \underline{0.294} & \underline{0.343} & 0.304 & 0.351 & 0.303 & 0.352 & \textbf{0.292} & \textbf{0.341} \\ 
& 48 & 0.428 & 0.458 & 0.425 & 0.444 & 0.395 & 0.459 & 0.456 & 0.462 & \underline{0.370} & \underline{0.411} & 0.374 & 0.411 & 0.372 & 0.411 & \textbf{0.363} & \textbf{0.402} \\ 
& 168 & 0.564 & 0.541 & 0.516 & 0.516 & 0.608 & 0.567 & 0.574 & 0.548 &  \underline{0.473} & \underline{0.494} & 0.480 & 0.501 & 0.496 & 0.511 & \textbf{0.332} & \textbf{0.393} \\ 
 & 336 & 0.533 & 0.536 & 0.536 & 0.537 & 0.702 & 0.620 & 0.600 & 0.571 & \textbf{0.495} & \textbf{0.515} & 0.507 & 0.526 & 0.514 & 0.530 & \underline{0.499} & \textbf{0.515} \\ 
& 720 & 0.562 & 0.557 & 0.582 & 0.571 & 0.831 & 0.731 & 0.587 & 0.570 & \textbf{0.526} & \textbf{0.542} & 0.545 & 0.557 & 0.548 & 0.556 & {\underline{0.533}}  & {\underline{0.546}}  \\ 
\midrule
 \multirow{5}{1em}{\rot{ILI}} & 24 & \textbf{2.687} & \underline{1.147} & \underline{2.940} & 1.205 & 4.588 & 1.462 & 3.101 & 1.238 & {3.041} & 1.186 & 3.305 & 1.241 & 3.194 & {1.176} & {3.121} & {\textbf{1.139}}  \\ 
& 36 &\underline{ 2.887} & \textbf{1.160} & \textbf{2.826} & {1.184} & 4.845 & 1.496 & 3.397 & 1.270 & 3.406 & 1.232 & 3.542 & 1.314 & 3.193 & 1.169  & {3.288} & \underline{1.182} \\ 
 & 48 & \underline{2.797} & \underline{1.155} & \textbf{2.677} & \underline{1.155} & 4.865 & 1.516 & {2.947} & 1.203 & 3.459 & 1.221 & 3.409 & 1.208 & 3.15 & \underline{1.142} & 3.122 & \textbf{1.120} \\ 
& 60 & \textbf{2.809} & \textbf{1.163} & \underline{3.011} & 1.245 & 5.212 & 1.576 & {3.019} & 1.202 & 3.640 & 1.305 & 3.668 & 1.269 & 3.43 & {1.196} & 3.416 & \underline{1.180} \\ 
\midrule
\multirow{6}{1em}{\rot{Traffic}} & 24 & 0.562 & 0.375 & \underline{0.351} & \textbf{0.261} & 0.608 & 0.334 & 0.550 & 0.363 & 0.491 & 0.271 & {0.484} & \underline{0.266} & 0.499 & 0.277 & {{0.505}} & {0.294} \\ 
& 48 & 0.567 & 0.374 & 0.370 & \underline{0.270} & 0.644 & 0.359 & 0.595 &0.376 & 0.519 & 0.295 & 0.516 & 0.293 &  0.516 & \underline{0.290} & \textbf{0.315} & {\textbf{0.269}} \\ 
 & 168 & 0.607 & 0.385 & \textbf{0.395} & \textbf{0.277} & 0.660 & 0.391  & 0.649 & 0.407 & 0.513 & 0.289 & 0.511 & 0.301  & 0.517 & 0.289 & \underline{0.508} & \underline{0.286} \\ 
& 336 & 0.624 & 0.389 & \textbf{0.415} & \textbf{0.289} & 0.747 & 0.405 & 0.624 & 0.388 & 0.530 & 0.300 & 0.531 & 0.316 &  0.544 & {0.303} & \underline{0.506} & {\underline{0.299}} \\ 
& 720 & 0.623 & 0.378 & \textbf{0.455} & 0.313 & 0.792 & 0.430 & 0.674 & 0.417 & 0.573 & 0.313 & 0.569 & \underline{0.303} & {0.563} & 0.311 & {\underline{0.539}} & {\textbf{0.300}} \\ 
\bottomrule
\end{tabular}
}
\label{table:result}
\end{table}

\subsection{Memory-Enhanced Prediction: Memory Plugin via Hopfield Layer}

In \cref{tab:mem_enhanced} and \cref{fig:case34}, we 
showcase STanHop-Net with external memory enhancements delivers performance boosts in many scenarios. The external memory enhancements support two plugin schemes, \textbf{Plug-and-Play} and \textbf{Tune-and-Play}. They focus on different benefits.
$\mathtt{TuneMemory}$ is especially useful for task-relevant knowledge incorporation by fine-tuning on an external task-relevant memory set\footnote{Task-relevant means the relevance to the inputs of the time series forecasting. A task-relevant memory set could be a set of some history time series segments that are relevant to the inputs of the prediction.}. 
On the other hand,
$\mathtt{PlugMemory}$ provides a more robust representation of inputs with high uncertainty by doing a retrieval (\cref{fig:StanHop}) on an external task-relevant memory set, without the work of any training or fine-tuning.
Below we provide 4 practical scenarios to showcase the aforementioned benefits of $\mathtt{TuneMemory}$ and $\mathtt{PlugMemory}$ external memory modules. 
The detailed setups of each case can be found in the appendix.

\paragraph{Case 1 ($\mathtt{TuneMemory}$).}
We take the single variate, \textit{Number of Influenza incidence in a week} (denoted as ILI OT), from the \textbf{ILI} dataset as a straightforward example.
In this dataset, we are aware of the existence of recurring annual patterns, which can be readily identified through visualizations in \cref{fig:ILI_OT}.
Notably, the signal patterns around the spring of 2014 closely resemble past springs.
Thus, in predictions tasks with input located in the yearly recurring period, we collect similar patterns from the past to form a task-relevant external memory set.

\paragraph{Case 2 ($\mathtt{TuneMemory}$).}
In many sociological studies \cite{elec1, wang2021electricity}, electricity usage exhibits consistent patterns across different regions, influenced by the daily and weekly routines of residents and local businesses.
Thus,  we collect sequences that match the length of the input sequence but are from 1 to 20 weeks prior, obtaining a task-relevant external memory set of size 20.

In addition, we also include analysis of \textbf{``bad" external memory sets}, to verify the effectiveness of incorporating informative external memory sets.
We construct the ``bad" external memory sets by randomly selecting from dataset without any task-relevant preference, see \cref{sec:ext-mem} for more details about such selection.
The results indicate that, by properly selecting external memory sets, we further improve the models' performance.
On the contrary, randomly chosen external memory sets can negatively impact performance.
We report the results of {Case 1} and {Case 2} in \cref{tab:mem_enhanced}.

\paragraph{Case 3 ($\mathtt{PlugMemory}$).}
Through $\mathtt{PlugMemory}$, informative patterns can be extracted from a  memory set for the given noisy input.
To verify this ability, we construct the external memory sets based on the weekly pattern spotted in ETTh1 and ETTm1, and add noise of different scales into the input sequence.
We add the noise following $x \gets x + \text{scale} \cdot \text{std}(x).$
For {Case 3}, we use the ETTh1 dataset.
\paragraph{Case 4 ($\mathtt{PlugMemory}$).}
For {Case 4}, we evaluate $\mathtt{PlugMemory}$ on the ETTm1 dataset.

\begin{figure*}[t!]

    \centering
    \includegraphics[width=\textwidth]{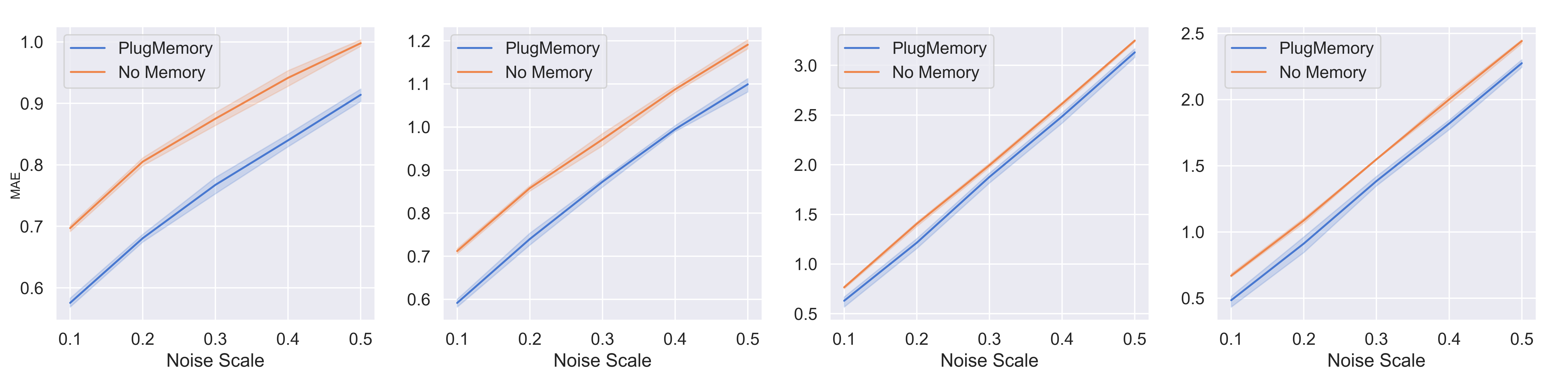}

    \caption{\small
    \textbf{Visualization of Memory Plugin Scenarios Case 3 \& 4.} \textbf{From Left to Right:} MAE against different noise levels with (1) ETTh1 + prediction horizon 336; (2) ETTh1 + prediction horizon 168; (3) ETTm1 + prediction horizon 288; and (4) ETTm1 + prediction horizon 96. 
    The results show the robustness of $\mathtt{PlugMemory}$ against different level of noise.}
    \label{fig:case34}
\end{figure*}

\begin{table}[h!]
    \centering
    \caption{\small
    \textbf{Performance Comparison of the StanHop Model with $\mathtt{TuneMemory}$ and Ablation Using Bad External Memory Sets ($\mathtt{TuneMemory}$-(b))}.
    We report the mean MSE and MAE over 10 runs with variances omitted as they are $\leq 0.79 \%$.
    For ILI OT, we consider prediction horizons of 12, 24, and 60.
    For ETTh1, we choose prediction horizons of 24, 48, and 720, covering both short and long durations.
    The results indicate that using dataset insights and $\mathtt{TuneMemory}$ enhances our model's performance.
    }
    \resizebox{\textwidth}{!}{
    \begin{tabular}{cccccccccccccccc}
    \toprule
     & \multicolumn{5}{c}{\small{\textbf{Case 1} (ILI OT)}} & & $\mid$ & & \multicolumn{5}{c}{\small{\textbf{Case 2} (ETTh1)}} \\
     \midrule
      & \multicolumn{2}{c}{\small{Default}} & \multicolumn{2}{c}{\small{$\mathtt{TuneMemory}$}} & \multicolumn{2}{c}{\small{$\mathtt{TuneMemory}$-(b)}} &   $\mid$ & &  \multicolumn{2}{c}{\small{Default}} & \multicolumn{2}{c}{\small{$\mathtt{TuneMemory}$}} & \multicolumn{2}{c}{\small{$\mathtt{TuneMemory}$-(b)}} \\
    \midrule
      & \small{MSE} & \small{MAE} & \small{MSE} & \small{MAE} & \small{MSE} & \small{MAE} & $\mid$ & &  \small{MSE} & \small{MAE} & \small{MSE} & \small{MAE} & \small{MSE} & \small{MAE} \\
    \midrule
    12 & 4.011 & 1.701 & 3.975 {\footnotesize(\textbf{-0.9\%})} & 1.693 {\footnotesize(\textbf{-0.5\%})} & 4.340 {\footnotesize(+8.2\%)} & 1.789 {\footnotesize(+5.1\%)} & $\mid$ & 24 & 0.294 & 0.351 & 0.284 {\footnotesize(\textbf{-3.4\%})} & 0.351 {\footnotesize($\pm$0\%)} & 0.300 {\footnotesize(+2\%)} & 0.361 {\footnotesize(+2.8\%)} \\
    24 & 4.254 & 1.771 & 3.960 {\footnotesize(\textbf{-6.9\%})} & 1.690 {\footnotesize(\textbf{-4.6\%})} & 4.271 {\footnotesize(+0.4\%)} & 1.776 {\footnotesize(+0.3\%)} & $\mid$ & 48 & 0.340 & 0.387 & 0.328 {\footnotesize(\textbf{-3.5\%})} & 0.379 {\footnotesize(\textbf{-2.1\%})} & 0.342 {\footnotesize(+0.6\%)} & 0.388 {\footnotesize(+0.3\%)} \\
    60 & 3.613 & 1.685 & 3.572 {\footnotesize(\textbf{-1.1\%})} & 1.528 {\footnotesize(\textbf{-9.3\%})} & 3.821 {\footnotesize(+5.8\%)} & 1.725 {\footnotesize(+2.4\%)} & $\mid$ & 720 & 0.512 & 0.511 & 0.504 {\footnotesize(\textbf{-1.6\%})} & 0.512 {\footnotesize(\textbf{-0.2\%})} & 0.514 {\footnotesize(+0.4\%)} & 0.521 {\footnotesize(+2.0\%)}\\
    \bottomrule
    \end{tabular}
    }
    \label{tab:mem_enhanced}
\end{table}

\section{Conclusion}
\label{sec:conclusion}
We propose the generalized sparse modern Hopfield model and present STanHop-Net, a Hopfield-based time series prediction model with external memory functionalities. 
Our design improves time series forecasting performance, quickly reacts to unexpected or rare events, and offers both strong theoretical guarantees and empirical results.
Empirically, STanHop-Nets rank in the top two in 47 out of our 58 experiment settings compared to the baselines.
Furthermore, with $\mathtt{PlugMemory}$ and $\mathtt{TuneMemory}$ modules, it showcases average performance boosts of $\sim$12\% and  $\sim$3\% for each.
In the appendix, we also show that STanHop-Net consistently outperforms DLinear in scenarios with strong correlations between variates.

\clearpage

\section*{Acknowledgments}
JH would like to thank Feng Ruan, Dino Feng and Andrew Chen for enlightening discussions, the Red Maple Family for support, and Jiayi Wang for facilitating experimental deployments.

JH is partially supported by the Walter P. Murphy Fellowship.
BY is supported by the National Taiwan University Fu Bell Scholarship.
HL is partially supported by NIH R01LM1372201, NSF CAREER1841569, DOE DE-AC02-07CH11359, DOE LAB 20-2261 and a NSF TRIPODS1740735.
This research was supported in part through the computational resources and staff contributions provided for the Quest high performance computing facility at Northwestern University which is jointly supported by the Office of the Provost, the Office for Research, and Northwestern University Information Technology.
The content is solely the responsibility of the authors and does not necessarily represent the official
views of the funding agencies.

\def\arxivfont{\rm}
\bibliographystyle{plainnat}

\bibliography{refs}

\newpage  %

\titlespacing*{\section}{0pt}{*1}{*1}
\titlespacing*{\subsection}{0pt}{*1.25}{*1.25}
\titlespacing*{\subsubsection}{0pt}{*1.5}{*1.5}

\setlength{\abovedisplayskip}{10pt}
\setlength{\abovedisplayshortskip}{10pt}
\setlength{\belowdisplayskip}{10pt}
\setlength{\belowdisplayshortskip}{10pt}

\appendix
\label{sec:append}
\part*{Supplementary Material}
{
\setlength{\parskip}{-0em}
\startcontents[sections]
\printcontents[sections]{ }{1}{}
}

\clearpage
\section{Broader Impacts}

We envision this approach as a means to refine large foundation models for time series, through a perspective shaped by neuroscience insights.
Such memory-enhanced time series foundation models are vital in applications like eco- and climatic-modeling.
For example, with a multi-modal time series foundation model, we can effectively predict, detect, and mitigate emerging biological threats associated with the rapid changes in global climate.
To this end, the differentiable external memory modules become handy, as they allow users to integrate real-time data into pre-trained foundation models and thus enhance the model's responsiveness in real-time scenarios. 
Specifically, one can use this memory-enhanced technique to embed historical, sudden, or rare events into any given time series foundation model, thereby boosting its overall performance.

\section{Related Works and Limitations}
\label{sec:related_works}

\paragraph{Transformers for Time Series Prediction.}
As suggested in \cref{sec:model} and \cite{hu2023SparseHopfield,ramsauer2020hopfield}, besides the additional memory functionalities, the Hopfield layers act as promising alternatives for the attention mechanism.
Therefore, we discuss and compare STanHop-Net with existing transformer-based time series prediction methods here. 

Transformers have gained prominence in time series prediction, inspired by their success in Natural Language Processing and Computer Vision. 
One challenge in time series prediction is managing transformers' quadratic complexity due to the typically long sequences. To address this, many researchers have not only optimized for prediction performance but also sought to reduce memory and computational complexity. LogTrans \cite{li2019enhancing} proposes a transformer-based neural network for time series prediction. They propose a convolution layer over the vanilla transformer to better capture local context information and a sparse attention mechanism to reduce memory complexity. Similarly, Informer \cite{zhou2021informer} proposes convolutional layers in between attention blocks to distill the dominating attention and a sparse attention mechanism where the keys only attend to a subset of queries. 
Reformer \cite{kitaev2020reformer} replaces the dot-product self-attention in the vanilla transformer with a hashing-based attention mechanism to reduce the complexity. Besides directly feeding the raw time series inputs to the model, many works focus on transformer-based time series prediction by modeling the decomposed time series. Autoformer \cite{wu2021autoformer} introduces a series decomposition module to its transformer-based model to separately model the seasonal component and the trend-cyclical of the time series. FEDformer \cite{zhou2022fedformer} also models the decomposed time series and they introduce a block to extract signals by transforming the time series to the frequency domain.

Compared to STanHop, the above methods do not model multi-resolution information. Besides, Reformer's attention mechanism sacrifices the global receptive field compared to the vanilla self-attention mechanism and our method, which harms the prediction performance. 

Some works intend to model the multi-resolution or multi-scale signals in the time series with a dedicated network design. Pyraformer \cite{liu2021pyraformer} designs a pyramidal attention module to extract the multi-scale signals from the raw time series. Crossformer \cite{zhang2022crossformer} proposes a multi-scale encoder-decoder architecture to hierarchically extract signals of different resolutions from the time series. Compared to these methods. STanHop adopts a more fine-grained multi-resolution modeling mechanism that is capable of learning different sparsity levels for signals in the data of different resolutions.

Furthermore, all of the above works on time series prediction lack the external memory retrieval module as ours. Thus, our STanHop method and its variations have a unique advantage in that we have a fast response to real-time unexpected events.

\paragraph{Hopfield Models and Deep Learning.}
Hopfield Models \cite{hopfield1984neurons,hopfield1982neural,krotov2016dense} have garnered renewed interest in the machine learning community due to 
the connection between their memory retrieval dynamics and attention mechanisms in transformers via the Modern Hopfield Models \cite{hu2023SparseHopfield,ramsauer2020hopfield}. 
Furthermore, these modern Hopfield models enjoy superior empirical performance and possess several appealing theoretical properties, such as rapid convergence and guaranteed exponential memory capacity. 
By viewing modern Hopfield models as generalized attentions with enhanced memory functionalities, these advancements pave the way for innovative Hopfield-centric architectural designs in deep learning \cite{hoover2023energy,seidl2022improving,furst2022cloob,ramsauer2020hopfield}.
Consequently, their applicability spans diverse areas like  physics \cite{krotov2023new}, biology \cite{schimunek2023contextenriched,kozachkov2023building,widrich2020modern}, reinforcement learning \cite{paischer2022history}, and large language models \cite{furst2022cloob}. 

This work pushes this line of research forward by
presenting a Hopfield-based deep architecture (StanHop-Net)   tailored for memory-enhanced learning in noisy multivariate time series. 
In particular, our model emphasizes in-context memorization during training and bolsters retrieval capabilities with an integrated external memory component.

\paragraph{Sparse Modern Hopfield Model.}
Our work extends the theoretical framework proposed in \cite{hu2023SparseHopfield} for modern Hopfield models. Their primary insight is that using different entropic regularizers can lead to distributions with varying sparsity. 
Using the Gibbs entropic regularizer, they reproduce the results of the standard dense Hopfield model \cite{ramsauer2020hopfield} and further propose a sparse variant with the Gini entropic regularizer, providing improved theoretical guarantees. However, their sparse model primarily thrives with data of high intrinsic sparsity.
To combat this, we enrich the link between Hopfield models and attention mechanisms by introducing \textit{learnable sparsity} and showing that  the sparse model from \cite{hu2023SparseHopfield} is a specific case of our model when setting $\alpha=2$.
Unlike \cite{hu2023SparseHopfield}, our generalized sparse Hopfield model ensures adaptable sparsity across various data types without sacrificing theoretical integrity. 
By making this sparsity learnable, we introduce the $\mathtt{GSH}$ layers. 
These new Hopfield layers adeptly learn and store sparse representations in any deep learning pipeline, proving invaluable for inherently noisy time series data.

\paragraph{Memory Augmented Neural Networks.}
The integration of external memory mechanisms with neural networks has emerged as a pivotal technique for machine learning, particularly for tasks requiring complex data manipulation and retention over long sequences, such as open question answering, and few-shot learning.

Neural Turing Machines (NTMs) \cite{graves2014neural} combine the capabilities of neural networks with the external memory access of a Turing machine. 
NTMs use a differentiable controller (typically an RNN) to interact with an external memory matrix through read and write heads. 
This design allows NTMs to perform complex data manipulations, akin to a computer with a read-write memory.
Building upon this, \citet{graves2016hybrid} further improve the concept through Differentiable Neural Computers (DNCs), which enhance the memory access mechanism using a differentiable attention process. This includes a dynamic memory allocation and linkage system that tracks the relationships between different pieces of data in memory.
This feature makes DNCs particularly adept at tasks that require complex data relationships and temporal linkages.

Concurrently, Memory Networks \cite{weston2014memory} showcase the significance of external memory in tasks requiring complex reasoning and inference. 
Unlike traditional neural networks that rely on their inherent weights to store information, Memory Networks incorporate a separate memory matrix that stores and retrieves information across different processing steps. 
This capability allows the network to maintain and manipulate a ``memory'' of past inputs and computations, which is particularly crucial for tasks requiring persistent memory, such as question-answering systems where the network needs to remember context or facts from previous parts of a conversation or text to accurately respond to queries. 
This concept is further developed into the End-to-End Memory Networks \cite{sukhbaatar2015end}, which extend the utility of Memory Networks \cite{weston2014memory} beyond the limitations of traditional recurrent neural network architectures, transitioning them into a fully end-to-end trainable framework, thereby making them more adaptable and easier to integrate into various learning paradigms.

A notable application of memory-augmented neural networks is in the domain of one-shot learning.
The concept of meta-learning with memory-augmented neural networks, as explored by \citet{santoro2016meta}, has demonstrated the potential of these networks to rapidly adapt to new tasks by leveraging their external memory, highlighting their versatility and efficiency in learning.
They showcase the potential of these networks to adapt rapidly to new tasks, a crucial capability in scenarios where data availability is limited. 
Complementing this, \citet{kaiser2017learning} focus on enhancing the recall of rare events and  is particularly notable for its exploration of memory-augmented neural networks designed to improve the retention and recall of infrequent but significant occurrences, highlighting the potential of external memory modules in handling rare data challenges. 
This is achieved through a unique soft attention mechanism, which dynamically assigns relevance weights to different memory entries, enabling the model to draw on a broad spectrum of stored experiences.
This approach not only facilitates the effective recall of rare events but also adapts to new data, ensuring the memory's relevance and utility over time.

In all above methods, \cite{kaiser2017learning} is closest to this work.
However, our approach diverges in two key aspects:
\textbf{(Enhanced Generalization):} Firstly, our external memory enhancements are external plugins with an option for fine-tuning. 
This design choice avoids over-specialization on rare events, thereby broadening our method's applicability and enhancing its generalization capabilities across various tasks where the frequency and recency of data are less pivotal.
\textbf{(Adaptive Response over Rare Event Memorization):} Secondly, our approach excels in real-time adaptability. 
By integrating relevant external memory sets tailored to specific inference tasks, our method can rapidly respond and improve performance, even without the necessity of prior learning. 
This flexibility contrasts with the primary focus on \textit{memorizing} rare events in \cite{kaiser2017learning}.

\subsection{Limitations}
\label{sec:limitations}
The proposed generalized sparse modern Hopfield model shares the similar inefficiency due to the  $\calO(d^2)$ complexity.
In addition, the effectiveness of our memory enhancement methods is contingent on the relevance of the external memory set to the specific inference task. 
Achieving a high degree of relevance in the external memory set often necessitates considerable human effort and domain expertise, just like our selection process detailed in \cref{sec:ext-mem}.
This requirement could potentially limit the model's applicability in scenarios where such resources are scarce or unavailable.

\clearpage
\section{Proofs of Main Text}

\subsection{\cref{lemma:Danskin}}
\begin{claim}
    Our proof relies on verifying that $\Psi^\star$ meets the criteria of Danskin's theorem.
\end{claim}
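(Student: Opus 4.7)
The plan is to invoke Danskin's (envelope) theorem for the variational representation
\[
\Psi^\star_\alpha(\bz) = \max_{\bp \in \Delta^M}\bigl[\Braket{\bp,\bz} - \Psi_\alpha(\bp)\bigr],
\]
which coincides up to an additive constant with the integral definition of $\Psi^\star_\alpha$ in \eqref{eqn:H_entmax}, since $\alphaentmax$ is by construction the gradient map of the conjugate. The strategy is therefore to verify the hypotheses of Danskin's theorem---compactness of the feasible set, joint continuity of the objective, and uniqueness of the maximizer---and then read off the gradient.

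Specifically, I would check in order: (i) $\Delta^M$ is nonempty, convex, and compact; (ii) $g(\bp,\bz)\coloneqq\Braket{\bp,\bz}-\Psi_\alpha(\bp)$ is jointly continuous on $\Delta^M\times\R^M$ and smooth in $\bz$ with $\nabla_\bz g(\bp,\bz) = \bp$; (iii) the maximizer $\bp^\star(\bz)$ is unique for every $\bz\in\R^M$. Step (iii) rests on strict convexity of the (signed) regularizer on the simplex: the Shannon case $\alpha=1$ is classical, and for $\alpha>1$ a direct Hessian computation on the relative interior gives a diagonal Hessian of definite sign on the tangent space to $\Delta^M$, which extends to the whole simplex by lower semicontinuity. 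Once (i)--(iii) are in place, the differentiable form of Danskin's theorem yields
\[
\nabla_\bz \Psi^\star_\alpha(\bz) = \nabla_\bz g(\bp,\bz)\big|_{\bp=\bp^\star(\bz)} = \bp^\star(\bz) = \alphaentmax(\bz),
\]
which is the desired identity.

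The main obstacle I anticipate is the behavior at the boundary of the simplex in the Tsallis regime $\alpha>1$, where $\alphaentmax(\bz)$ can be genuinely sparse with support on a proper face of $\Delta^M$. The interior Hessian argument for uniqueness must be supplemented with a boundary argument, since $p_\mu^{\alpha-2}$ either vanishes or blows up as $p_\mu\to 0^+$ depending on $\alpha$. The cleanest fix is to extend $\Psi_\alpha$ to all of $\R^M$ by setting it equal to $+\infty$ off $\Delta^M$ and appeal to Fenchel--Rockafellar duality: the extended regularizer is proper, lower semicontinuous, and strictly convex on its effective domain, which guarantees both existence and uniqueness of $\bp^\star(\bz)$ and continuous differentiability of $\Psi^\star_\alpha$ on all of $\R^M$. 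With this wrinkle dispatched, the envelope identity above is the entire proof.
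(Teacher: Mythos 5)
Your proposal follows essentially the same route as the paper: verify the hypotheses of Danskin's theorem (compactness of $\Delta^M$, continuity of the objective $\Braket{\bp,\bz}-\Psi_\alpha(\bp)$, convexity/affinity in $\bz$, and uniqueness of the maximizer) and then read the gradient identity $\grad_\bz\Psi^\star_\alpha(\bz)=\alphaentmax(\bz)$ off the envelope theorem. If anything, your justification of uniqueness via strict convexity of $\Psi_\alpha$ on the simplex (with the boundary/Fenchel--Rockafellar fix for $\alpha>1$) is more careful than the paper's, which simply appeals to the fact that $\alphaentmax$ returns a unique distribution.
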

\begin{proof}[Proof of \cref{lemma:Danskin}]
\label{proof:Danskin}

Firstly, we introduce the notion of convex conjugate.
\begin{definition}
\label{def:convex_conjugate}
Let $F(\bp,\bz)\coloneqq\Braket{\bp,\bz}-\Psi^\alpha(\bp)$. 
The convex conjugate of $\Psi^\alpha$, $\Psi^\star$ takes the form:
\bea
\Psi^\star(\bz)=\Max_{\bp\in\Delta^M}\Braket{\bp,\bz}-\Psi^\alpha(\bp)=\Max_{\bp\in\Delta^M}F(\bp,\bz).
\eea
\end{definition}

    By Danskin's theorem \cite{danskin2012theory, bertsekas1999nonlinear}, the function $\Psi^\star$ is convex and its partial derivative with respect to $\bz$ is equal to that of $F$, i.e. $\nicefrac{\partial \Psi^\star}{\partial \bz}=\nicefrac{\partial F}{\partial \bz}$, if the following three conditions are satisfied for $\Psi^\star$ and $F$:
\begin{enumerate}
    \item [(i)]
    $F(\bp,\bz):\calP\times \R^M\to \R$ is a continuous function, where $\calP\subset\R^M$ is a compact set.
    \item [(ii)]
    $F$ is convex in $\bz$, i.e. for each given $\bp\in\calP$, the mapping $\bz\to F(\bp,\bz)$ is convex.
    \item [(iii)]
    There exists an unique maximizing point $\hat{\bp}$ such that $F(\hat{\bp},\bz)=\Max_{\bp\in\calP}F(\bp,\bz)$.
\end{enumerate}

Since both $\Braket{\bp,\bz}$ and $\Psi^\alpha$ are continuous functions and every component of $\bp$ is ranging from $0$ to $1$,  the function $F$ is continuous and the domain $\calP$ is a compact set.
Therefore, condition (i) is satisfied. 

Since we require $\bp\in \Delta^M$ (i.e. $\calP= \Delta^M$) to be probability distributions, 
for any fixed $\bp$, $F(\bp,\bz)=\Braket{\bp,\bz}-\Psi^\alpha(\bp)$ reduces to an affine function depending only on input $\bz$.
Due to the inner product form, this affine function is convex in $\bz$, and hence condition (ii) holds for all given $\bp\in\calP=\Delta^M$. 

Since, for any given $\bz$, $\alphaentmax$ only produces one unique probability distribution $\bp^\star$, condition (iii) is satisfied. 
Therefore, from Danskin's theorem, 
it holds
\bea
\grad_z \Psi^\star(\bz)=\frac{\partial F}{\partial \bz}=\frac{\partial}{\partial \bz}(\Braket{\bp,\bz}-\Psi^\alpha(\bp))=\bp=\alphaentmax(\bz).
\eea
\end{proof}

\subsection{\cref{lemma:retrieval_dyn}}
\begin{claim}
Our proof is built on \cite[Lemma~2.1]{hu2023SparseHopfield}.
We first derive $\calT$ by utilizing \cref{lemma:Danskin} and \cref{remark:closeform},  along with the convex-concave procedure \cite{yuille2003concave,yuille2001concave}.
Then, we show the monotonicity of minimizing $\calH$ with $\calT$ by constructing an iterative upper bound of $\calH$ which is convex in $\bx_{t+1}$ and thus, can be lowered iteratively by the convex-concave procedure.
\end{claim}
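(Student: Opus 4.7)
My plan is to follow the convex-concave procedure (CCCP) blueprint of \cite{hu2023SparseHopfield} adapted to the Tsallis $\alpha$-entropy setting. First I would split the energy \eqref{eqn:H_entmax} into a convex part $\calH_1(\bx)\coloneqq \tfrac12\Braket{\bx,\bx}+\text{Const.}$ and a concave part $\calH_2(\bx)\coloneqq -\Psi^\star_\alpha(\beta \bm{\Xi}^\sT\bx)$. Concavity of $\calH_2$ is immediate because $\Psi^\star_\alpha$ is the convex conjugate of a proper convex function (hence convex) and $\bx\mapsto \beta \bm{\Xi}^\sT\bx$ is affine, so $\Psi^\star_\alpha(\beta\bm{\Xi}^\sT\bx)$ is convex in $\bx$ and its negative is concave. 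This is the structural fact that makes CCCP applicable.

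Next I would construct the iterative majorizer at $\bx_t$ by linearizing the concave piece:
\begin{equation*}
\hat\calH_t(\bx)\coloneqq \calH_1(\bx)+\calH_2(\bx_t)+\Braket{\grad_\bx\calH_2(\bx_t),\bx-\bx_t}.
\end{equation*}
By concavity of $\calH_2$, the linearization is a global upper bound, so $\hat\calH_t(\bx)\ge \calH(\bx)$ with equality at $\bx=\bx_t$. Using \cref{lemma:Danskin} together with the chain rule, $\grad_\bx\calH_2(\bx_t)=-\beta\bm{\Xi}\,\alphaentmax(\beta\bm{\Xi}^\sT\bx_t)$, so $\hat\calH_t$ is a strictly convex quadratic in $\bx$. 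Setting $\grad_\bx\hat\calH_t(\bx)=0$ yields the unique minimizer, which I would identify with the update rule $\calT(\bx_t)=\alphaentmax(\beta\bm{\Xi}^\sT\bx_t)$ announced in \eqref{eqn:retrieval_dyn} (modulo the standard Hopfield identification $\bm{\Xi}\,\alphaentmax(\cdot)\leftrightarrow\alphaentmax(\cdot)$ that is also used for the dense counterpart in \eqref{eqn:MHM}).

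Monotonicity is then a three-line CCCP chain:
\begin{equation*}
\calH(\bx_{t+1})\le \hat\calH_t(\bx_{t+1})\le \hat\calH_t(\bx_t)=\calH(\bx_t),
\end{equation*}
where the first inequality uses the majorization, the second uses that $\bx_{t+1}$ is the global minimizer of $\hat\calH_t$, and the equality uses tightness of the linearization at $\bx_t$. This establishes condition \hyperref[item:T1]{(T1)}.

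The main obstacle I anticipate is purely bookkeeping: verifying that $\Psi^\star_\alpha$ is globally convex and continuously differentiable on $\R^M$ for all $\alpha\ge1$ (not just $\alpha=2$ as in \cite{hu2023SparseHopfield}), so that the linearization argument and \cref{lemma:Danskin} apply uniformly across the $\alpha$-spectrum, including the limit $\alpha\to 1$ recovering the dense Boltzmann case. This should follow from standard properties of the Tsallis entropy stated in \eqref{eqn:T_entropy} and the closed form referenced in \cref{remark:closeform}, but it needs to be checked carefully at the boundary $\alpha=1$ where the regularizer changes functional form.
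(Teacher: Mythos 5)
Your proposal follows essentially the same route as the paper's proof: the identical convex/concave splitting $\calH=\calH_1+\calH_2$ with $\calH_1(\bx)=\half\Braket{\bx,\bx}$ and $\calH_2(\bx)=-\Psi^\star_\alpha\(\beta\bm{\Xi}^\sT\bx\)$, the same linearized majorizer (the paper's $\calH_U(\bx,\by)$), and the same three-step monotonicity chain $\calH(\bx_{t+1})\le\calH_U(\bx_{t+1},\bx_t)\le\calH_U(\bx_t,\bx_t)=\calH(\bx_t)$. The only cosmetic difference is that you obtain the update by zeroing the gradient of the convex quadratic majorizer, whereas the paper writes the equivalent CCCP stationarity condition $\grad_\bx\calH_1\(\bx_{t+1}\)=-\grad_\bx\calH_2\(\bx_t\)$ directly; both yield the same retrieval dynamics (up to the paper's own suppression of the constant factors and the $\bm{\Xi}$ prefactor in \eqref{eqn:retrieval_dyn}).
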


\begin{proof}
\label{proof:retrieval_dyn}

From \cref{lemma:Danskin}, the conjugate convex of $\Psi$, $\Psi^\star$, is always convex, and, therefore, $-\Psi^\star$ is a concave function. 
Then, the energy function $\calH$ defined in \eqref{eqn:H_entmax} is the sum of the convex function $\calH_1(\bx)\coloneqq\half\Braket{\bx,\bx}$ and the concave function $\calH_2(\bx)\coloneqq-\Psi^\star\(\bm{\Xi}^\sT\bx\)$.

Furthermore, by definition, the energy function $\calH$ is differentiable.

Every iteration step of convex-concave procedure applied on $\calH$ gives
\begin{align}
\grad_\bx\calH_1\(\bx_{t+1}\)=-\grad_{\bx}\calH_2\(\bx_t\),
\end{align}
which implies that
\bea
\bx_{t+1}=\grad_{\bx}\Psi\(\bm{\Xi}\bx_t\)= \bm{\Xi}\alphaentmax\(\bm{\Xi}^\sT \bx_t\).
\eea

On the basis of \cite{yuille2003concave,yuille2001concave}, we show the decreasing property of \eqref{eqn:H_entmax} over $t$
via solving the minimization problem of energy function:
\bea
\label{eqn:energy_minimizaiton}
\Min_\bx \[\calH(\bx)\]&=&\Min_\bx \[\calH_1(\bx)+\calH_2(\bx)\],
\eea
which, in convex-concave procedure, is equivalent to solve the iterative programming
{
\bea
\label{eqn:iterative_argmin}
\bx_{t+1} &\in& \argmin_{\bx} \[\calH_1(\bx)+
\Braket{\bx,\grad_\bx \calH_2\(\bx_t\)}
\],
\eea
for all $t$.
}
{
The concept behind this programming is to linearize the concave function $\calH_2$ around the solution for current iteration, $\bx_t$, which makes $\calH_1(\bx_{t+1})+\Braket{\bx_{t+1},\grad_\bx \calH_2(\bx_t)}$ convex in $\bx_{t+1}$.

The convexity of $\calH_1$ and concavity of $\calH_2$ imply that the inequalities
\begin{align}
\calH_1(\bx)&\ge \calH_1(\by)+\Braket{\(\bx-\by\),\grad_{\bx}\calH_1(\by)},\\
\calH_2(\bx)&\le \calH_2(\by)+\Braket{\(\bx-\by\),\grad_{\bx}\calH_2(\by)},
\end{align}
hold for all $\bx,\by$,
}
which leads to 
\begin{align}
\calH(\bx)&=\calH_1(\bx)+\calH_2(\bx)\\
&\le \calH_1(\bx)+\calH_2(\by)  +\Braket{(\bx-\by),\grad_\bx \calH_2(\by)}
\coloneqq \calH_U\(\bx,\by\),
\label{eqn:H_U}
\end{align}
where the upper bound of $\calH$ is defined as $\calH_U$.
Then, the iteration \eqref{eqn:iterative_argmin}
\bea
\bx_{t+1} \in \argmin_{\bx} \[\calH_U(\bx,\bx_t)\]=\argmin_{\bx}\[\calH_1(\bx)+\Braket{\bx,\grad_\bx \calH_2(\bx_t)}\],
\eea
can make $\calH_U$ decrease iteratively and thus decreases the value of energy function $\calH$ monotonically, i.e.
\begin{align}
\calH(\bx_{t+1}) &\leq \calH_U(\bx_{t+1},\bx_t) 
\leq \calH_U(\bx_t,\bx_t)
=\calH(\bx_t),
\end{align}
for all $t$.
\cref{eqn:H_U} shows that the retrieval dynamics defined in \eqref{lemma:retrieval_dyn} can lead the energy function $\calH$ to decrease with respect to the increasing $t$.
\end{proof}

\subsection{\cref{lemma:convergence_sparse}}
\label{proof:convergence_sparse}
To prove the convergence property of retrieval dynamics $\calT$, first we introduce an auxiliary lemma from \cite{sriperumbudur2009convergence}.

\begin{lemma}[\cite{sriperumbudur2009convergence}, Lemma 5]
\label{KKT for stationary points}
Following \cref{lemma:convergence_sparse}, $\bx$ is called the fixed point of iteration $\calT$ with respect to $\calH$ if $\bx = \calT(\bx)$ and is considered as a generalized fixed point of $\calT$ if $\bx\in\calT(\bx)$.
If $\bx^\star$ is a generalized fixed point of $\calT$, then, $\bx^\star$ is a stationary point of the energy minimization problem  \eqref{eqn:energy_minimizaiton}.
\end{lemma}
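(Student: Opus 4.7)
The plan is to apply the first-order optimality condition to the convex surrogate that defines $\calT$. Recall from the convex-concave construction in the proof of \cref{lemma:retrieval_dyn} that
\begin{equation*}
\calT(\by) = \argmin_{\bx}\calH_U(\bx,\by), \qquad \calH_U(\bx,\by) = \calH_1(\bx) + \calH_2(\by) + \Braket{\bx-\by,\grad_\bx \calH_2(\by)},
\end{equation*}
where $\calH_1(\bx) = \half\Braket{\bx,\bx}$ is strictly convex and $\calH_2(\bx) = -\Psi^\star_\alpha(\beta\bm{\Xi}^\sT\bx)$ is concave, both differentiable on $\R^d$.

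First, I would rephrase the hypothesis $\bx^\star \in \calT(\bx^\star)$ as the statement that $\bx^\star$ is a minimizer of the convex, differentiable, unconstrained map $\bx\mapsto\calH_U(\bx,\bx^\star)$. The first-order optimality condition then gives $\grad_\bx\calH_U(\bx,\bx^\star)\big|_{\bx=\bx^\star}=0$.

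Next, I would compute this gradient in closed form: differentiating in the $\bx$ slot of $\calH_U$ leaves only the $\calH_1$-contribution plus the constant vector $\grad\calH_2(\by)$, i.e. $\grad_\bx\calH_U(\bx,\by) = \grad\calH_1(\bx)+\grad\calH_2(\by)$. Setting $\bx=\by=\bx^\star$ yields
\begin{equation*}
\grad\calH(\bx^\star) = \grad\calH_1(\bx^\star)+\grad\calH_2(\bx^\star) = 0,
\end{equation*}
which is precisely the stationarity condition for the unconstrained problem \eqref{eqn:energy_minimizaiton}.

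The only delicate point is that $\calT$ is formally set-valued, so $\bx^\star\in\calT(\bx^\star)$ is a priori weaker than $\bx^\star=\calT(\bx^\star)$. However, strict convexity of $\calH_1$ forces $\calH_U(\cdot,\by)$ to have a unique minimizer for each $\by$, so $\calT$ is in fact single-valued and the distinction dissolves. No further obstacle arises, and the optimality-condition step above closes the proof.
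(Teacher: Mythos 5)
Your proof is correct. Note that the paper does not actually prove this statement itself: it is quoted from \cite[Lemma~5]{sriperumbudur2009convergence} and used as a black box inside the proof of \cref{lemma:convergence_sparse}. Your argument is essentially the unconstrained specialization of the argument behind the cited lemma: there, a generalized fixed point of the convex-concave (CCCP) map is shown to satisfy the KKT conditions of the linearized convex program, which coincide with those of the original problem; since \eqref{eqn:energy_minimizaiton} is unconstrained and both $\calH_1$ and $\calH_2$ are differentiable, KKT stationarity reduces to
\begin{equation*}
\grad_{\bx}\calH(\bx^\star)=\grad_{\bx}\calH_1(\bx^\star)+\grad_{\bx}\calH_2(\bx^\star)=0,
\end{equation*}
which is exactly what your first-order optimality computation on $\calH_U(\cdot,\bx^\star)$ delivers, using that $\bx^\star\in\calT(\bx^\star)$ means $\bx^\star$ minimizes the surrogate from \eqref{eqn:iterative_argmin}. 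Your closing remark on set-valuedness is also consistent with the paper: the iteration \eqref{eqn:iterative_argmin} is written with ``$\in$'', but the strictly convex quadratic $\calH_1$ makes the minimizer unique, and in any case the first-order condition only needs $\bx^\star$ to be \emph{a} minimizer, so the distinction is harmless.
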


\begin{proof}
Since the energy function $\calH$ monotonically decreases with respect to increasing $t$ in \cref{lemma:retrieval_dyn}, we can follow \cite[Lemma~2.2]{hu2023SparseHopfield} to guarantee the convergence property of $\calT$ by checking the necessary conditions of Zangwill's global convergence.
After satisfying these conditions, Zangwill global convergence theory ensures that all the limit points of $\{\bx_t\}_{t=0}^\infty$ are generalized fixed points of the mapping $\calT$ and it holds $\lim_{t \to \infty} \calH\(\bx_t\)=\calH\(\bx^\star\)$, where $\bx^\star$ are some generalized fixed points of $\calT$.
Furthermore, auxiliary \cref{KKT for stationary points} implies that $\bx^\star$ are also the stationary points of energy function $\calH$.
Therefore, we guarantee that $\calT$ can iteratively lead the query $\bx$ to converge to the local optimum of $\calH$.
\end{proof}

\clearpage
\subsection{\cref{thm:eps_sparse_dense}}
\begin{proof}
\label{proof:eps_sparse_dense}
We observe
\begin{align}
&\norm{\calT(\bx)-\bxi_\mu}-\norm{\calT_{\text{Dense}}(\bx)-\bxi_\mu}\nonumber\\
&=\norm{\sum_{\nu=1}^{\kappa} \bxi_\nu \[\text{$(\alpha+\delta)$-entmax}\(\beta \bm{\Xi}^\sT \bx\) \]_\nu-\bxi_\mu}
-\norm{\sum_{\nu=1}^{\kappa} \bxi_\nu \[\text{$\alpha$-entmax}\(\beta \bm{\Xi}^\sT \bx\) \]_\nu-\bxi_\mu}\\
&\leq
\norm{\sum^{\kappa}_{\nu=1}\[\text{$(\alpha+\delta)$-entmax}(\beta\bm{\Xi}^\sT \bx)\]_\nu\bxi_\nu }-
\norm{\sum^{\kappa}_{\nu=1} \[\text{$\alpha$-entmax}\(\beta \bm{\Xi}^\sT \bx\)\]_\nu \bxi_\nu}\leq 0,
\end{align}
which gives
\bea
\norm{\calT(\bx)-\bxi_\mu} 
&\leq& 
\norm{\calT_{\text{Dense}}(\bx)-\bxi_\mu}.
\eea

\paragraph{For $2\ge\alpha\ge1$:}
Then, we derive the upper bound on $\norm{\calT_{\text{dense}}(\bx)-\bxi_\mu}$ based on \cite[Theorem~2.2]{hu2023SparseHopfield}:
\begin{align}
\norm{\calT_{\text{dense}}(\bx)-\bxi_\mu}
&=
\norm{\sum_{\nu=1}^M [\Softmax(\beta\bm{\Xi}^\sT\bx)]_\nu\bxi_\nu-\bxi_\mu}\\
&=
\norm{\sum_{\nu=1, \nu\neq\mu}^M[\Softmax(\beta\bm{\Xi}^\sT\bx)]_\nu\bxi_\nu-(1-\Softmax(\beta\bm{\Xi}^\sT\bx))\bxi_\mu}\\
&\leq
2\Tilde{\epsilon}m,
\end{align}
where $\tilde{\epsilon}\coloneqq (M-1)\exp{-\beta \tilde{\Delta}_\mu}
=(M-1)\exp{-\beta \(\Braket{\bxi_\mu,\bx}-\Max_{\nu\in[M]}\Braket{\bxi_\mu,\bxi_\nu}\)}
$.
Consequently,
\eqref{eqn:eps_sparse_dense} results from above and \cite[Theorem~4,5]{ramsauer2020hopfield}.

\paragraph{For $\alpha\ge2$.}
Following the setting of $\alphaentmax$ in \cite{peters2019sparse}, the equation
\bea
2\text{-EntMax}(\beta\bm{\Xi}^\sT\bx)=\Sparsemax(\beta\bm{\Xi}^\sT\bx)
\eea
holds.
According to the closed form solution of $\Sparsemax$ in \cite{martins2016softmax}, 
it holds
\begin{align}
[ \Sparsemax \(\beta \bm{\Xi}^\sT \bx\) ]_\mu 
&\le 
\[\beta \bm{\Xi}^\sT \bx \]_{\mu}-\[\beta \bm{\Xi}^\sT \bx\]_{(\kappa)}+\frac{1}{\kappa},
\label{eqn:sparsemax_upper_identity}
\end{align}
for all $\mu\in[M]$.
Then, the sparsemax retrieval error is 
\begin{align}
\norm{\calT_{\Sparsemax}\(\bx\)-\bxi^\mu}&=
\norm{\bm{\bm{\Xi}}\Sparsemax\(\beta \bm{\bm{\Xi}}^\sT \bx\)-\bxi^\mu}
=\norm{\sum_{\nu=1}^{\kappa} \bxi_{(\nu)} \[\Sparsemax\(\beta \bm{\bm{\Xi}}^\sT \bx\) \]_{(\nu)}-\bxi^\mu}\nonumber\\
&\le 
m+m\beta \norm{\sum^{\kappa}_{\nu=1} \(\[ \bm{\Xi}^\sT \bx \]_{(\nu)}-\[ \bm{\Xi}^\sT \bx\]_{(\kappa)}+\frac{1}{\beta\kappa}\)\frac{\bxi_{(\nu)}}{m}}
\annot{By \eqref{eqn:sparsemax_upper_identity}}\\
&\le 
m+d^{\nicefrac{1}{2}}m\beta \[\kappa \(\Max_{\nu\in[M]}\Braket{\bxi_\nu,\bx}-\[ \bm{\Xi}^\sT \bx\]_{(\kappa)}\)+\frac{1}{\beta}\].
\end{align}
By the first inequality of \cref{thm:eps_sparse_dense}, for $\alpha\geq 2$, we have
\bea
\norm{\calT(\bx)-\bxi_\mu}\leq\norm{\calT_{\Sparsemax}(\bx)-\bxi_\mu}\leq m+d^{\nicefrac{1}{2}}m\beta \[\kappa \(\Max_{\nu\in[M]}\Braket{\bxi_\nu,\bx}-\[ \bm{\Xi}^\sT \bx\]_{(\kappa)}\)+\frac{1}{\beta}\],\nonumber
\eea
which completes the proof of \eqref{eqn:eps_sparse_over_2}.
\end{proof}

\subsection{\cref{thm:memory_capacity}}
\label{proof:memory_capacity}

\begin{claim}
Our proof, built on \cite[Lemma~2.1]{hu2023SparseHopfield}, proceeds in 3 steps:
\begin{itemize}
    \item \textbf{(Step 1.)} We establish a more refined well-separation condition, ensuring that patterns $\{\bxi_\mu\}_{\mu\in[M]}$ are well-stored in $\calH$ and can be retrieved by $\calT$ with an error $\epsilon$ at most $R$.

    \item \textbf{(Step 2.)} This condition is then related to the cosine similarity of memory patterns, from which we deduce an inequality governing the probability of successful pattern storage and retrieval.

    \item \textbf{(Step 3.)} We pinpoint the conditions for exponential memory capacity and confirm their satisfaction.
\end{itemize}
Since the generalized sparse Hopfield shares the same well-separation condition (shown in below \cref{thm:well_separation_condition}), it has the same exponential memory capacity as the sparse Hopfield model \cite[Lemma~3.1]{hu2023SparseHopfield}.
For completeness, we restate  the proof of \cite[Lemma~3.1]{hu2023SparseHopfield} below.
\end{claim}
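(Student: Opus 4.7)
The plan is to convert the geometric ``stored'' criterion of \cref{def:stored_and_retrieved} into a deterministic well-separation condition using the retrieval-error estimate of \cref{thm:eps_sparse_dense}, and then promote that deterministic condition to a high-probability statement over memory patterns drawn uniformly from the sphere of radius $m$. Solving the resulting transcendental inequality with the principal branch of the Lambert $W$ function then delivers the claimed bound.

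First, I would observe that, per \cref{def:stored_and_retrieved}, it suffices to guarantee $\calT(S_\mu)\subseteq S_\mu$ for every $\mu$, i.e.\ $\norm{\calT(\bx)-\bxi_\mu}\le R$ for all $\bx\in S_\mu$. Applying the $2\ge\alpha\ge 1$ branch of \cref{thm:eps_sparse_dense}, this is implied by the deterministic well-separation condition $2m(M-1)\, e^{-\beta\,\tilde{\Delta}_\mu}\le R$, where $\tilde{\Delta}_\mu\coloneqq \Min_{\bx\in S_\mu}\Braket{\bxi_\mu,\bx}-\Max_{\nu\neq\mu}\Braket{\bxi_\mu,\bxi_\nu}$. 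Since $\bx\in S_\mu$ gives $\Braket{\bxi_\mu,\bx}\ge m^2-mR$, the requirement reduces to a uniform upper bound on the maximum pairwise correlation $\Max_{\nu\neq\mu}\Braket{\bxi_\mu,\bxi_\nu}$, which is exactly the quantity one can control when the $\bxi_\mu$ are random. Crucially, because the sparse retrieval error is pointwise dominated by the dense one (the first inequality of \cref{thm:eps_sparse_dense}), the threshold produced here is strictly weaker than the separation threshold of \cite[Theorem~3]{ramsauer2020hopfield}, which is exactly what should give $M\ge M_{\text{Dense}}$ at the end.

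Second, I would sample the $\bxi_\mu$ i.i.d.\ uniformly from the sphere of radius $m$ in $\R^d$ and invoke standard sub-Gaussian concentration on the sphere: for independent $\bxi_\mu, \bxi_\nu$, the normalized inner product $\Braket{\bxi_\mu,\bxi_\nu}/m^2$ has a variance proxy of order $1/(d-1)$. A union bound over the $M(M-1)$ ordered pairs then bounds the probability that the deterministic well-separation condition fails by roughly $M^2\, e^{-c(d-1)\tau^2}$ for the appropriate threshold $\tau$; requiring this to be at most $p$ yields an implicit inequality coupling $M$, $d$, $\beta$, $m$, and $R$.

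Third, I would solve this implicit constraint for $M$. The subtlety is that the threshold $\tau$ itself depends on $\log(M-1)$ through the prefactor in the retrieval-error bound, so the constraint has the transcendental shape $M \asymp e^{(d-1)\,g(\log M)}$ for an affine $g$. Substituting $C^{(d-1)/4}\asymp M/\sqrt{p}$ turns this into a fixed-point equation of the form $C\log C = \text{const}$, precisely the setting where the principal branch of the Lambert $W$ function delivers the closed form $C = b/W_0(e^{a+\ln b})$ with the stated $a$ and $b$. The comparison $M\ge M_{\text{Dense}}$ then follows by monotonicity: repeating the derivation with the (larger) dense retrieval error only tightens the constraint on $M$. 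I expect the main obstacle to be bookkeeping --- matching the sphere-concentration tail constant and the precise form of the transcendental inequality so that they collapse exactly into the stated Lambert $W$ closed form, which is where the argument has to specialize \cite[Lemma~3.1]{hu2023SparseHopfield}.
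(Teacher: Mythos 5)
Your three-step plan (deterministic well-separation condition from the retrieval-error bound, then a probabilistic statement over patterns sampled from the sphere of radius $m$, then a Lambert-$W$ fixed point) mirrors the paper's structure, but two concrete gaps keep it from proving the stated lemma. First, in Step 1 you discard the slack $\delta\coloneqq\norm{\calT_{\text{Dense}}(\bx)-\bxi_\mu}-\norm{\calT(\bx)-\bxi_\mu}\ge 0$: the only quantitative estimate you invoke is the $2\ge\alpha\ge1$ branch of \cref{thm:eps_sparse_dense}, which is exactly the dense bound, so your deterministic condition $2m(M-1)\exp{-\beta\tilde{\Delta}_\mu}\le R$ is identical to the dense separation condition. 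From an identical constraint you can only recover an identical lower bound, so the stated constant $a$ (which has $R+\delta$ in the logarithm) is not obtained, and the claim that $M\ge M_{\text{Dense}}$ ``follows by monotonicity'' fails --- monotonicity applied to the same threshold gives equality of bounds, not an improvement. Indeed your sentence that the threshold is ``strictly weaker than the separation threshold of the dense model'' is not true as written. The paper's Step 1 (\cref{thm:well_separation_condition}) keeps $\delta$ explicitly, which relaxes the condition to $\Delta_\mu\ge\frac{1}{\beta}\ln\(\frac{2(M-1)m}{R+\delta}\)+2mR$, and the comparison $M\ge M_{\text{Dense}}$ for sufficiently large $\beta$ is then extracted from this refinement together with the asymptotic expansion of $W_0$.

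Second, in Step 2 a generic sub-Gaussian tail with a union bound, $M^2\exp{-c(d-1)\tau^2}$, does give exponential-in-$d$ capacity qualitatively, but it cannot reproduce the lemma's constants. The paper (restating \cite[Lemma~3.1]{hu2023SparseHopfield}) controls the minimal pairwise angle $\theta_{\min}$ of $M$ uniform points on the sphere via the exact small-cap result of \cite[Lemma~3.5]{brauchart2018random}, combined with $\cos{\theta_{\min}}\le 1-\theta_{\min}^2/5$, which produces a failure probability of the polynomial-in-$\tau$ form $\half\gamma_{d-1}5^{\frac{d-1}{2}}M^2m^{-(d-1)}[\cdot]^{\frac{d-1}{2}}$; the factor $5$ is precisely what appears in $b=\nicefrac{4m^2\beta}{5(d-1)}$ and hence in the closed form $C=\nicefrac{b}{W_0(\exp\{a+\ln b\})}$. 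You flag this matching as ``bookkeeping,'' but it is the substantive content of Steps 2--3: without the minimal-angle distribution (or an equivalent exact spherical-cap computation), the transcendental constraint does not collapse to the stated Lambert-$W$ expression. Note also that the paper's own proof is shorter than your plan suggests: its key observation is that the generalized sparse model satisfies the \emph{same} well-separation condition as the sparse model of \cite{hu2023SparseHopfield}, after which the capacity argument is imported verbatim.
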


\paragraph{Step 1.} 
To analyze the memory capacity of the proposed model, we first present the following two auxiliary lemmas.

\begin{lemma}
\label{thm:well_separation_condition}[Corollary~3.1.1 
 of \cite{hu2023SparseHopfield}]
Let $\delta\coloneqq \norm{\calT_{\text{Dense}}-\bxi_\mu}-\norm{\calT-\bxi_\mu}$.
Then, the {well-separation} condition can be formulated as:
\bea
\Delta_\mu \ge 
\frac{1}{\beta}\ln(\frac{2(M-1)m}{R+\delta})+2mR.
\eea
Furthermore, if $\delta=0$, this bound reduces to {well-separation} condition of Softmax-based Hopfield model.
\end{lemma}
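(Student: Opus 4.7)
The plan is to translate ``well-separation'' into a quantitative requirement on the one-step retrieval error and then invert that requirement into a lower bound on $\Delta_\mu$. By \cref{def:stored_and_retrieved}, $\bxi_\mu$ is stored provided $\calT$ leaves $S_\mu$ invariant, i.e.\ $\|\calT(\bx)-\bxi_\mu\|\le R$ for every $\bx\in S_\mu$; combined with the monotone energy descent from \cref{lemma:retrieval_dyn} this yields a fixed point inside $S_\mu$ and hence retrieval to precision $R$. So the proof reduces to finding sufficient conditions on $\Delta_\mu$ under which the one-step error is at most $R$.

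Using the definition $\delta\coloneqq\|\calT_{\text{Dense}}(\bx)-\bxi_\mu\|-\|\calT(\bx)-\bxi_\mu\|$ (which is nonnegative by the first inequality of \cref{thm:eps_sparse_dense}), the condition $\|\calT(\bx)-\bxi_\mu\|\le R$ is equivalent to $\|\calT_{\text{Dense}}(\bx)-\bxi_\mu\|\le R+\delta$. Plugging in the dense retrieval bound~\eqref{eqn:eps_sparse_dense}, it suffices to enforce
\begin{equation*}
2(M-1)m\,\exp\!\bigl(-\beta\,\widetilde{\Delta}_\mu(\bx)\bigr)\ \le\ R+\delta,
\end{equation*}
where $\widetilde{\Delta}_\mu(\bx)\coloneqq \Braket{\bx,\bxi_\mu}-\Max_{\nu\neq\mu}\Braket{\bx,\bxi_\nu}$ is the pointwise separation at the query $\bx$. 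Taking logarithms gives $\widetilde{\Delta}_\mu(\bx)\ge \beta^{-1}\ln\!\bigl(2(M-1)m/(R+\delta)\bigr)$.

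The last step is to convert the pointwise quantity $\widetilde{\Delta}_\mu(\bx)$ into the pattern-intrinsic quantity $\Delta_\mu\coloneqq \Braket{\bxi_\mu,\bxi_\mu}-\Max_{\nu\neq\mu}\Braket{\bxi_\mu,\bxi_\nu}$ that appears in the statement. Writing $\bx=\bxi_\mu+(\bx-\bxi_\mu)$ with $\|\bx-\bxi_\mu\|\le R$ and applying Cauchy--Schwarz to both inner products yields
\begin{equation*}
\Braket{\bx,\bxi_\mu}\ \ge\ \|\bxi_\mu\|^{2}-mR,\qquad \Max_{\nu\neq\mu}\Braket{\bx,\bxi_\nu}\ \le\ \Max_{\nu\neq\mu}\Braket{\bxi_\mu,\bxi_\nu}+mR,
\end{equation*}
so $\widetilde{\Delta}_\mu(\bx)\ge \Delta_\mu-2mR$ uniformly for $\bx\in S_\mu$. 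Combining with the previous inequality gives the claimed well-separation bound $\Delta_\mu\ge \beta^{-1}\ln(2(M-1)m/(R+\delta))+2mR$. Setting $\delta=0$ (the $\calT=\calT_{\text{Dense}}$ limit) recovers the softmax-based condition of~\cite{ramsauer2020hopfield}, proving the final assertion.

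The main technical obstacle is the bookkeeping in the last step: the factor $2mR$ (rather than $mR$) depends on applying Cauchy--Schwarz to \emph{both} inner products in $\widetilde{\Delta}_\mu(\bx)$, so one must use the version of the dense bound in which both terms of the exponent involve $\bx$ (not $\bxi_\mu$). A secondary subtlety is justifying that bounding the one-step error by $R$ is actually sufficient for ``stored'' in the sense of \cref{def:stored_and_retrieved}: this follows because $S_\mu$-invariance of $\calT$ together with the monotone energy descent of \cref{lemma:retrieval_dyn} forces every limit point inside $S_\mu$ to be a generalized fixed point there, via \cref{lemma:convergence_sparse}.
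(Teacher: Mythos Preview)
Your proposal is correct and follows essentially the same route as the paper: reduce ``stored'' to the $S_\mu$-invariance condition $\|\calT(\bx)-\bxi_\mu\|\le R$, rewrite this via $\delta$ as a bound on the dense error, invoke the dense one-step estimate $2(M-1)m\,e^{-\beta\tilde\Delta_\mu}$, and use Cauchy--Schwarz on both inner products to pass from $\tilde\Delta_\mu(\bx)$ to $\Delta_\mu-2mR$. Your discussion of the $2mR$ bookkeeping (that both terms of the exponent must involve $\bx$) is precisely the point, and your argument is in fact cleaner than the paper's, which contains a couple of sign/direction typos in the intermediate steps.
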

\begin{proof}[Proof of \cref{thm:well_separation_condition}]

Let $\calT_{\text{Dense}}$ be the retrieval dynamics given by the dense modern Hopfield model \cite{ramsauer2020hopfield},
and $\norm{\calT(\bx)-\bxi_\mu}$ and $\norm{\calT_{\text{Dense}}(\bx)-\bxi_\mu}$ be the retrieval error of generalized sparse and dense modern Hopfield model, respectively.
By \cref{thm:eps_sparse_dense}, we have
\bea
\norm{\calT(\bx)-\bxi_\mu} 
\leq 
\norm{\calT_{\text{Dense}}(\bx)-\bxi_\mu}.
\eea

By \cite[Lemma~A.4]{ramsauer2020hopfield}, we have
\begin{align}
\norm{\calT_{\text{Dense}}(\bx)-\bxi_\mu}
\label{eqn:bound_eps}
\leq
2\tilde{\epsilon} m,
\end{align}
where $\tilde{\epsilon}\coloneqq (M-1)\exp{-\beta \tilde{\Delta}_\mu}
=(M-1)\exp{-\beta \(\Braket{\bxi_\mu,\bx}-\Max_{\nu\in[M]}\Braket{\bxi_\mu,\bxi_\nu}\)}
$.
Then, by the Cauchy-Schwartz inequality
\bea
\abs{\Braket{\bxi_\mu,\bxi_\mu}-\Braket{\bx,\bxi_\mu}}
\leq
\norm{\bxi_\mu-\bx} \cdot \norm{\bxi_\mu}
\leq
\norm{\bxi_\mu-\bx}m,\quad\forall \mu\in [M],
\eea
we observe that $\tilde{\Delta}_\mu$ can be expressed in terms of $\Delta_\mu$: 
\begin{align}
\Tilde{\Delta}_\mu 
&\le
\Delta_\mu - 2 \norm{\bxi_\mu-\bx}m=\Delta_\mu-2mR,
\end{align}
where $R$ is radius of the sphere $S_{\mu}$.
Thus, inserting the upper bound given by \eqref{eqn:bound_eps} into \eqref{eqn:eps_sparse_dense}, we obtain
\bea
\norm{\calT(\bx)-\bxi_\mu}
&\leq&
\norm{\calT_{\text{Dense}}(\bx)-\bxi_\mu} 
\leq 
2 \Tilde{\epsilon} m\\
&\leq&
2(M-1) \exp{-\beta\(\Delta_\mu-2mR\)}m.
\eea
Then, for any given $\delta\coloneqq \norm{\calT_{\text{Dense}}(\bx)-\bxi_\mu}-\norm{\calT(\bx)-\bxi_\mu}\le 0 $, the retrieval error $\norm{\calT(\bx)-\bxi_\mu}$ has an upper bound:
\bea
\norm{\calT(\bx)-\bxi_\mu} \le 2(M-1) \exp{-\beta\(\Delta_\mu-2mR +\delta\)}m - \delta
\le \norm{\calT_{\text{Dense}}(\bx)-\bxi_\mu} .
\eea
Therefore, for $\calT$ to be a mapping $\calT:S_\mu\to S_\mu$, we need the well-separation condition
\bea
\Delta_\mu
\geq
\frac{1}{\beta}\ln(\frac{2(M-1)m}{R+\delta})+2mR.
\eea
\end{proof}

\begin{lemma}[\cite{hu2023SparseHopfield,ramsauer2020hopfield}]
\label{lemma:W} 
If the identity
\begin{equation}
ac + c\ln{c} - b = 0,
\end{equation}
holds for all real numbers $a, b \in \mathbb{R}$, then $c$ takes a solution:
\begin{equation}
c = \frac{b}{W_0(\exp(a+\ln{b}))}.
\end{equation}
\end{lemma}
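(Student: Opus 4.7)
The plan is to reduce the transcendental equation $ac + c\ln c - b = 0$ to the defining equation of the Lambert $W$ function, $W(z)e^{W(z)} = z$, via a substitution that isolates a single $x e^x$ form. Since everything is a chain of algebraic rearrangements and one application of the definition of $W_0$, I expect no serious obstacle; the only care needed is to identify the correct substitution and to make sure the argument of $W_0$ sits in the range where $W_0$ is the relevant (principal) branch.

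Concretely, first I would divide through by $c$ (implicitly assuming $c\neq 0$, which is fine because $b\neq 0$ forces $c\neq 0$ via $c(a+\ln c)=b$) to obtain
\begin{equation*}
a + \ln c = \frac{b}{c}.
\end{equation*}
Next, introduce the new variable $u \coloneqq b/c$, so that $c = b/u$ and $\ln c = \ln b - \ln u$. Substituting these into the previous display yields
\begin{equation*}
a + \ln b - \ln u = u,
\qquad\text{equivalently}\qquad
u + \ln u = a + \ln b.
\end{equation*}
Exponentiating both sides produces the canonical form
\begin{equation*}
u\,e^{u} = \exp(a + \ln b),
\end{equation*}
which is exactly the functional equation solved by the Lambert $W$ function. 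By definition of the principal branch, $u = W_0\!\bigl(\exp(a+\ln b)\bigr)$.

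Finally, undoing the substitution $u = b/c$ gives the claimed closed form
\begin{equation*}
c = \frac{b}{u} = \frac{b}{W_0\!\bigl(\exp(a+\ln b)\bigr)},
\end{equation*}
which matches the statement of the lemma. The only subtlety worth flagging in the write-up is the choice of branch: since $\exp(a+\ln b) = b\,e^{a}$ is positive whenever $b>0$, the argument lies in $(0,\infty)$, where $W_0$ is single-valued and real, so selecting the principal branch is justified. (If $b$ is allowed to be negative in the intended regime, one would need to argue about $W_{-1}$ instead; in the application of this lemma inside the proof of \cref{thm:memory_capacity}, $b = 4m^2\beta/\bigl(5(d-1)\bigr) > 0$, so $W_0$ is the right choice.)
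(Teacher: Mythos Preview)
Your proposal is correct and follows essentially the same approach as the paper: divide by $c$, recognize $b/c$ as the key quantity satisfying $(b/c)+\ln(b/c)=a+\ln b$, exponentiate into the form $u e^{u}=\exp(a+\ln b)$, and invert via $W_0$. Your explicit substitution $u=b/c$ and your remark on the branch choice are minor expository additions, not a different method.
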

\begin{proof}[Proof of \cref{lemma:W}]
We restate the proof of \cite[Lemam~3.1]{hu2023SparseHopfield} here for completeness.

With the given equation $ac + c\ln{c} - b = 0$, we solve for $c$ by following steps:
\begin{align*}
ac + c\ln{c} - b &= 0, \\
a + \ln{c} &= \frac{b}{c}, \\
\frac{b}{c} + \ln\left(\frac{b}{c}\right) &= a + \ln{b}, \\
\frac{b}{c}\exp\left(\frac{b}{c}\right) &= \exp(a + \ln{b}), \\
\frac{b}{c} &= W_0(\exp(a + \ln{b})), \\
c &= \frac{b}{W_0(\exp(a + \ln{b}))}.
\end{align*}
\end{proof}

Then, we present the main proof of \cref{thm:memory_capacity}.

\begin{proof}[Proof of \cref{thm:memory_capacity}]
Since the generalized Hopfield model shares the same well-separation condition as the sparse Hopfield model \cite{hu2023SparseHopfield}, the proof of the exponential memory capacity automatically follows that of \cite{hu2023SparseHopfield}.
We restate the proof of \cite[Corollary~3.1.1]{hu2023SparseHopfield} here for completeness.

\paragraph{(Step 2.) \& (Step 3.)}
Here we define $\Delta_{\min}$ and $\theta_{\mu\nu}$ as $\Delta_{\min}\coloneqq\Min_{\mu\in[M]}\Delta_\mu$ and the angle between two patterns $\bxi^\mu$ and $\bxi^\nu$, respectively. 
Intuitively, $\theta_{\mu\nu}\in[0,\pi]$ represent the pairwise correlation of two patterns the two patterns and hence 
\bea
\Delta_{\min}=\Min_{1\le\mu\le\nu\le M}\[m^2\(1-\cos{\theta_{\mu\nu}}\)\]
=m^2 \[1-\cos{\theta_{\min}}\],
\eea
where $\theta_{\min}\coloneqq \Min_{1\le\mu\le\nu\le M} \theta_{\mu\nu}\in[0,\pi]$.

From the well-separation condition \eqref{thm:well_separation_condition}, we have
\bea
\Delta_\mu\ge \Delta_{\min}\ge 
\frac{1}{\beta}\ln(\frac{2(M-1)m}{R+\delta})+2mR.
\eea
Hence, we have
\bea
\label{eqn:cos}
m^2 \[1-\cos{\theta_{\min}}\] 
\ge 
\frac{1}{\beta}\ln(\frac{2(M-1)m}{R+\delta})+2mR.
\eea

Therefore, we are able to write down the probability of successful storage and retrieval, i.e. minimal separation $\Delta_{\min}$ satisfies \cref{thm:well_separation_condition}:
\bea
\label{success}
P\( m^2 \[1-\cos{\theta_{\min}}\] \ge  \frac{1}{\beta}\ln(\frac{2(M-1)m}{R+\delta})+2mR \) = 1-p.
\eea
By \cite[(4.22.2)]{olver2010nist}, it holds
\bea
\cos{\theta_{\min}} \le 1- \frac{\theta_{\min}^2}{5} \quad \text{for} \quad 0 \le \cos{\theta_{\min}} \le 1,
\eea
and hence
\bea
P\(M^{\frac{2}{d-1}}\theta_{\min} \ge \frac{\sqrt{5}M^{\frac{2}{d-1}}}{m}\[\frac{1}{\beta}\ln(\frac{2(M-1)m}{R+\delta})+2mR\]^\half\) = 1-p.
\label{eqn:ineq1}
\eea
Here we introduce $M^{\nicefrac{2}{d-1}}$ on both sides in above for later convenience. 

Let
$
\omega_d
\coloneqq 
\frac{2\pi^{\nicefrac{d+1}{2}}}{\Gamma\(\frac{d+1}{2}\)} 
$, be the surface area of a $d$-dimensional unit sphere, where $\Gamma(\cdot)$ represents the gamma function.
By \cite[Lemma~3.5]{brauchart2018random}, it holds
\begin{align}
1-p
\ge
1-\half\gamma_{d-1}5^{\frac{d-1}{2}}M^2 m^{-(d-1)}\[\frac{1}{\beta}\ln(\frac{2(M-1)m}{R+\delta})+2mR\]^{\frac{d-1}{2}},
\label{eqn:ineq2}
\end{align}
where $\gamma_d$ is characterized as the ratio between the surface areas of the unit spheres in $(d-1)$ and $d$ dimensions, respectively:
$\gamma_d \coloneqq \frac{1}{d} \frac{\omega_{d-1}}{\omega_d}$.

Since
$M=\sqrt{p}C^{\frac{d-1}{4}}$ is always true for $d, M\in \mathbb{N}_+$, $p\in [0,1]$ and  some real values $C\in\R$,
we have
\bea
5^{\frac{d-1}{2}}C^{\frac{d-1}{2}} m^{-(d-1)}\Bigg\{\frac{1}{\beta}\ln{\[\frac{2\(\sqrt{p}C^{\frac{d-1}{4}}-1\)m}{R+\delta}\]}+\frac{1}{\beta}\Bigg\}^{\frac{d-1}{2}}\le 1.
\label{eqn:ineq3}
\eea

Then, we rearrange above as
\bea
\frac{5C}{m^2\beta}\Bigg\{\ln\[\frac{2\(\sqrt{p}C^{\frac{d-1}{4}}-1\)m}{R+\delta}\]+1\Bigg\}-1 \leq 0,
\eea
and identify
\bea
\label{eqn:identified_ab}
a\coloneqq \frac{4}{d-1}\Bigg\{\ln\[\frac{2m(\sqrt{p}-1)}{R+\delta}\]+1\Bigg\}, 
\quad 
b\coloneqq \frac{4m^2\beta}{5(d-1)}.
\eea

By \cref{lemma:W}, we have
\bea
\label{eqn:c}
C=\frac{b}{W_0({\exp\{a+\ln{b}\}})},
\eea
where $W_0(\cdot)$  is the upper branch of the Lambert $W$ function.
Since the domain of the Lambert $W$ function is $x>(-\nicefrac{1}{e},\infty)$ and {the fact $\exp{a+\ln{b}}>0$}, the solution for \eqref{eqn:c} exists.
When the inequality \eqref{eqn:ineq3} holds, we arrive the lower bound on the exponential storage capacity $M$:
\bea
M
\geq
\sqrt{p}C^{\frac{d-1}{4}}.
\eea
In addition, by the asymptotic expansion of the Lambert $W$ function \cite[Lemma~3.1]{hu2023SparseHopfield}, it also holds
$M\ge M_{\text{Dense}}$, where $M_{\text{Dense}}$ is the memory capacity of the dense modern Hopfield model \cite{ramsauer2020hopfield}.
\end{proof}

\section{Methodology Details}

\subsection{The Multi-Step GSH Updates}

$\mathtt{GSH}$ inherits the capability of multi-step update for better retrieval accuracy, which is summarized in below \cref{alg:multistep} for a given number of update steps $\kappa$.
In practice, we find that a single update suffices, consistent with our theoretical finding \eqref{eqn:eps_sparse_dense} of  \cref{thm:eps_sparse_dense}.

\begin{algorithm}
\caption{Multi-Step Generalized Sparse Hopfield Update}\label{alg:multistep}
\begin{algorithmic}
\Require $\kappa \in \mathbb{R} \geq 1, \bQ \in \R^{\text{len}_Q \times D_Q}, \bY \in \R^{\text{len}_Y \times D_Y}$ 
\For{$i \rightarrow 1 \text{ to } \kappa $}
\State $\bQ^{\text{new}} = \mathtt{GSH}\( \bQ,  \bY \)$ \hfill{\textit{Hopfield Update}}
\State $\bQ \leftarrow \bQ^{\text{new}}$
\EndFor \\
\Return $\bQ$
\end{algorithmic}
\end{algorithm}

\subsection{$\mathtt{GSHPooling}$ and $\mathtt{GSHLayer}$}

Here we provide the operational definitions of the $\mathtt{GSHPooling}$ and the $\mathtt{GSHLayer}$.
\begin{definition}[Generalized Sparse Hopfield Pooling ($\mathtt{GSHPooling}$)]
    Given inputs $\bY \in \mathbb{R}^{\text{len}_Y \times D_Y}$, and $\text{len}_Q$ query patterns $\bQ \in \R^{\text{len}_Q \times D_K}$ the 1-step Sparse Adaptive Hopfield Pooling update is 
    \begin{equation}
        \mathtt{GSHPooling} \( \bY \) = \alphaentmax \( \bQ \bK^T/\sqrt{D_k} \) \bV,
    \end{equation}
    Here we have $\bK, \bV$ equal to $\bV = \bY \mathbf{W}_K \mathbf{W}_V$, $\bK = \bY \bW_K$, and $\mathbf{W}_V \in \mathbb{R}^{D_K \times D_K}, \mathbf{W}_K \in \mathbb{R}^{D_K \times D_K}$. Where $d$ is the dimension of $K$.
    And the query pattern $\bQ$ is a learnable variable, and is independent from the input, the size of $\text{len}_Q$ controls how many query patterns we want to store.
\end{definition}

\begin{definition}[Generalized Sparse Hopfield Layer ($\mathtt{GSHLayer}$)]
    Given inputs $\bY \in \mathbb{R}^{\text{len}_Y \times D_Y}$, and $\text{len}_Q$ query patterns $\bQ \in \R^{\text{len}_Q \times D_K}$ the 1-step Sparse Adaptive Hopfield Layer update is 
    \begin{equation}
        \mathtt{GSHLayer} \( \bR, \bY \) = \alphaentmax \( \bR \bY^T/\sqrt{D_k} \) \bY,
    \end{equation}
    Here $\bR$ is the input and $\bY$ can be either learnable weights or given as an input.
\end{definition}

\subsection{Example: Memory Retrieval for Image Completion}
\label{sec:example_retrieval}
The standard memory retrieval mechanism of Hopfield Models contains two inputs, the query $\bx$ and the associative memory set $\bm{\Xi}$.
The goal is to retrieve an associated memory $\bxi$ most similar to the query $\bx$ from the stored memory set $\bm{\Xi}$.
For example, in \cite{ramsauer2020hopfield}, the query $\bx$ is a corrupted/noisy image from CIFAR10, and the associative memory set $\bm{\Xi}$ is the CIFAR10 image dataset.
All images are flattened into vector-valued patterns.
This task can be achieved by taking the query as $\bR = \bx$ and the associative memory set as $\bY = \bm{\Xi}$ for $\mathtt{GSHLayer}$ with fixed parameters.
After steps of updates, we expect the output of the $\mathtt{GSHLayer}$ to be the recovered version of $\bx$.

\subsection{Pseudo Label Retrieval}

Here, we present the use of the memory retrieval mechanism from modern Hopfield models to generate pseudo-labels for queries $\bR$, thereby enhancing predictions.
Given a set of memory patterns $\bY$ and their corresponding labels $\bY_{\text{label}}$, we concatenate them together to form the \textit{label-included} memory set $\Tilde{\bY}$.
Take CIFAR10 for example, we can concatenate the flatten images along with their one-hot encoded labels together as the memory set.
For the query, we use the input with padded zeros concatenated at the end of it.
The goal here is to ``retrieve" the padding part in the query, which is expected to be the retrieved ``pseudo-label".
Note that this pseudo-label will be a weighted sum over all other labels in the associative memory set.
An illustration of this mechanism can be found in \cref{fig:StanHop}.
For the retrieved pseudo-label, we can either use it as the final prediction, or use it as pseudo-label to provide extra information for the model.

\subsection{Algorithm for STanHop-Net}
Here we summarize the STanHop-Net as below algorithm.
\begin{algorithm}[h]
\caption{STanHop-Net}\label{alg:stacked_STanHop}
\begin{algorithmic}
\Require $L \geq 1, \bZ \in \R^{T \times C \times D_{hidden}^0}$
\For{$\ell \rightarrow 1 \text{ to } L$}
\State $\bZ_{\text{enc}}^\ell = \mathtt{STanHop} \(\text{Coarse-Graining}\( \bZ_{\text{enc}}^{\ell-1}, \Delta \)\)$ \hfill{\textit{encoder forward}}
\EndFor
\State $\bZ_{\text{dec}}^0 = \bE_{\text{dec}}$ \hfill{\textit{learnable positional embedding}}
\For{$\ell \rightarrow 1 \text{ to } L$} \hfill{\textit{decoder forward}}
\State $\Tilde{\bZ}_{\text{dec}}^\ell = \mathtt{STanHop}\( \bZ_{\text{dec}}^{\ell-1} \)$ 
\State $\hat{\bZ}_{\text{dec}}^\ell = \mathtt{GSH}\( \bZ_{\text{dec}}^\ell, \bZ_{\text{enc}}^\ell \) $
\State $\check{\bZ}_{\text{dec}}^\ell =\rm{LayerNorm} 
\(\hat{\bZ}_{\text{dec}}^\ell + \Tilde{\bZ}_{\text{dec}}^\ell\)$
\State $\bZ_{\text{dec}}^\ell =\rm{LayerNorm} \( \check{\bZ}_{\text{dec}}^\ell + \text{\rm{MLP}}\( \check{\bZ}_{\text{dec}}^\ell\) \)$
\EndFor \\
\Return $\bZ_{\text{dec}}^L \in \R^{ \frac{P}{T}
\times C \times D_{\text{hidden}}}$
\end{algorithmic}
\end{algorithm}

\clearpage
\section{Additional Numerical Experiments}
Here we provide additional experimental investigations to back up the effectiveness of our method.

\subsection{Numerical Verification's of Theoretical Results}

\paragraph{Faster Fixed Point Convergence and Better Generalization.}
In \cref{fig:loss_curves}, to support our theoretical results in \cref{sec:method}, 
we numerically analyze the convergence behavior of the $\mathtt{GSH}$, compared with the dense modern Hopfield layer $\mathtt{Hopfield}$.

\begin{figure*}[h]
    \centering
    \includegraphics[width=1\textwidth]{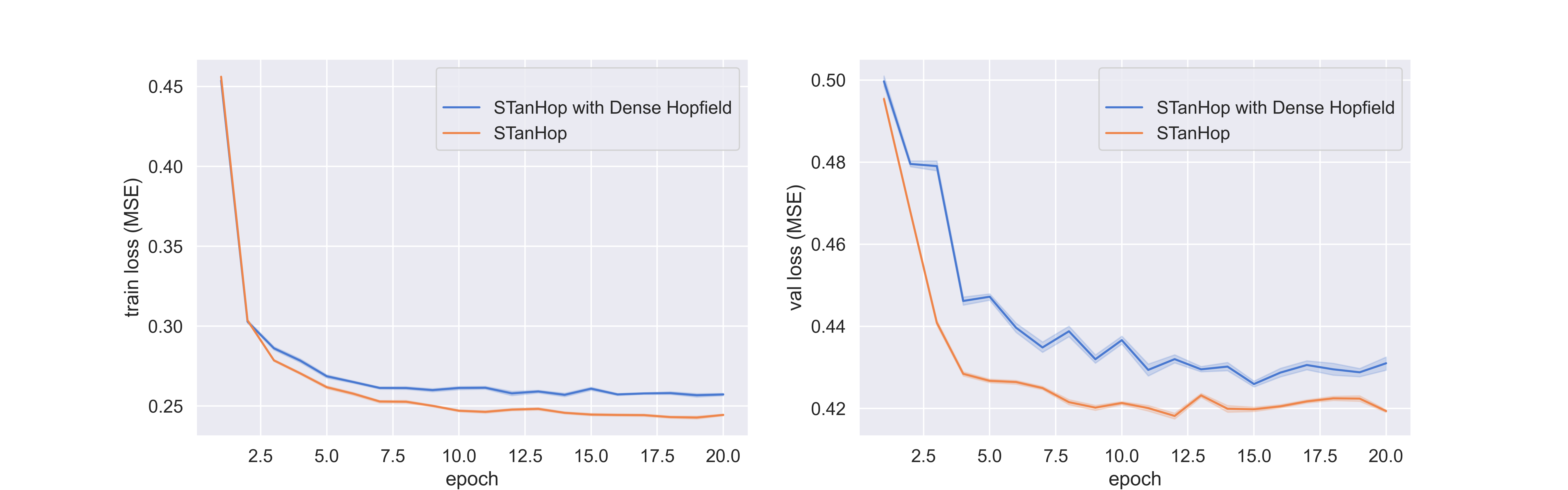}
    \vspace{-2em}
    \caption{\small
    The training and validation loss curves of STanHop (D), i.e. STanHop-Net with dense modern Hopfield $\mathtt{Hopfield}$ layer, and STanHop-Net with $\mathtt{GSH}$ layer.
    The results show that the generalized sparse Hopfield model enjoys faster convergence than the dense model and also obtain better 
    generalization.
    }
    \label{fig:loss_curves}
\end{figure*}

In \cref{fig:loss_curves}, we plot the loss  curves for STanHop-Net using both generalized sparse and dense modern models on the ETTh1 dataset for the multivariate time series prediction tasks.

The results reveal that the generalized sparse Hopfield model ($\mathtt{GSH}$) converges faster than the dense model ($\mathtt{Hopfield}$) and also achieves better generalization. This empirically supports our theoretical findings presented in \cref{thm:eps_sparse_dense}, which suggest that the generalized sparse Hopfield model provides faster retrieval convergence with enhanced accuracy.

\paragraph{Memory Capacity and Noise Robustness.}
Following \cite{hu2023SparseHopfield}, we also conduct experiments verifying our memory capacity and noise robustness theoretical results (\cref{thm:memory_capacity} and \cref{thm:eps_sparse_dense}), and report the results in \cref{fig:capacity_robustness}.
The plots present average values and standard deviations derived from 10 trials.

\begin{figure*}[h!]
    \centering
    \includegraphics[width=0.48\textwidth]{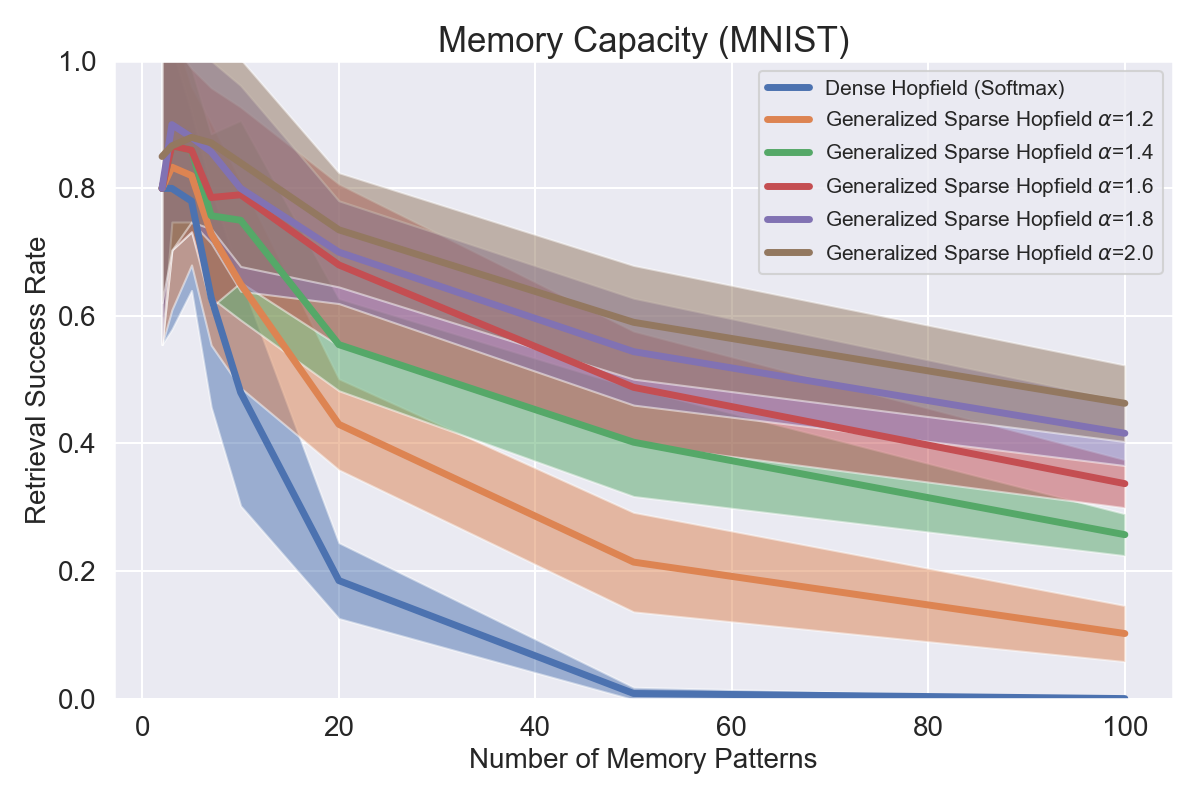}
    \hfill
    \includegraphics[width=0.48\textwidth]{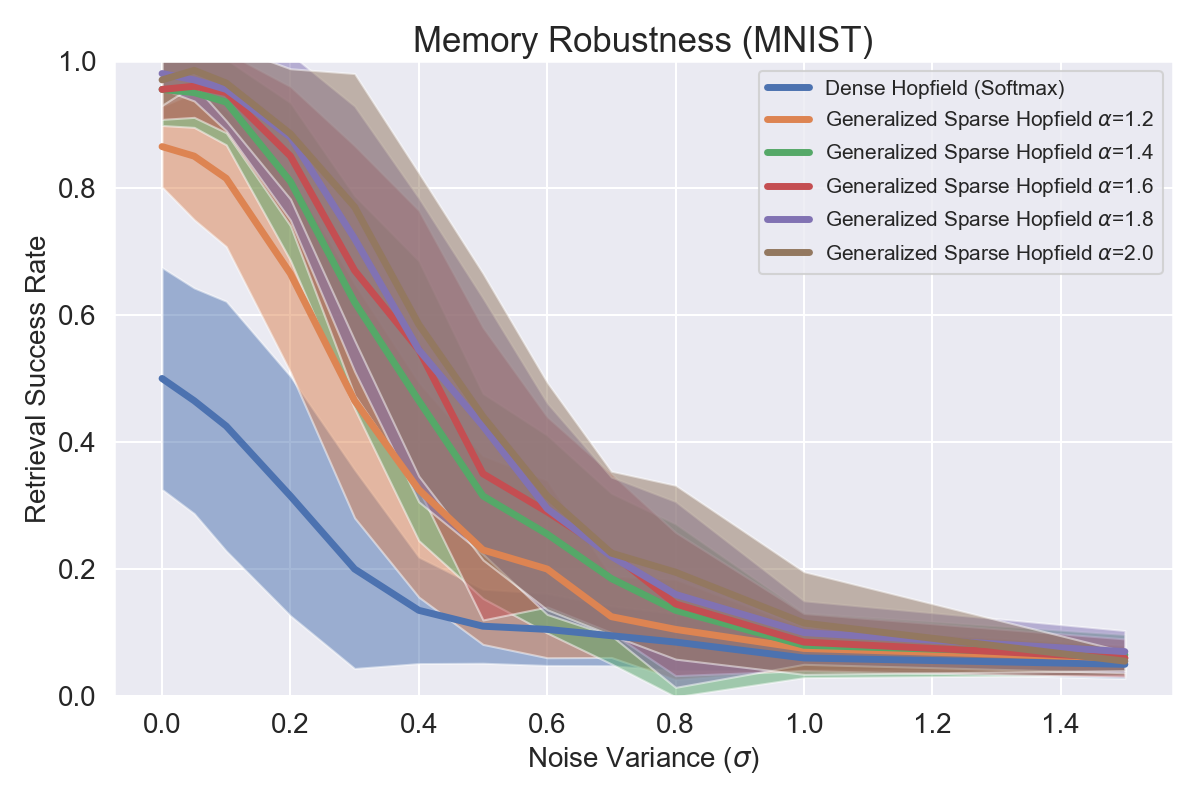}
    \vspace{-1em}
    \caption{\small
    \textbf{Left:} Memory Capacity measured by successful half-masked retrieval rates. 
    \textbf{Right:} Memory Robustness measured by retrieving patterns with various noise levels.
    A query pattern is considered accurately retrieved if its cosine similarity error falls below a specified threshold. 
    We set  error threshold of 20\% and $\beta$=0.01 for better visualization.
    We plot the average and variance from 10 trials. 
    These findings demonstrate the generalized sparse Hopfield model's ability of capturing data sparsity, improved memory capacity and its noise robustness.
    }
    \label{fig:capacity_robustness}
\end{figure*}

Regarding memory capacity (displayed on the left side of \cref{fig:capacity_robustness}), we evaluate the generalized sparse Hopfield model's ability to retrieve half-masked patterns from the MNIST dataset, in comparison to the Dense modern Hopfield model \cite{ramsauer2020hopfield}.

Regarding robustness against noisy queries (displayed on the right side of \cref{fig:capacity_robustness}), we introduce Gaussian noises of varying variances ($\sigma$) to the images.

These findings demonstrate the generalized sparse Hopfield model's ability of capturing data sparsity, improved memory capacity and its noise robustness.

\subsection{Computational Cost Analysis of Memory Modules}
Here we analyze the computational cost between the \textbf{Plug-and-Play} memory plugin module and the baseline.
We evaluate 2 matrices: (i) the number of floating point operations (flops) (ii) number of parameters of the model.
Note that for \textbf{Plug-and-Play} module, the parameter amount will not be affected by the size of external memory set.
The result can be found in \cref{fig:flops} and \cref{fig:params}.

\begin{figure*}[h]
    \centering
    \begin{minipage}{0.48\textwidth}
        \includegraphics[width=\textwidth]{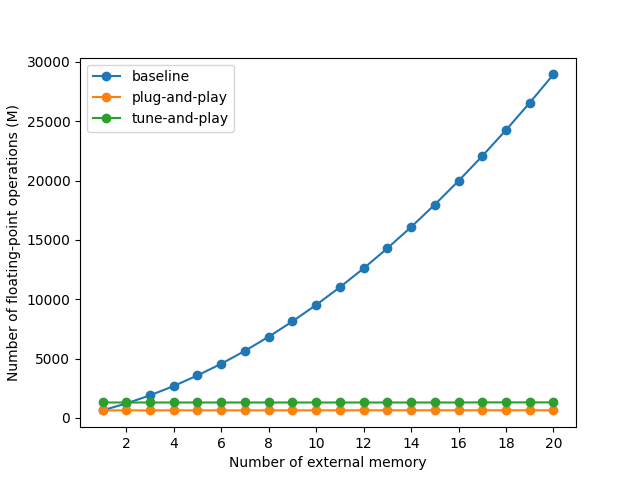}   
        \caption{The number of floating-point operations (flops) (in millions) comparison between \textbf{Plug-and-Play}, \textbf{Tune-and-Play} and the baseline. The result shows that the \textbf{Plug-and-Play}, \textbf{Tune-and-Play}  successfully reduce the required computational cost to process an increased amount of data.}
        \label{fig:flops}
    \end{minipage}\hfill
    \begin{minipage}{0.48\textwidth}
        \includegraphics[width=\textwidth]{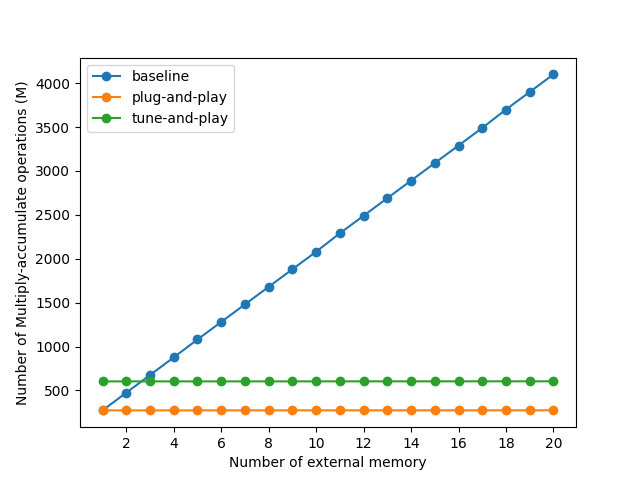}
        \caption{The number of Multiply–accumulate operations (MACs) (in millions) comparison between \textbf{Plug-and-Play}, \textbf{Tune-and-Play} and the baseline. The result shows that both of our memory plugin modules face little MACs increasement while the baseline model MACs increase almost linearly w.r.t. the input size. }
        \label{fig:params}
    \end{minipage}
\end{figure*}

\subsection{Ablation Studies}

\paragraph{Hopfield Model Ablation.}
Beside our proposed generalized sparse modern Hopfield model, we also test STanHop-Net with 2 other existing different modern Hopfield models: the dense modern Hopfield model \cite{ramsauer2020hopfield} and the sparse modern Hopfield model \cite{hu2023SparseHopfield}.
We report their results in \cref{table:result}.

We terms them as \textbf{STanHop-Net (D)} and \textbf{STanHop-Net (S)} where (D) and (S) are for ``Dense'' and ``Sparse'' respectively.

\paragraph{Component Ablation.}
In order to evaluate the effectiveness of different components in our model, we perform an ablation study by removing one component at a time.
In below, we denote Patch Embedding as (PE), StanHop as (SH), Hopfield Pooling as (HP), Multi-Resolution as (MR).
We also denote their removals with ``w/o'' (i.e., without.)

For w/o PE, we set the patch size $P$ equals 1.
For w/o MR, we set the coarse level $\Delta$ as 1.
For w/o SH and w/o HP, we replace those blocks/layers with an MLP layer with GELU activation and layer normalization.

The results are showed in \cref{tab:ab}.
From the ablation study results, we observe that removing the STanHop block gives the biggest negative impact on the performance.
Showing that the STanHop block contributes the most to the model performance.
Note that patch embedding also provides a notable improvement on the performance.
Overall, every component provides a different level of performance boost.

\begin{table}[h!]
    \centering
    \caption{\small
    \textbf{Component Ablation.}
    We conduct component ablation by separately removing Patch Embedding (PE), STanHop (SH), Hopfield Pooling (HP), and Multi-Resolution (MR).
    We report the mean MSE and MAE over 10 runs, with variances omitted as they are all $\le 0.15$\%.
    The results indicate that while every single component in STanHop-Net provides performance boost, the impact of STanHop block on model performance is the most significant among all other components.
    }
    \label{tab:ab}
    \resizebox{\textwidth}{!}{    
    \begin{tabular}{lccccccccccc}
    \toprule
     \multicolumn{2}{c}{Models} & \multicolumn{2}{c}{STanHop} & \multicolumn{2}{c}{w/o PE} & \multicolumn{2}{c}{w/o SH} & \multicolumn{2}{c}{w/o HP} & \multicolumn{2}{c}{w/o MR} \\
    \midrule
    & Metric & MSE & MAE & MSE & MAE & MSE & MAE & MSE & MAE & MSE & MAE  \\
    \midrule
    \multirow{5}{1em}{\rot{ETTh1}} 
    & 24 & 0.294 & 0.360 & 0.318 & 0.368 & 0.305 & 0.365 & 0.306 & 0.363 & 0.307 & 0.363 \\
    & 48 &  0.340 & 0.387 & 0.357 & 0.389 & 0.352 & 0.393 & 0.346 & 0.387 & 0.348 & 0.385\\
    & 168 &  0.420 & 0.452 & 0.454 & 0.476 & 0.480 & 0.500 & 0.434 & 0.455 & 0.447 & 0.464 \\
    & 336 & 0.450 & 0.472 & 0.501 & 0.524 & 0.530 & 0.535 & 0.462 & 0.473 & 0.482 & 0.486 \\
    & 720 &  0.512 & 0.520 & 0.540 & 0.538 & 0.610 & 0.581 & 0.524 & 0.526 & 0.537 & 0.531 \\
    \midrule
    \multirow{5}{1em}{\rot{WTH}} 
    & 24 & 0.292 & 0.341 & 0.318 & 0.365 & 0.335 & 0.375 & 0.340 & 0.374 & 0.325 & 0.373 \\
    & 48 & 0.363 & 0.402 & 0.386 & 0.421 & 0.414 & 0.439 & 0.385 & 0.420 & 0.391 & 0.427 \\
    & 168  & 0.499 & 0.515 & 0.504 & 0.521 & 0.507 & 0.519 & 0.503 & 0.525 & 0.520 & 0.509 \\
    & 336 & 0.499 & 0.515 & 0.514 & 0.529 & 0.532 & 0.541 &  0.513 & 0.528 & 0.533 & 0.542 \\
    & 720 & 0.548 & 0.556 & 0.570 & 0.565 & 0.569 & 0.565 & 0.539 & 0.548 & 0.555 & 0.557 \\
    \bottomrule
    \end{tabular}
    }
\end{table}
\clearpage

\subsection{The Impact of Varying $\alpha$}
We examine the impact of increasing the value of $\alpha$ on memory capacity and noise robustness. 
It is known that as $\alpha$ approaches infinity, the $\alpha$-entmax operation transitions to a hardmax operation \cite{peters2019sparse,correia2019adaptively}. Furthermore, it is also known that memory pattern retrieval using hardmax is expected to exhibit perfect retrieval ability, potentially offering a larger memory capacity than the softmax modern Hopfield model in pure retrieval tasks \cite{millidge2022universal}.
Our empirical investigations confirm that higher values of $\alpha$ frequently lead to higher memory capacity.
We report results only up to $\alpha=5$, as we observed that values of $\alpha$ greater than 5 consistently lead to numerical errors, especially under float32 precision.
It is crucial to note, that while the hardmax operation (realized when $\alpha \rightarrow \infty$) may maximize memory capacity, its lack of differentiability renders it unsuitable for gradient descent-based optimization.

\begin{figure*}[h!]
    \centering
    \includegraphics[width=0.48\textwidth]{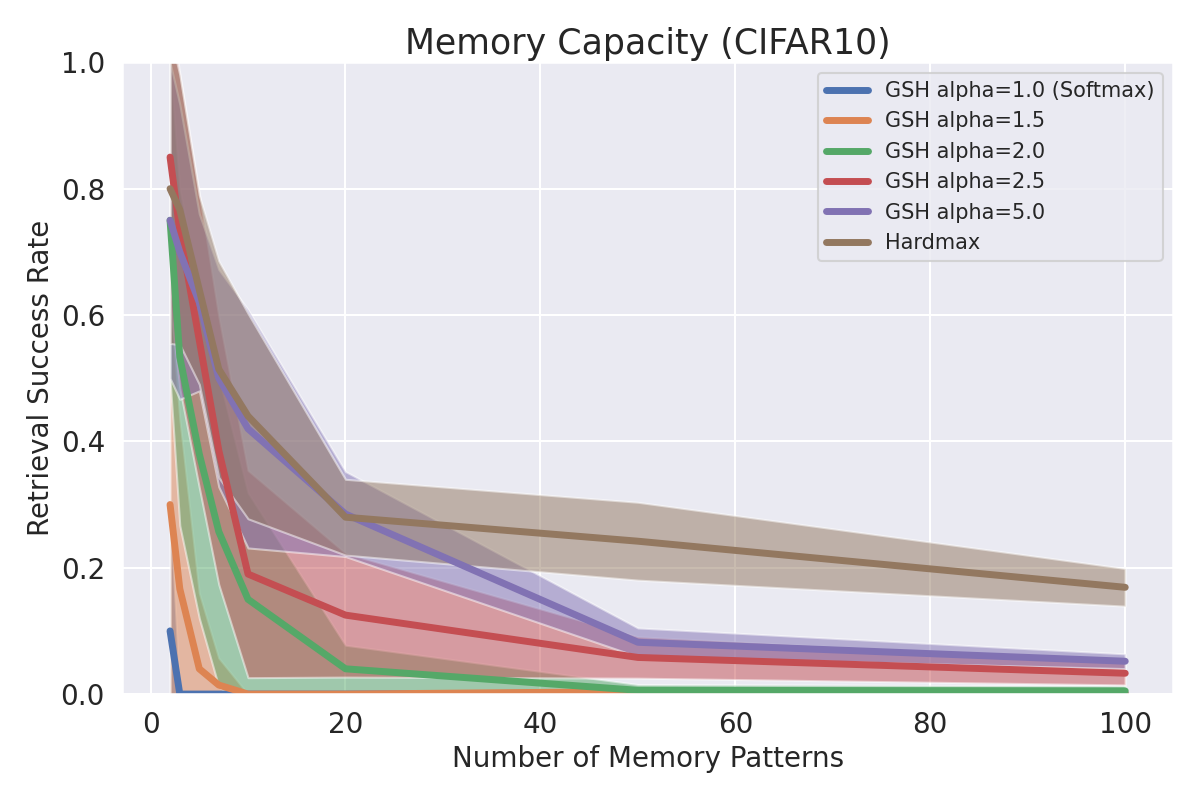}
    \hfill
    \includegraphics[width=0.48\textwidth]{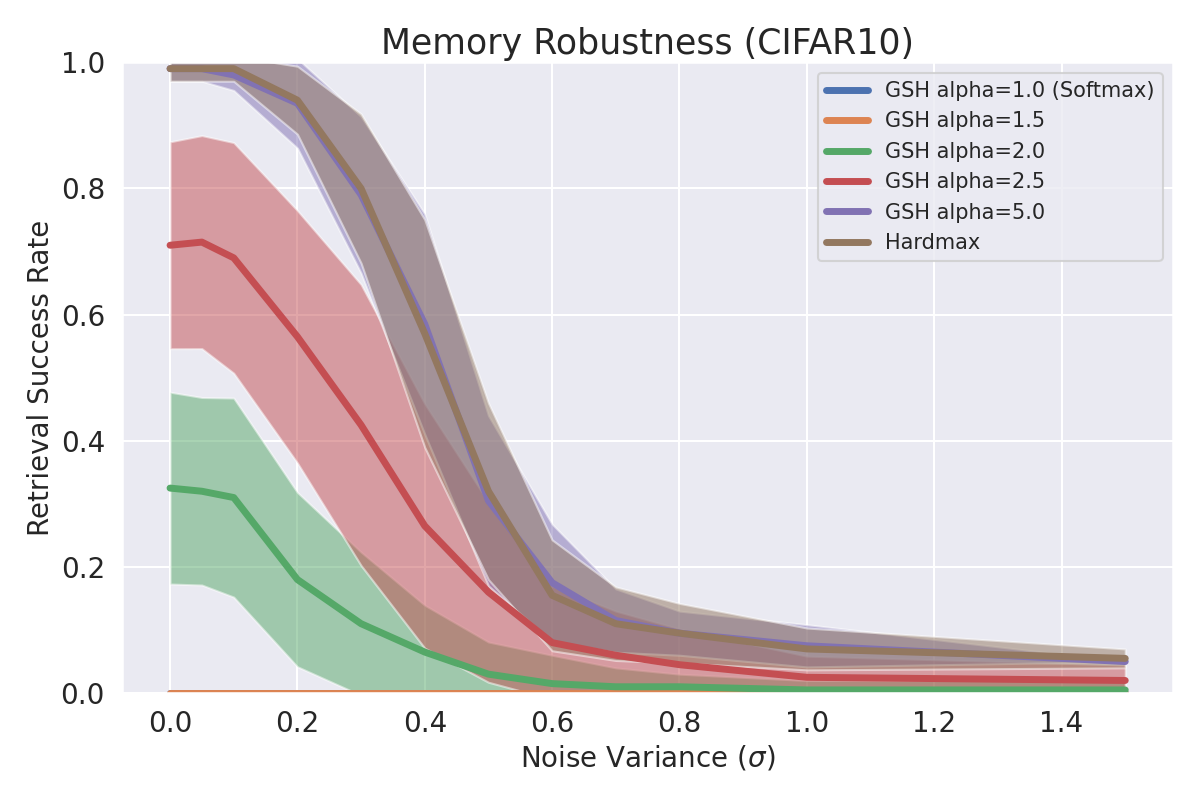}
    \vspace{-1em}
    \caption{\small
    \textbf{Left:} Memory Capacity measured by successful half-masked retrieval rates w.r.t. different values of $\alpha$ on CIFAR10. 
    \textbf{Right:} Memory Robustness measured by retrieving patterns with various noise levels on CIFAR10.
    A query pattern is considered accurately retrieved if its cosine similarity error falls below a specified threshold. 
    We set  error threshold of 20\% and $\beta=0.01$ for better visualization.
    We plot the average and variance from 10 trials. 
    We can see that using hardmax (argmax) normally gives the best retrieval result as it retrieves only the most similar pattern w.r.t. dot product distance. 
    And setting $\alpha = 5$ approximately gives the similar result while having $\alpha = 5$ keeps the overall mechanism differentiable. 
    }

    \label{fig:alpha_cifar}
\end{figure*}

\begin{figure*}[h!]
    \centering
    \includegraphics[width=0.48\textwidth]{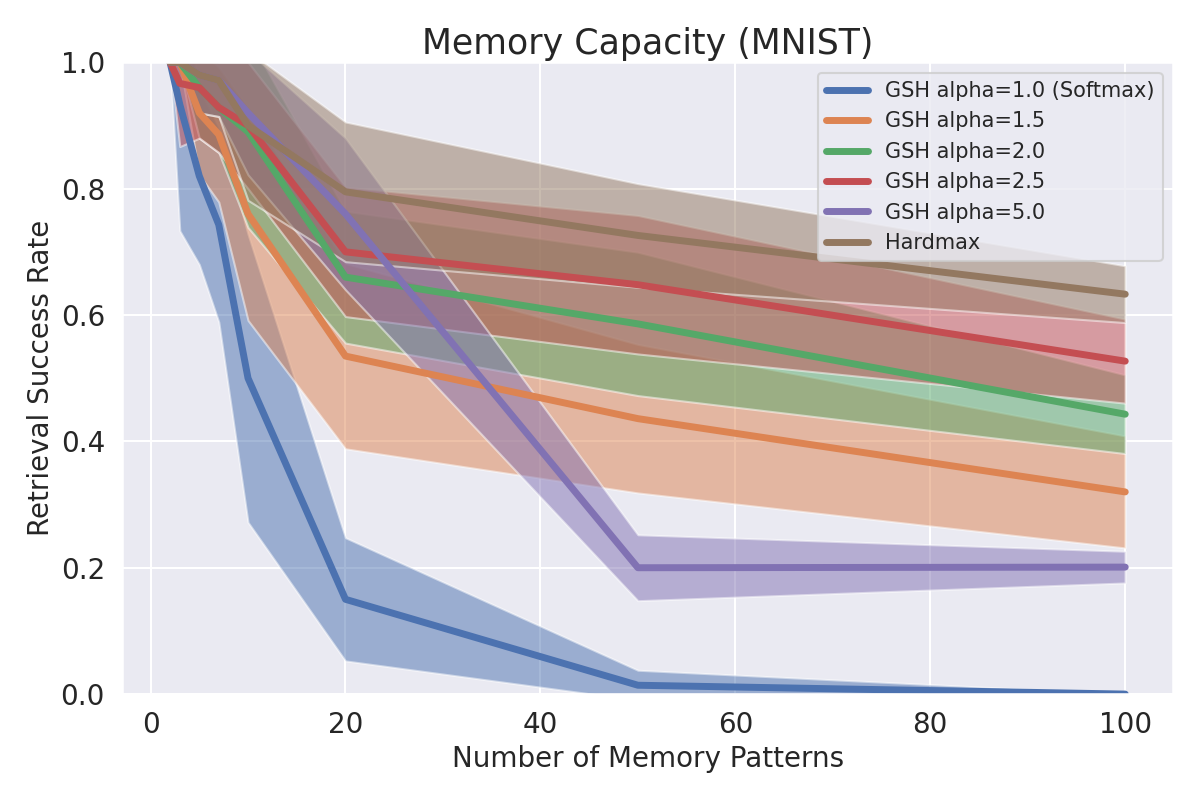}
    \hfill
    \includegraphics[width=0.48\textwidth]{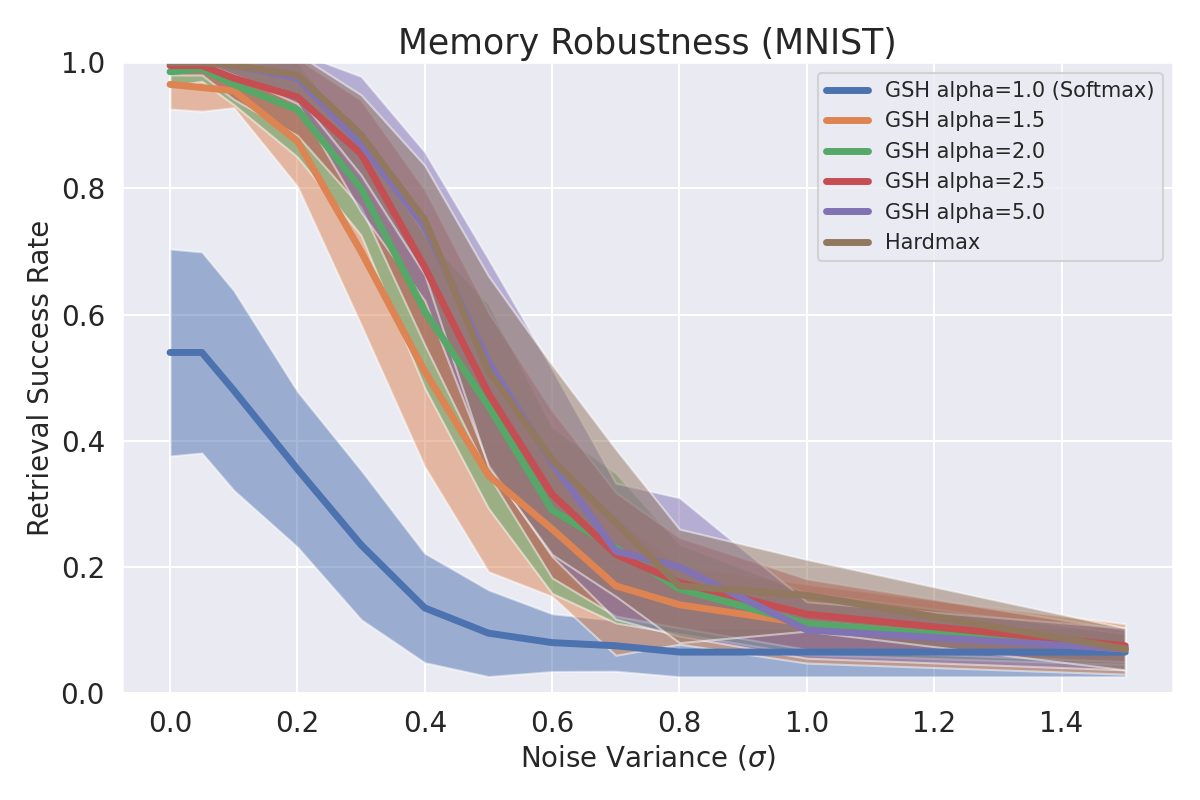}
    \vspace{-1em}
    \caption{\small
    \textbf{Left:} Memory Capacity measured by successful half-masked retrieval rates w.r.t. different values of $\alpha$ on MNIST. 
    \textbf{Right:} Memory Robustness measured by retrieving patterns with various noise levels on MNIST.
    A query pattern is considered accurately retrieved if its cosine similarity error falls below a specified threshold. 
    We set  error threshold of 20\% and $\beta=0.1$ for better visualization.
    We plot the average and variance from 10 trials. 
    We can see that using hardmax (argmax) normally gives the best retrieval result as it retrieves only the most similar pattern w.r.t. dot product distance. 
    And setting $\alpha = 5$ approximately gives the similar result while having $\alpha = 5$ keeps the overall mechanism differentiable. 
    }

    \label{fig:alpha_mnist}
\end{figure*}

\clearpage
\subsection{Memory Usage of STanHop and learnable $\alpha$}
To compare the memory and GPU usage, we benchmark STanHop with STanHop-(D) (using \textit{dense} modern Hopfield layers).
In below \cref{fig:gpu_usage} and \cref{fig:usage_gpu_utl}, we report the footprints of  from the weight and bias (wandb) system \cite{biewald2020experiment}.
The figures clearly demonstrate that the computational cost associated with learning an additional $\alpha$ for adaptive sparsity is negligible.

\begin{figure*}[h]
    \centering
    \begin{minipage}{0.48\textwidth}
        \includegraphics[width=\textwidth]{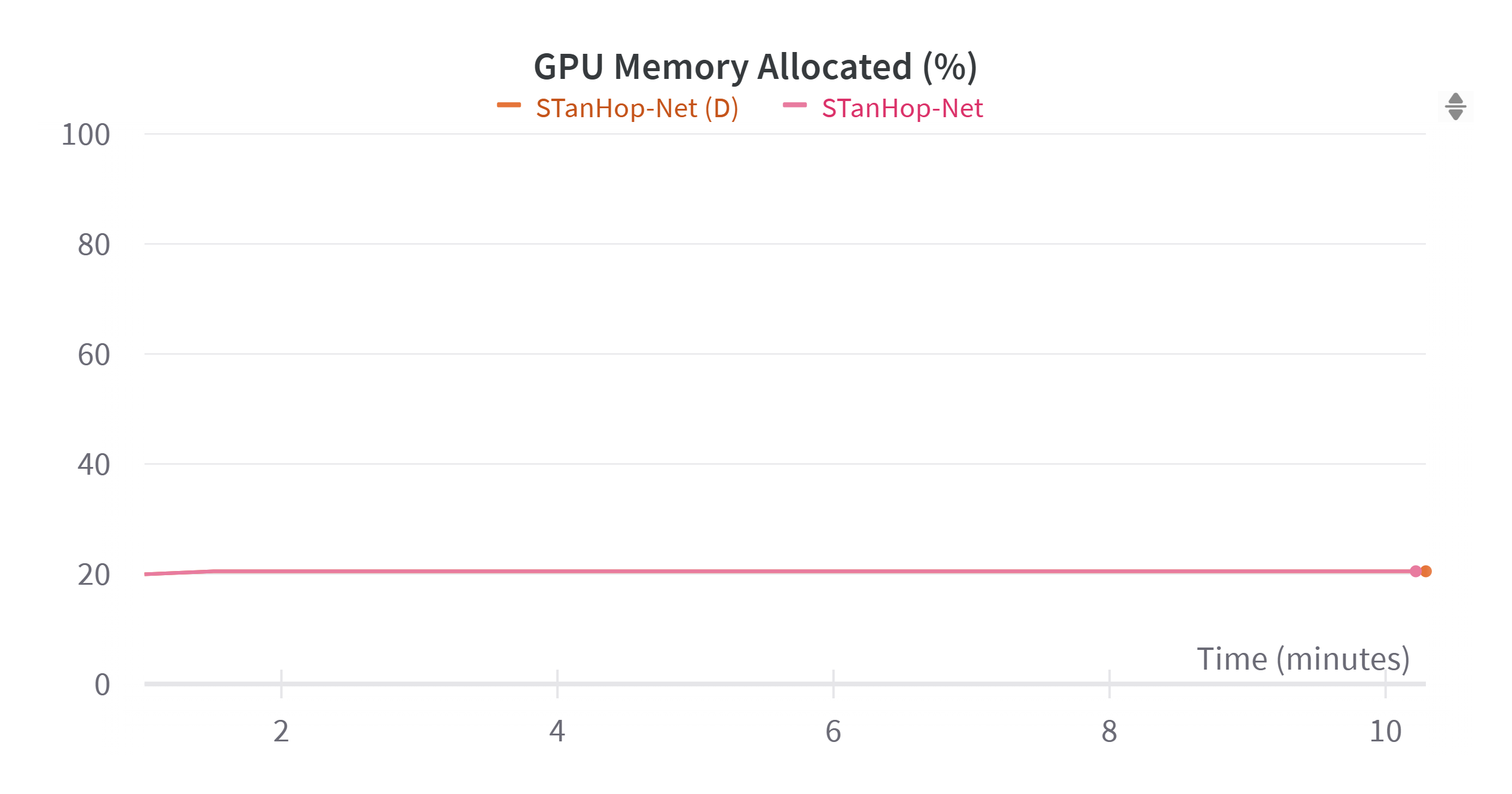}   
        \caption{The GPU memory allocation between STanHop-Net with and without learnable alpha (STanHop-Net (D)). We can see that with learnable alpha does not significantly increase reuired GPU memory.}
        \label{fig:gpu_usage}
    \end{minipage}\hfill
    \begin{minipage}{0.48\textwidth}
        \includegraphics[width=\textwidth]{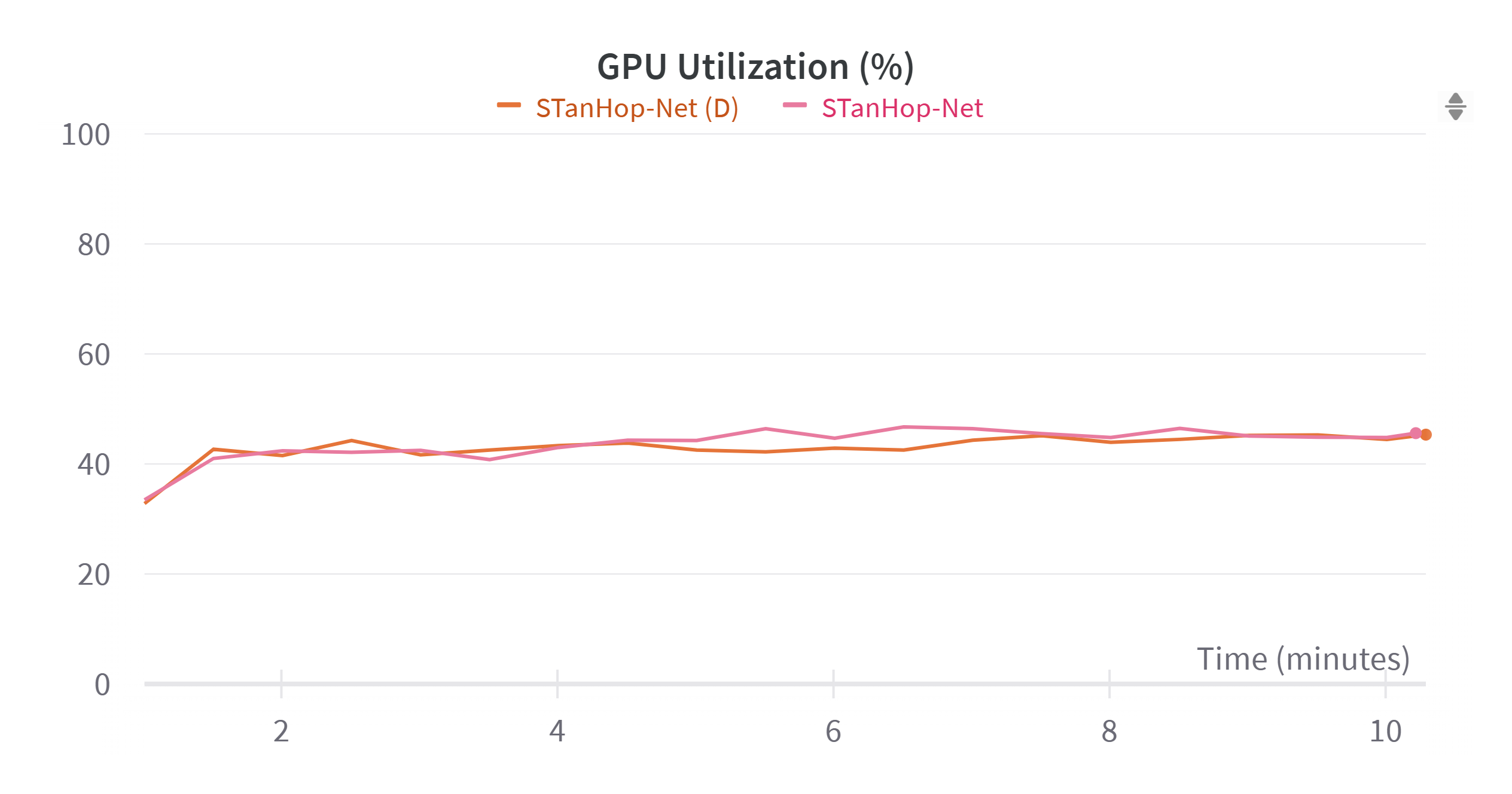}
        \caption{The percentage of gpu utilization between STanHop-Net with and without learnable alpha (STanHop-Net (D)). We can see that with learnable alpha does not significantly increase the GPU utilization.}
        \label{fig:usage_gpu_utl}
    \end{minipage}
\end{figure*}

\subsection{Time Complexity Analysis of STanHop-Net}
Here we use the same notation as introduced in the main paper.
Let $T$ be the number of input length on the time dimension, $D_{\text{hidden}}$ be the hidden dimension, $P$ be patch size, $D_{\text{out}}$ be the prediction horizon, $\text{len}_Q$ be the size of query pattern in $\mathtt{GSHPooling}$.

\begin{itemize}
    \item Patch embedding: $\mathcal{O}(D_{\text{hidden}} \times  T)  = \mathcal{O}(T)$.
    \item Temporal and Cross series GSH: $\mathcal{O}( D_{\text{hidden}}^2 \times T^2 \times P^{-2} ) = \mathcal{O}( T^2 \times P^{-2}) = \mathcal{O}(T^2)$
    \item Coarse graining: $\mathcal{O}(  D_{\text{hidden}}^2 \times T ) = \mathcal{O}(T)$
    \item GSHPooling : $\mathcal{O}( D_{\text{hidden}} \times \text{len}_Q \times  T^2 \times P^{-2}) = \mathcal{O}(\text{len}_Q \times T^2)$
    \item PlugMemory: $\mathcal{O}(T^2)$
    \item TuneMemory: $\mathcal{O}( (T+D_{\text{out}})^2 )$
\end{itemize}

Additionally, the number of parameters of STanHop-Net 0.78 million.
As a reference, with batch size of 32, input length of 168, STanHop-Net requires 2 minutes per epoch for the ETTh1 dataset.
Meanwhile, STanHop-Net (D) also requires 2 minutes per epoch under same setting.

\subsection{Hyperparameter Sensitivity Analysis}
\label{sec:stability}
We conduct experiments exploring the parameter sensitivity of  STanHop-Net on the ILI dataset. 
Our results (in \cref{tab:stability1} and \cref{tab:stability2}) show that STanHop-Net is not sensitive to hyperparameter changes.

The way that we conduct the hyperparameter sensitivity analysis is that to show the model's sensitivity to a hyperparameter $h$, we change the value of $h$ in each run and keep the rest of the hyperparameters' values as default and we record the MAE and MSE on the test set. We train and evaluate the model 3 times for 3 different values for each hyperparameter. We analyze the model's sensitivity to 7 hyperparameters respectively. We conduct all the experiments on the ILI dataset.

\begin{table*}[h]
\centering
\caption{Default Values of Hyperparameters in Sensitivity Analysis}

\begin{tabular}{lc}
\toprule
Parameter & Default Value \\
\midrule
seg\_len (patch size) & 6 \\
window\_size (coarse level) & 2 \\
e\_layers (number of encoder layers) & 3 \\
d\_model & 32 \\
d\_ff (feedforward dimension) & 64 \\
n\_heads (number of heads) & 2 \\
\bottomrule
\end{tabular}
\end{table*}

\begin{table}[h]
\caption{MAEs for each value of the hyperparameter with the rest of the hyperparameters as default. For each hyperparameter, each row's MAE score corresponds to the hyperparameter value inside the parentheses in the same order. For example, when lr is 1e-3, the MAE score is 1.313.}
\resizebox{\textwidth}{!}{ 
\begin{tabular}{cccccccccc}
\toprule
lr & seg\_len & window\_size & e\_layers & d\_model & d\_ff & n\_heads \\

(1e-3, 1e-4, 1e-5) & (6, 12, 24) & (2,4,8) & (3,4,5) & (32, 64, 128) & (64, 128, 256) & (2,4,8) \\
\midrule
1.313 & 1.313 & 1.313 & 1.313 & 1.313 & 1.313 & 1.313 \\
1.588 & 1.311 & 1.285 & 1.306 & 1.235 & 1.334 & 1.319 \\
1.673 & 1.288 & 1.368 & 1.279 & 1.372 & 1.302 & 1.312 \\ 

\bottomrule
\end{tabular}
\label{tab:stability1}
}
\end{table}

\begin{table}[h]
\caption{MSEs for each value of the hyperparameter with the rest of the hyperparameters as default. For each hyperparameter, each row's MSE score corresponds to the hyperparameter value inside the parentheses in the same order. For example, when lr is 1e-3, the MSE score is 3.948.}
\resizebox{\textwidth}{!}{ 
\begin{tabular}{cccccccccc}
\toprule
lr & seg\_len & window\_size & e\_layers & d\_model & d\_ff & n\_heads \\

(1e-3, 1e-4, 1e-5) & (6, 12, 24) & (2,4,8) & (3,4,5) & (32, 64, 128) & (64, 128, 256) & (2,4,8) \\
\midrule
3.948 & 3.948 & 3.948 & 3.948 & 3.948 & 3.948 & 3.948 \\
5.045 & 3.968 & 3.877 & 3.915 & 3.566 & 3.983 & 3.967 \\
5.580 & 3.865 & 4.267 & 3.834 & 4.078 & 3.866 & 3.998 \\ 

\bottomrule
\end{tabular}
}
\label{tab:stability2}
\end{table}

\clearpage
\subsection{Additional Multiple Instance Learning Experiments}
\label{sec:MIL}

We also evaluate the efficacy of the proposed $\mathtt{GSH}$ layer on Multiple Instance Learning (MIL) tasks.
In essence, MIL is a type of supervised learning whose training data are divided into bags and labeling
individual data points is difficult or impractical, but bag-level labels are available \cite{ilse2018attention}.
We follow \cite{ramsauer2020hopfield, hu2023SparseHopfield} to conduct the multiple instance learning experiments on MNIST.

\paragraph{Model.}
We first flatten each image and use a fully connected layer to project each image to the embedding space.
Then, we perform $\mathtt{GSHPooling}$ and use linear projection for prediction.
\paragraph{Hyperparameters.}
For hyperparameters, we use hidden dimension of $256$, number of head as $4$, dropout as $0.3$, training epoch as $100$, optimizer as AdamW, initial learning rate as $1e-4$, and we also use the cosine annealing learning rate decay.
\paragraph{Baselines.}
We benchmark the Generalized Sparse modern Hopfield model (GSH) with the Sparse \cite{hu2023SparseHopfield} and Dense \cite{ramsauer2020hopfield} modern Hopfield models.
\paragraph{Dataset.}
For the training dataset, we randomly sample 1000 positive and 1000 negative bags for each bag size.
For test data, we randomly sample 250 positive and 250 negative bags for each bag size.
We set the positive signal to the images of digit $9$, and the rest as negative signals.
We vary the bag size and report the accuracy and loss curves of 10 runs on both training and test data.

\paragraph{Results.}
Our results (shown in the figures below) demonstrate that $\mathtt{GSH}$ converges faster than the baselines in most settings, as it can adapt to varying levels of data sparsity. This is consistent with our theoretical findings in \cref{thm:eps_sparse_dense}, which state that $\mathtt{GSH}$ achieves higher accuracy and faster fixed-point convergence compared to the dense model.

\begin{figure*}[h!]
    \centering
    \includegraphics[width=0.24\textwidth]{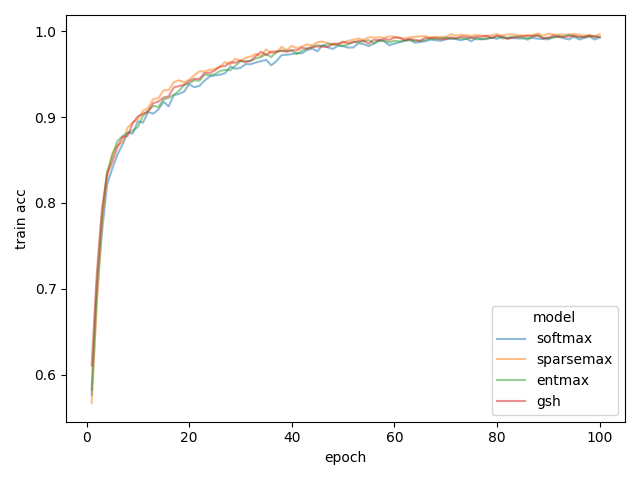}
    \hfill
    \includegraphics[width=0.24\textwidth]{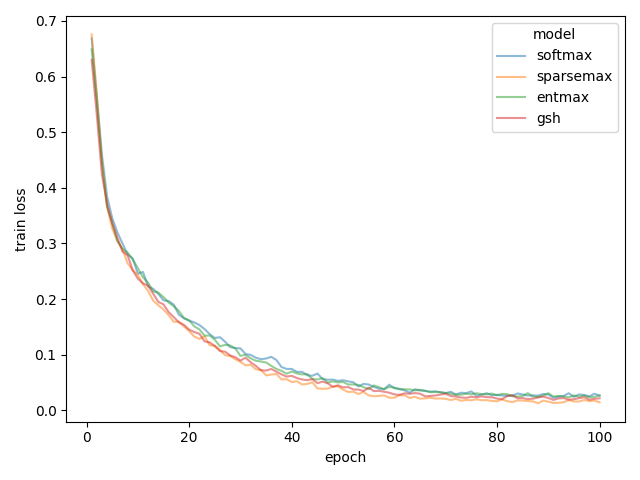}
    \hfill
    \includegraphics[width=0.24\textwidth]{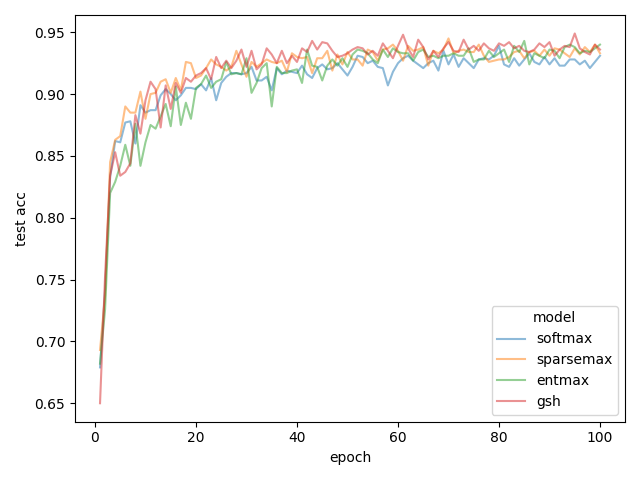}
    \hfill
    \includegraphics[width=0.24\textwidth]{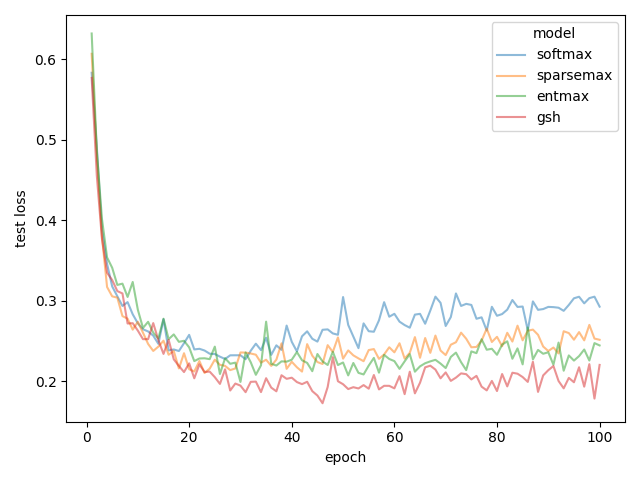}
    \vspace{-1em}
    \caption{\small The MIL experiment with bag size 5.
    From left to right:
    (1) Training data accuracy curve
    (2) Training data loss curve
    (3) Test data accuracy curve
    (4) Test data accuracy curve
    }
    \label{fig:gsh_mil_5}
\end{figure*}

\begin{figure*}[h!]
    \centering
    \includegraphics[width=0.24\textwidth]{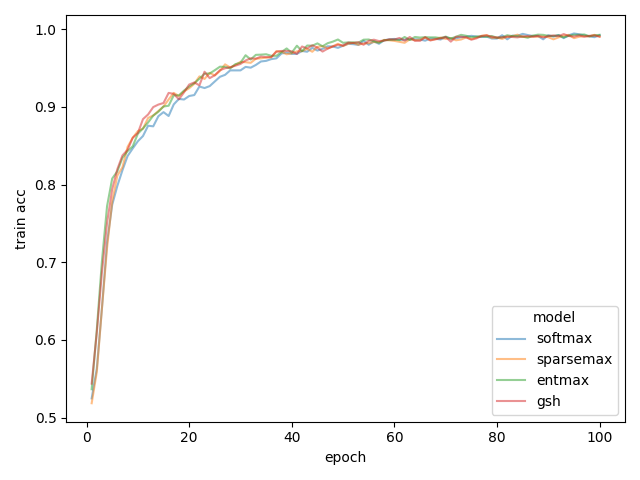}
    \hfill
    \includegraphics[width=0.24\textwidth]{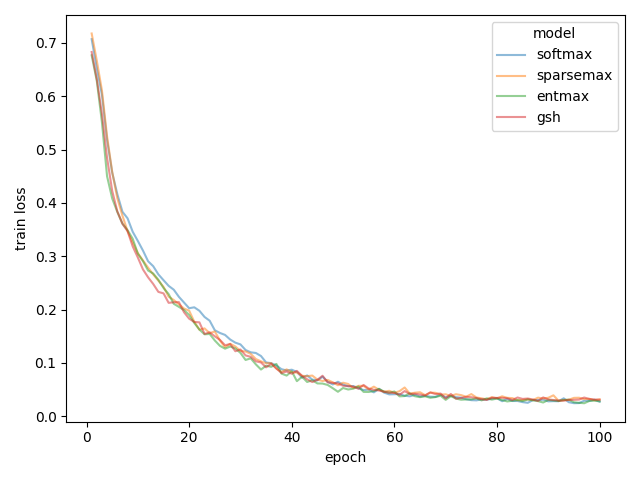}
    \hfill
    \includegraphics[width=0.24\textwidth]{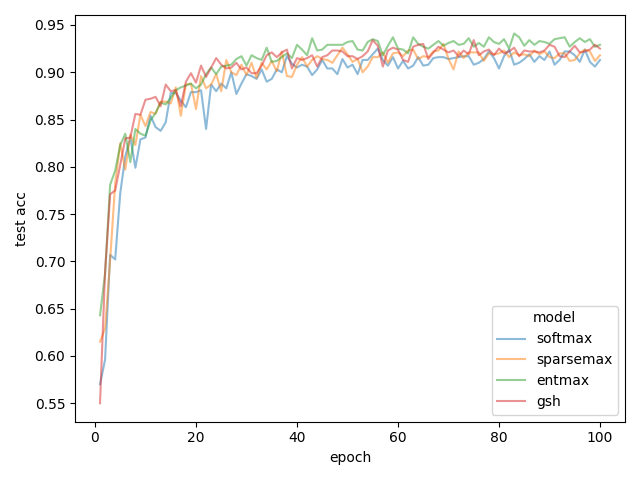}
    \hfill
    \includegraphics[width=0.24\textwidth]{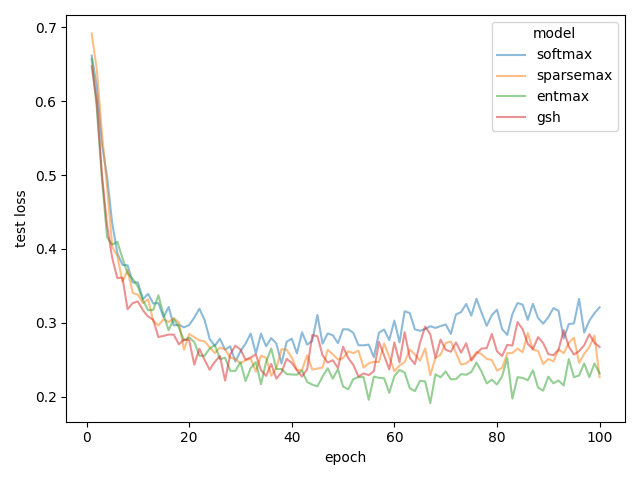}
    \vspace{-1em}
    \caption{\small The MIL experiment with bag size 10.
    From left to right:
    (1) Training data accuracy curve
    (2) Training data loss curve
    (3) Test data accuracy curve
    (4) Test data accuracy curve
    }
    \label{fig:gsh_mil_10}
\end{figure*}

\begin{figure*}[h!]
    \centering
    \includegraphics[width=0.24\textwidth]{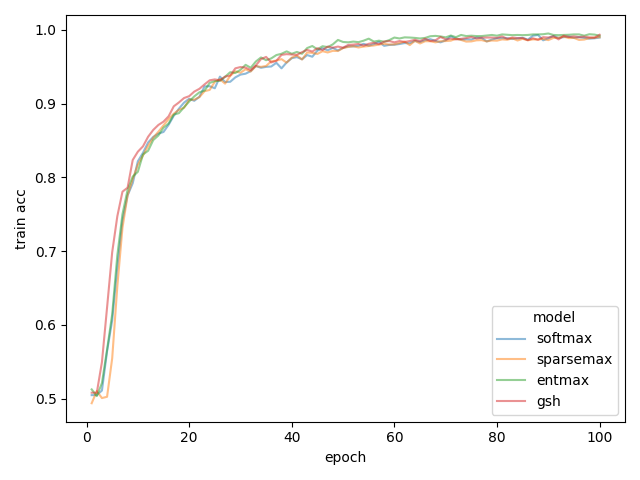}
    \hfill
    \includegraphics[width=0.24\textwidth]{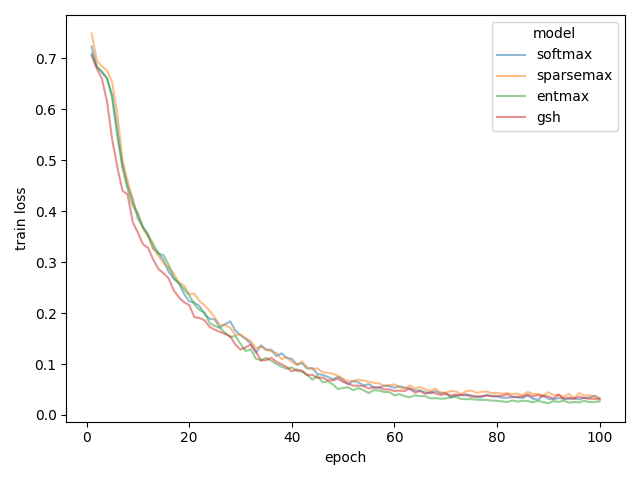}
    \hfill
    \includegraphics[width=0.24\textwidth]{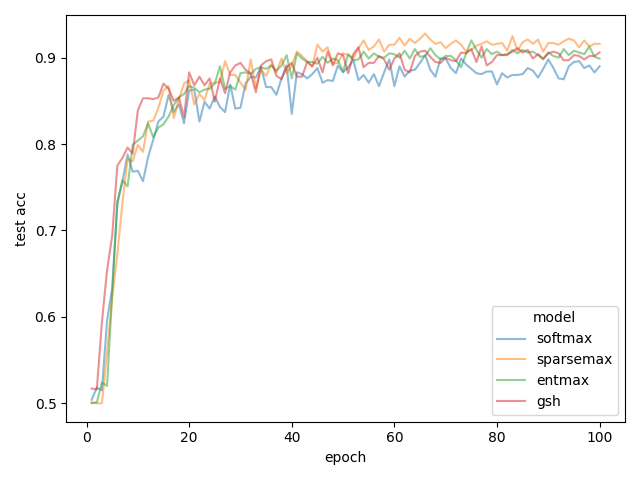}
    \hfill
    \includegraphics[width=0.24\textwidth]{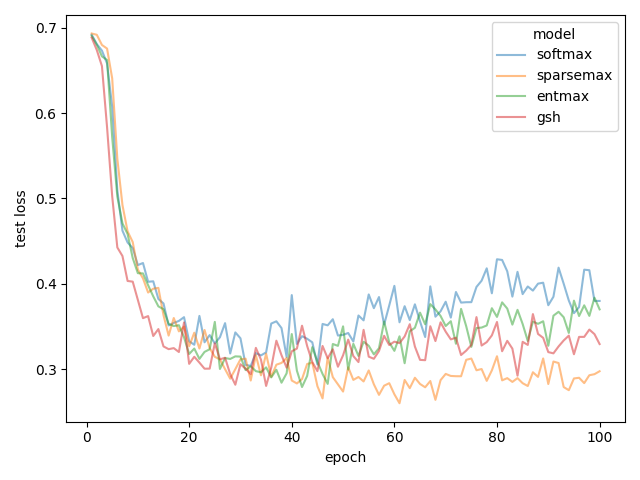}
    \vspace{-1em}
    \caption{\small The MIL experiment with bag size 20.
    From left to right:
    (1) Training data accuracy curve
    (2) Training data loss curve
    (3) Test data accuracy curve
    (4) Test data accuracy curve
    }
    \label{fig:gsh_mil_20}
\end{figure*}

\begin{figure*}[h!]
    \centering
    \includegraphics[width=0.24\textwidth]{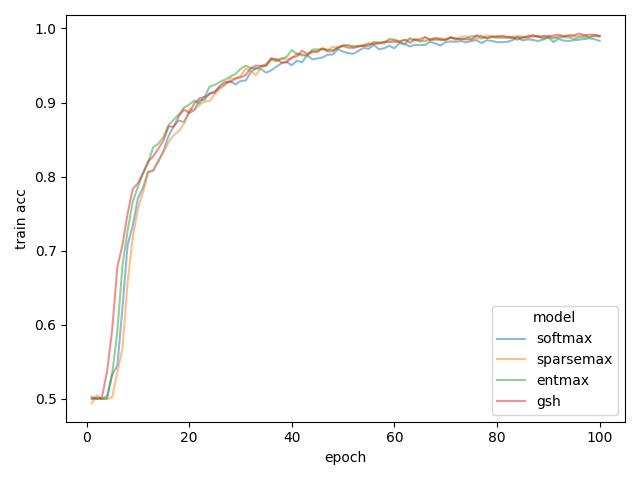}
    \hfill
    \includegraphics[width=0.24\textwidth]{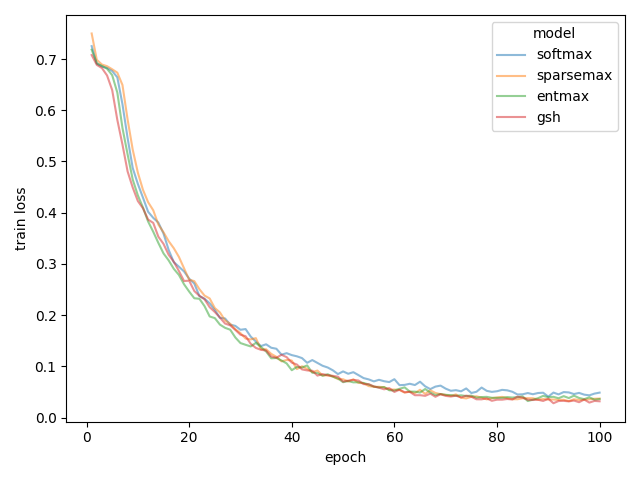}
    \hfill
    \includegraphics[width=0.24\textwidth]{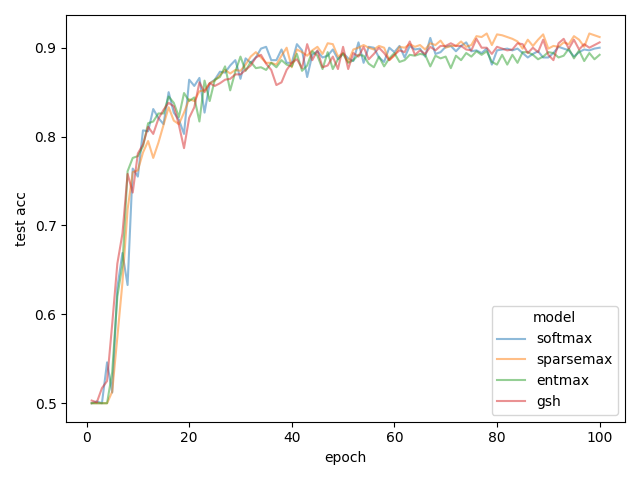}
    \hfill
    \includegraphics[width=0.24\textwidth]{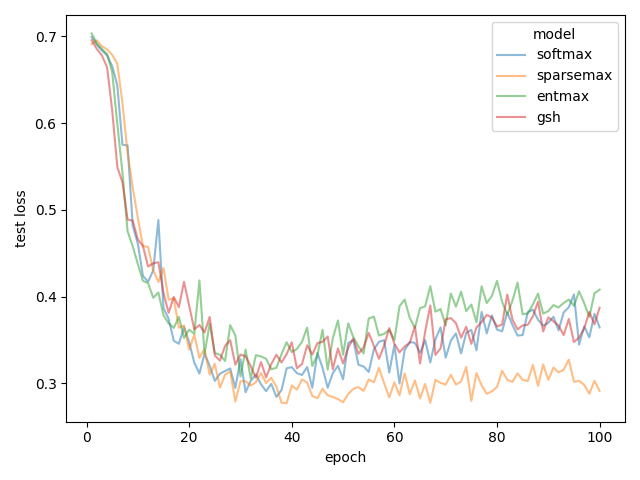}
    \vspace{-1em}
    \caption{\small The MIL experiment with bag size 30.
    From left to right:
    (1) Training data accuracy curve
    (2) Training data loss curve
    (3) Test data accuracy curve
    (4) Test data accuracy curve
    }
    \label{fig:gsh_mil_30}
\end{figure*}

\begin{figure*}[h!]
    \centering
    \includegraphics[width=0.24\textwidth]{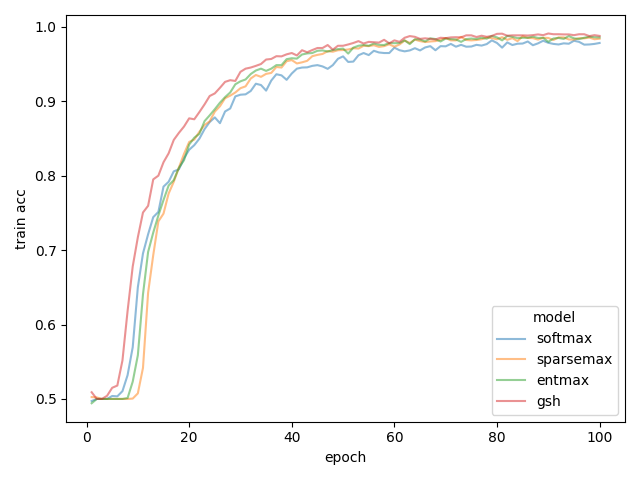}
    \hfill
    \includegraphics[width=0.24\textwidth]{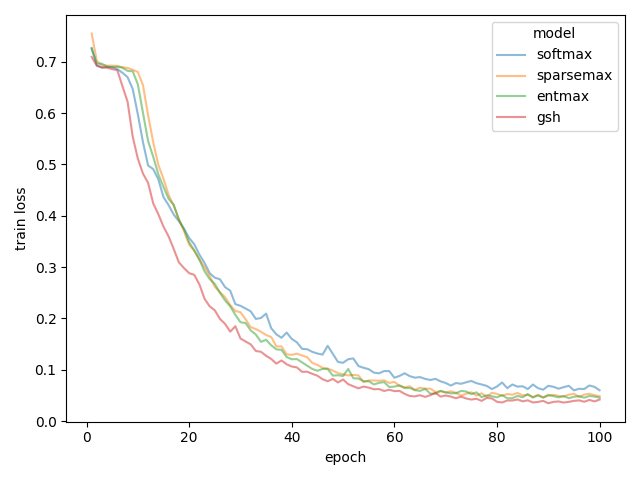}
    \hfill
    \includegraphics[width=0.24\textwidth]{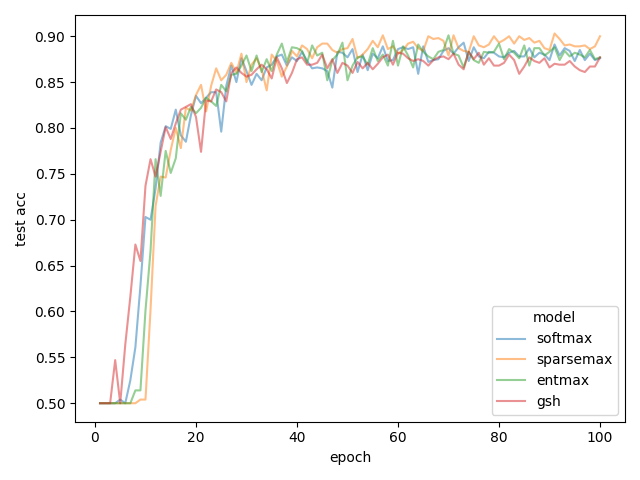}
    \hfill
    \includegraphics[width=0.24\textwidth]{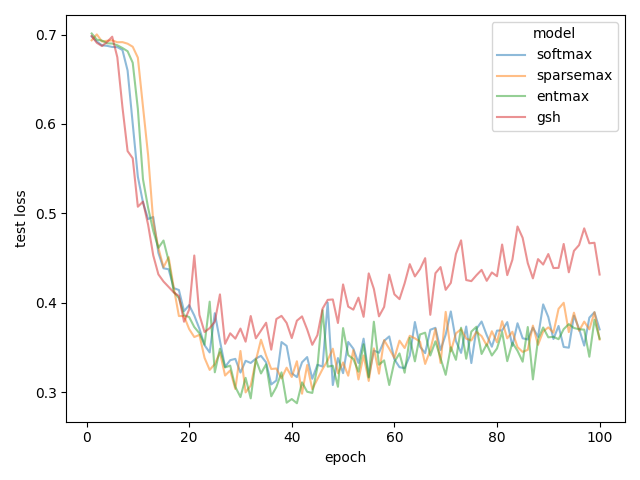}
    \vspace{-1em}
    \caption{\small The MIL experiment with bag size 50.
    From left to right:
    (1) Training data accuracy curve
    (2) Training data loss curve
    (3) Test data accuracy curve
    (4) Test data accuracy curve
    }
    \label{fig:gsh_mil_50}
\end{figure*}

\begin{figure*}[h!]
    \centering
    \includegraphics[width=0.24\textwidth]{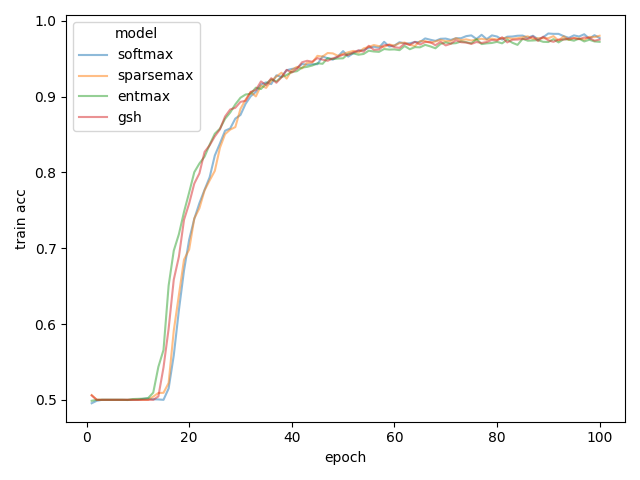}
    \hfill
    \includegraphics[width=0.24\textwidth]{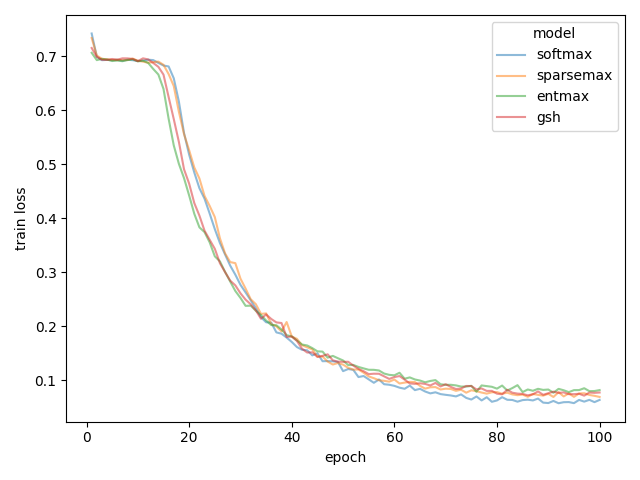}
    \hfill
    \includegraphics[width=0.24\textwidth]{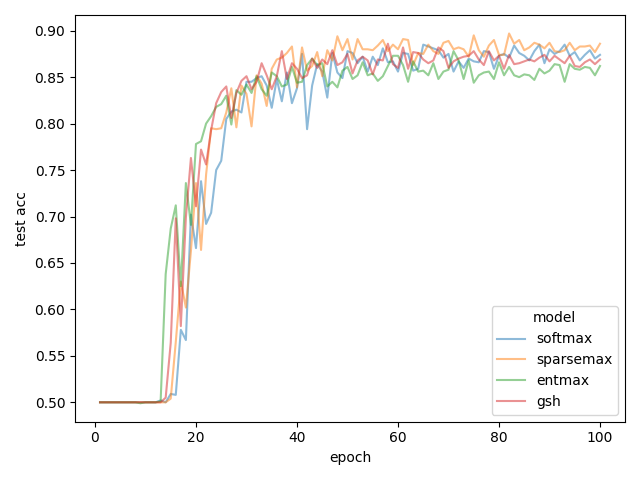}
    \hfill
    \includegraphics[width=0.24\textwidth]{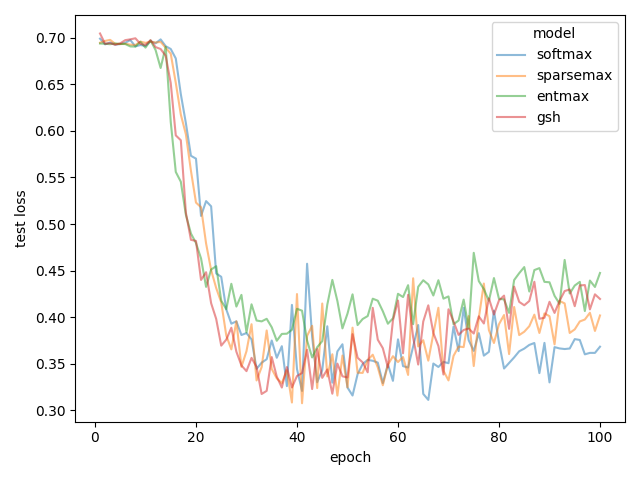}
    \vspace{-1em}
    \caption{\small The MIL experiment with bag size 100.
    From left to right:
    (1) Training data accuracy curve
    (2) Training data loss curve
    (3) Test data accuracy curve
    (4) Test data accuracy curve
    }
    \label{fig:gsh_mil_100}
\end{figure*}

\clearpage

\subsection{STanHop-Net Outperforms DLinear in Settings Dominated by Multivariate Correlations: A Case Study}

The performance of STanHop-Net in the main text, as presented in \cref{table:result}, does not show a clear superiority over DLinear \cite{zeng2023transformers}. 
We attribute this to the nature of the common benchmark datasets in \cref{table:result}, which are not dominated by multivariate correlations.

To verify our conjecture, we employ a strongly correlated multivariate time series dataset as a test bed, representing a practical scenario where multivariate correlations are the predominant source of predictive information in input features. 
In such a scenario, following the same setting in \cref{sec:MTS}, our experiments  show that STanHop-Net consistently outperforms DLinear.

Specifically, we follow the experimental settings outlined in \cref{sec:MTS}, but with a specific focus on cases involving \textit{small} lookback windows. 
This emphasis aims to reduce autoregressive correlation in data with smaller lookback windows, thereby increasing the dominance of multivariate correlations.

\paragraph{Dataset.}
We evaluate our model on the synthetic dataset\footnote{\url{https://www.dropbox.com/scl/fo/cg49nid4ogmd6f6sdbwqz/h?rlkey=jydf3ay1fzeotl5ws2s0cjnlc&dl=0}} generated in the ICML2023 Feature Programming paper \cite{reneau2023feature}.
Feature programming is an automated, programmable method for feature engineering.
It produces a large number of predictive features from any given input time series. 
These generated features, termed \textit{extended features}, are by construction highly correlated. 
The synthetic dataset, containing 44 extended features derived from the taxi dataset (see \cite[Section~D.1]{reneau2023feature} for the generation details), is thereby a strongly correlated multivariate time series dataset.
Special thanks to the authors of \cite{reneau2023feature} for sharing the dataset.

\paragraph{Baseline.}
We mainly compare our performance with DLinear \cite{zeng2023transformers} as it showed comparable performance in \cref{table:result}.

\paragraph{Setting.}
For both STaHop-Net and DLinear, we use the same hyperparameter setup as we used in the ETTh1 dataset.
We also conduct two ablation studies with varying lookback windows.
We report the mean MAE, MSE and R2 score over 5 runs.

\paragraph{Results.}
Our results (\cref{tab:stock}) demonstrate that STanHop-Net consistently outperforms DLinear when multivariate correlations dominate the predictive information in input features. 
Importantly, our ablation studies show that increasing the lookback window size, which reduces the dominance of multivariate correlations, results in DLinear's performance becoming comparable to, rather than being consistently outperformed by, STanHop-Net.
This explains why DLinear exhibits comparable performance to STanHop-Net in \cref{table:result}, when the datasets are not dominated by multivariate correlations.

\begin{table*}[h]
\centering
\caption{Comparison between DLinear and StanHop-Net on the synthetic dataset generated in \cite{reneau2023feature}.
This dataset is  by construction a strongly correlated multivariate time series dataset.
We report the mean MAE, MSE and R2 score over 5 runs.
The $A \rightarrow B$ denotes the input horizon $A$ and prediction horizon $B$.
\textbf{CSR (Cross-Sectional Regression):} 
Focuses on using single time step information from multivariate time series for predictions. 
\textbf{Ablation1:} With a prediction horizon of 1, the lookback window is increasing, thereby the dominance of multivariate correlations is decreasing. 
\textbf{Ablation2:} With a prediction horizon of 2, the lookback window is increasing, thereby the dominance of multivariate correlations is decreasing.
Our results aligns with our expectations:
STanHop-Net uniformly beats DLinear \cite{zeng2023transformers} in the Cross-Sectional Regression (CSR) setting.
Importantly, our ablation studies show that increasing the lookback window size, which reduces the dominance of multivariate correlations, results in DLinear's performance becoming comparable to, rather than being consistently outperformed by, STanHop-Net.
This explains why DLinear exhibits comparable performance to STanHop-Net in \cref{table:result}, when the datasets are not dominated by multivariate correlations.
}
\begin{tabular}{ccccccccc}
\toprule
 \multicolumn{2}{c}{\small lookback\_window $\to$ pred\_horizon} 
 & \multicolumn{3}{c}{DLinear}  
 & \multicolumn{3}{c}{STanHop-Net}  \\
\midrule
& & MSE & MAE & $R^2$ & MSE & MAE & $R^2$ \\
\midrule
\multirow{5}{1em}{\rot{CSR}} & 
$1 \rightarrow 1$ & 0.896 & 0.615 & 0.256 & \textbf{0.329} & \textbf{0.375} &\textbf{ 0.633} \\
& $1 \rightarrow 2$ & 1.193 & 0.794 & 0.001 & \textbf{0.417} & \textbf{0.428} & \textbf{0.552} \\ 
& $1 \rightarrow 4$ & 1.211 & 0.806 & -0.002 & \textbf{0.592} & \textbf{0.522} & \textbf{0.383} \\ 
& $1 \rightarrow 8$ & 1.333 & 0.868 & -0.100 & \textbf{0.812} & \textbf{0.636} & \textbf{0.182} \\ 
& $1 \rightarrow 16$ & 1.305 & 0.846 & -0.069 & \textbf{1.028} & \textbf{0.734} & \textbf{-0.058} \\
\midrule
\multirow{4}{1em}{\rot{Ablation1}} 
&  $2 \rightarrow 1$ & 0.514 & 0.504 & 0.573 & \textbf{0.328} & \textbf{0.366} & \textbf{0.710} \\
& $4 \rightarrow 1$ & 0.373 & 0.417 & 0.690 & \textbf{0.328} & \textbf{0.364} & \textbf{0.712} \\
& $8 \rightarrow 1$ & 0.328 & 0.380 & \textbf{0.727} & \textbf{0.327} & \textbf{0.367} & 0.715 \\
& $16 \rightarrow 1$ & \textbf{0.319} & 0.372 & \textbf{0.736}& 0.323 & \textbf{0.361} & 0.717 \\
\midrule
\multirow{4}{1em}{\rot{Ablation2}} 
& $2 \rightarrow 2$ & 0.771 & 0.632 & 0.359 & \textbf{0.424} & \textbf{0.425} & \textbf{0.630} \\
& $4 \rightarrow 2$ & 0.423 & 0.439 & \textbf{0.645} & \textbf{0.410} & \textbf{0.415} & 0.643 \\
& $8 \rightarrow 2$ & 0.647 & 0.441 & 0.646 & \textbf{0.402} & \textbf{0.412} & \textbf{0.655} \\
& $16 \rightarrow 2$ & \textbf{0.419} & 0.435 & \textbf{0.652} & 0.435 & \textbf{0.433} & 0.626 \\
\bottomrule
\end{tabular}
\label{tab:stock}
\end{table*}

\clearpage
\section{Experiment Details}\label{sec:exp_detail}
Here we present the  details of experiments in the main text.

\subsection{Experiment Details of Multivariate Time Series Predictions without Memory Enhancements}

\paragraph{Datasets.}
These datasets, commonly benchmarked in literature \cite{zhang2022crossformer,wu2021autoformer,zhou2021informer}.

\begin{itemize}
    \item \textbf{ETT (Electricity Transformer Temperature) \cite{zhou2021informer}:}
    ETT records 2 years of data from two counties in China. 
    We use two sub-datasets: ETTh1 (hourly) and ETTm1 (every 15 minutes). 
    Each entry includes the ``oil temperature" target and six power load features.
    
    \item \textbf{ECL (Electricity Consuming Load):}
    ECL records electricity consumption (Kwh) for 321 customers. 
    Our version, sourced from \cite{zhou2021informer}, covers hourly consumption over 2 years, targeting ``MT 320''.
    
    \item \textbf{WTH (Weather):}
    WTH records climatological data from approximately 1,600 U.S. sites between 2010 and 2013, measured hourly. 
    Entries include the ``wet bulb" target and 11 climate features.
    
    \item 
    \textbf{ILI (Influenza-Like Illness):}
    ILI records weekly data on influenza-like illness (ILI) patients from the U.S. Centers for Disease Control and Prevention between 2002 and 2021. 
    It depicts the ILI patient ratio against total patient count.
    
    \item \textbf{Traffic:}
    Traffic records hourly road occupancy rates from the California Department of Transportation, sourced from sensors on San Francisco Bay area freeways.
\end{itemize}

\begin{table*}[h]
\centering
\caption{Dataset Sources}
\resizebox{\textwidth}{!}{   
\begin{tabular}{ll}
\toprule
Dataset & URL 
\\
\midrule
ETTh1 \& ETTm1 & \url{https://github.com/zhouhaoyi/ETDataset} 
\\
ECL & \url{https://archive.ics.uci.edu/dataset/321/electricityloaddiagrams20112014} 
\\
WTH & \url{https://www.ncei.noaa.gov/data/local-climatological-data/} 
\\
ILI & \url{https://archive.ics.uci.edu/ml/datasets/seismic-bumps} 
\\
Traffic & \url{https://www.kaggle.com/shrutimechlearn/churn-modelling} 
\\
\bottomrule
\end{tabular}
}
\label{tab:link}
\end{table*}

\paragraph{Training.}
We use Adam optimizer to minimize the MSE Loss. The coefficients of Adam optimizer, betas, are set to (0.9, 0.999).
We continue training till there are $\mathtt{Patience=3}$ consecutive epochs where validation loss doesn't decrease or we reach 20 epochs. Finally, we evaluate our model on test set with the best checkpoint on validation set.

\paragraph{Hyperparameters.}
For hyperparameter search,
for each dataset, we conduct hyperparameter optimization using the ``Sweep'' feature of Weights and Biases \cite{biewald2020experiment}, with 200 iterations of random search for each setting to identify the optimal model configuration. 
The search space for all hyperparameters are reported in \cref{tab:bdhyper}.

\begin{table*}[h]
\centering
\caption{STanHop-Net hyperparameter space.}

\begin{tabular}{lccccccccc}
\toprule
 & Parameter & Distribution \\
\midrule
& Patch size $P$ & [6, 12, 24] \\
& FeedForward dimension & [64, 128, 256] \\
& Number of encoder layer & [1, 2, 3] \\
& Number of pooling vectors & [10] \\
& Number of heads & [4, 8] \\
& Number of stacked STanHop blocks & [1] \\
& Dropout & [0.1, 0.2, 0.3] \\
& Learning rate & [5e-4, 1e-4, 1e-5, 1e-3] \\
& Input length on ILI & [24, 36, 48, 60] \\
& Input length on ETTm1 & [24, 48, 96, 192, 288, 672] \\
& Input length on other dataset & [24, 48, 96, 168, 336, 720]  \\
& Course level & [2, 4] \\ 
& Weight decay & [0.0, 0.0005, 0.001] \\
\bottomrule
\end{tabular}
\label{tab:bdhyper}
\end{table*}

\clearpage
\subsection{External Memory Plugin Experiment Details}
\label{sec:ext-mem}

The hyperparameter of the external memory plugin experiment can be found in \cref{tab:bdhyper}.
For ILI\_OT, we set the input length as 24, feed forward dimension as 32 and hidden dimension as 64.
For prediction horizon 60, we set the input length as 48, feed-forward dimension as 128 and hidden dimension as 256.
For ETTh1, we use the same hyperparameter set found via random search in \cref{table:result}.

For the ``bad" external memory set intervals, we pick 40 and 200 for ILI\_OT and ETTh1, which represents 40 timesteps (weeks) earlier and 200 timesteps (hours) earlier.
For ILI dataset, we set the memory set size as 15 and for ETTh1, we set it as 20.

For \textbf{Case 3} (ETTh1), we select construct the external memory pattern with interval 168 timesteps earlier (equivalent to 1 week).
For \textbf{Case 4} (ETTm1), we select construct the external memory pattern with interval 672 timesteps earlier (equivalent to 1 week).

\begin{figure*}
    \centering
    \includegraphics[width=\textwidth]{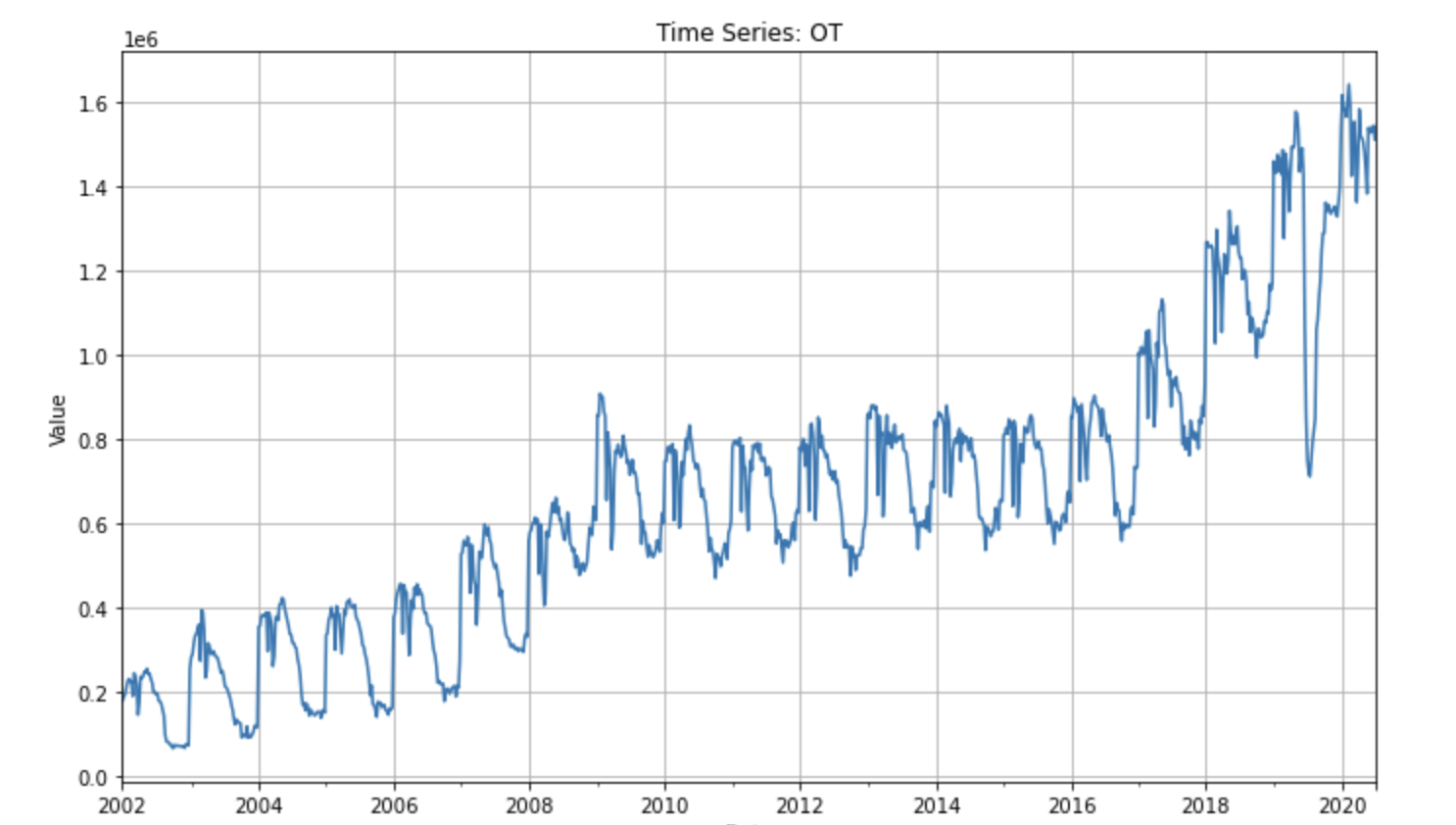}
    \caption{The visualization of ILI dataset ``OT" variate.}
    \label{fig:ILI_OT}
\end{figure*}

\section{Additional Theoretical Background}

\begin{remark}
\label{remark:closeform}
\citet{peters2019sparse} provide a closed-form expression for $\alphaentmax$ as
\bea
\alphaentmax(\bz)=[(\alpha-1)\bz-\tau(\bz)]^{\frac{1}{\alpha-1}},
\eea
where we denote $[t]_{+}\coloneqq \max\{t,0\}$, and $\tau$ is the threshold function $\R^M \to \R$ such that $\sum^M_{\mu=1}[(\alpha-1)\bz-\tau(\bz)]^{\frac{1}{\alpha-1}}=1$ satisfies the normalization condition of probability distribution.
\end{remark}

\end{document}